\newif\ifnotes
\newcommand{\rnote}[1]{ [\textcolor{magenta}{Raef: #1}] }
\newcommand{\mnote}[1]{ [\textcolor{red}{Mehryar: #1}] }
\newcommand{\rnote}[1]{}
\newcommand{\mnote}[1]{}
\newcommand{\thnote}[1]{}
\DeclareMathAlphabet\mathbb{U}{msb}{m}{n}
\def\Rset{\mathbb{R}}
\let\P\undefined
\DeclareMathOperator*{\P}{\mathbb{P}}
\DeclareMathOperator*{\E}{\mathbb E}
\DeclareMathOperator*{\argmin}{argmin}
\DeclarePairedDelimiter{\abs}{\lvert}{\rvert} 
\DeclarePairedDelimiter{\bracket}{[}{]}
\DeclarePairedDelimiter{\curl}{\{}{\}}
\DeclarePairedDelimiter{\norm}{\|}{\|}
\DeclarePairedDelimiter{\paren}{(}{)}
\DeclarePairedDelimiter{\tri}{\langle}{\rangle}
\newcommand{\cL}{\mathcal{L}}
\newcommand{\cM}{\mathcal{M}}
\newcommand{\cN}{\mathcal{N}}
\newcommand{\cO}{\mathcal{O}}
\newcommand{\sD}{{\mathscr D}}
\newcommand{\sG}{{\mathscr G}}
\newcommand{\sH}{{\mathscr H}}
\newcommand{\sQ}{{\mathscr Q}}
\newcommand{\sU}{{\mathscr U}}
\newcommand{\sV}{{\mathscr V}}
\newcommand{\sW}{{\mathscr W}}
\newcommand{\sX}{{\mathscr X}}
\newcommand{\sY}{{\mathscr Y}}
\newcommand{\bz}{{\mathbf z}}
\newcommand{\bg}{{\mathbf g}}
\newcommand{\sfp}{{\mathsf p}}
\newcommand{\sfq}{{\mathsf q}}
\newcommand{\sfu}{{\mathsf u}}
\newcommand{\sfv}{{\mathsf v}}
\newcommand{\sfB}{{\mathsf B}}
\newcommand{\sfF}{{\mathsf F}}
\newcommand{\sfJ}{{\mathsf J}}
\newcommand{\sfK}{{\mathsf K}}
\newcommand{\dis}{\mathrm{dis}}
\newcommand{\proj}{\mathrm{Proj}}
\newcommand{\Rad}{\mathfrak R}
\newcommand{\bsigma}{{\boldsymbol \sigma}}
\newcommand{\h}{\hat}
\newcommand{\wh}{\widehat}
\newcommand{\ov}{\overline}
\newcommand{\wt}{\widetilde}
\newcommand{\e}{\varepsilon}
\newcommand{\pub}{\mathsf{Pub}}
\newcommand{\prv}{\mathsf{Priv}}
\newcommand{\cnvx}{\mathsf{CnvxAdap}}
\newcommand{\pcnvx}{\mathsf{CnvxAdap}}
\newcommand{\ncnvx}{\mathsf{NCnvxAdap}}
\newcommand{\pncnvx}{\mathsf{NCnvxAdap}}
\newcommand{\ex}[2]{\underset{#1}{\mathbb{E}}\left[ #2 \right]}
\newcommand{\lap}{\mathsf{Lap}}
\newcommand{\sql}{\ell_{\mathsf{sq}}}
\newcommand{\dprv}{\h d_{\textsc{DP}}}
\newcommand{\mathset}[2][]{#1 \{ #2 #1 \} }
\newcommand{\ignore}[1]{}
\newcommand{\nlabel}{%
  \addtocounter{ALC@line}{-1} \refstepcounter{ALC@line}
  \label
}
\title[Differentially Private Domain Adaptation 
with Theoretical Guarantees]{Differentially Private Domain Adaptation \\
with Theoretical Guarantees}
\begin{document}

\maketitle

\begin{abstract}%

  In many applications, the labeled data at the learner's disposal is
  subject to privacy constraints and is relatively limited. To derive
  a more accurate predictor for the target domain, it is often
  beneficial to leverage publicly available labeled data from an
  alternative domain, somewhat close to the target domain. This is the
  modern problem of supervised domain adaptation from a public source
  to a private target domain.
  We present two $(\e, \delta)$-differentially private adaptation
  algorithms for supervised adaptation, for which we make use of a
  general optimization problem, recently shown to benefit from
  favorable theoretical learning guarantees. Our first algorithm is
  designed for regression with linear predictors and shown to solve a
  convex optimization problem. Our second algorithm is a more general
  solution for loss functions that may be non-convex but Lipschitz and
  smooth.
  While our main objective is a theoretical analysis, we also report the
results of several experiments first demonstrating that the
non-private versions of our algorithms outperform adaptation baselines
and next showing that, for larger values of the target sample size or
$\e$, the performance of our private algorithms remains close to that
of the non-private formulation.%
  
\end{abstract}

\section{Introduction}
\label{sec:intro}
In many applications, the labeled data at hand is
not sufficient to train an accurate model for a target
domain. Instead, a large amount of labeled data may be available from
another domain, a \emph{source domain}, somewhat close to the target
domain. The problem then consists of using the labeled data available
from both the source and target domains to come up with a more
accurate predictor for the target domain. This is the setting of
\emph{supervised domain adaptation}.

The problem faced in practice is often even more challenging, since
the labeled data from the target domain can be sensitive and subject
to privacy constraints \citep*{BassilyMohriSuresh2020}. For example, a
corporation such as an airline company, or an institution such as a
hospital, may seek to train a classifier based on private labeled data
it has collected, as well as a large amount of data available from a
public domain. To share the classifier internally, let alone share it
publicly to the benefit of other institutions or individuals, it may
have to train the classifier with privacy guarantees. In the absence
of the public domain data and without the adaptation scenario, the
framework of differential privacy
\citep*{DworkMcSherryNissimSmith2006, DworkRoth2014} can be used to
privately learn a classifier that can be shared publicly. But, how can
we rigorously design a differentially private algorithm for supervised
domain adaptation?

This paper deals precisely with the problem of devising a private
(supervised) domain adaptation algorithm with theoretical guarantees
for this scenario. Our scenario \ignore{is motivated by a number of
applications in practice since it} covers any standard privacy learning
scenario where additional data from another public source is sought.

\ignore{To do so, we benefit from a recent theoretical analysis of supervised
domain adaptation.  }The problem of domain adaptation has been
theoretically investigated in a series of publications in the past and
the notion of \emph{discrepancy} was shown to be a key divergence
measure to the analysis of adaptation \citep{KiferBenDavidGehrke2004,
  BlitzerCrammerKuleszaPereiraWortman2008,
  BenDavidBlitzerCrammerKuleszaPereiraVaughan2010,
  MansourMohriRostamizadeh2009, CortesMohri2011, MohriMunozMedina2012,
  CortesMohri2014, CortesMohriMunozMedina2019,
  ZhangLongWangJordan2020}.  Building on this prior work,
\citet{AwasthiCortesMohri2024} recently gave a general theoretical
analysis of supervised adaptation that holds for any method relying on
reweighting the source and target labeled samples, including
reweighting methods that depend on the predictor selected. Thus, the
analysis covers a large number of algorithms in adaptation, including
importance weighting \citep{SugiyamaEtAl2007a,
  LuZhangFangTeshimaSugiyama2021,SugiyamaKrauledatMuller2007,
  CortesMansourMohri2010,ZhangYamaneLuSugiyama2020}, KLIEP
\citep{SugiyamaEtAl2007a}, Kernel Mean Matching (KMM)
\citep{HuangSmolaGrettonBorgwardtScholkopf2006}, discrepancy
minimization (DM) or generalized discrepancy minimization
\citep{CortesMohri2014, CortesMohriMunozMedina2019}. The authors also
suggested a general optimization problem that consists of minimizing
the right-hand side of their learning bound.

\textbf{Contributions.} We present two $(\e, \delta)$-differentially
private adaptation algorithms for supervised adaptation, based on the
optimization problem of \citet{AwasthiCortesMohri2024}.
We first consider a regression setting with linear predictors
(Section~\ref{sec:alg_linear_predictors}).  We show that
after a \ignore{non-trivial }suitable reparameterization of the weights assigned to the sample
losses, the optimization problem for adaptation can be formulated as a
joint convex optimization problem over the choice of the predictor and
that of the reparameterized weights.  We then provide an $(\e,
\delta)$-differentially private adaptation algorithm, $\pcnvx$, using
that convex formulation that can be viewed as a variant of noisy
projected gradient descent. We note that noisy gradient descent is a
general technique that has been well studied in the literature of
private optimization \citep{bassily2014private, abadi2016deep,
  wang2017differentially, bassily2019private}. \ignore{Our main contribution
here is to show that the adaptation problem can be formulated as a
convex optimization problem and to prove key properties that make this
problem privately solvable via a known approach in private convex
optimization.} We prove a formal convergence guarantee for our private
algorithm in terms of $\e$ and $\delta$ and the sizes of the source
and target samples.

We then consider in Section~\ref{sec:non-convex} a more general
setting where the loss function may be a non-convex function of the
parameters and is only assumed to be Lipschitz and smooth. This covers
the familiar case where the logistic loss is applied to the output of
neural networks, that is cross-entropy with softmax. We show that,
remarkably, here too, that reparameterization of the weights combined
with the use of the softmax can help us design an $(\e,
\delta)$-differentially private algorithm, $\pncnvx$, that benefits
from favorable convergence guarantees to stationary points of the
objective based on the gradient mapping criterion
\citep{beck2017first}.

While the main objective of our work is a theoretical analysis, we
also report extensive empirical evaluations. In
Section~\ref{sec:experiments}, we present the results of extensive
several conducted for both our convex and non-convex private algorithms.
We demonstrate that the non-private version of our convex algorithm,
$\pcnvx$ ($\epsilon = +\infty$), surpasses existing adaptation methods
for regression, including the state-of-the-art DM algorithm
\cite{CortesMohri2014} and that our private algorithm performs
comparably to its non-private counterpart, showcasing the
effectiveness of our privacy-preserving approach.
Similarly, for our non-convex algorithm, $\pncnvx$, the non-private
version exhibits superior performance compared to baselines in
classification tasks and the performance of our private algorithm
approaches that of its non-private version as the target sample size
and privacy budget ($\e$) increase\ignore{, demonstrating the trade-off between privacy and accuracy}.

\textbf{Related work.} \ignore{The problem of} Private \emph{density
estimation} using a small amount of public data has been studied in
several recent publications, in particular for learning Gaussian
distributions or mixtures of Gaussians, under some assumptions about
the public data
\cite{BieKamathSinghal2022,BenDavidBieCanonneKamathSinghal2023} (see
also \citep{TramerKamathCarlini2023}). The objective is distinct from
our goal of private adaptation in supervised learning.

The most closely related work to ours is the
recent study of \citet{BassilyMohriSuresh2020}, which considers a
similar adaptation scenario with a public source domain and a private
target domain and which also gives private algorithms with theoretical
guarantees.
However, that work can be distinguished from ours in several aspects.
First, the authors consider a purely unsupervised adaptation scenario
where no labeled sample is available from the target domain, while we
consider a supervised scenario. Our study and algorithms can be
extended to the unsupervised or weakly supervised setting using the
notion of \emph{unlabeled discrepancy}
\citep{MansourMohriRostamizadeh2009}, by leveraging upper bounds
on \emph{labeled discrepancy} in terms of unlabeled discrepancy
as in \citep{AwasthiCortesMohri2024}.
Second, the learning guarantees of our private algorithms benefit from
the recent optimization of \citet{AwasthiCortesMohri2024}, which they
show have stronger learning guarantees than those of the DM solution of
\citet{CortesMohri2014} adopted by \citet{BassilyMohriSuresh2020}.
Similarly, in our experiments, our convex optimization solution
outperforms the DM algorithm. Note that the empirical study of
\citet{BassilyMohriSuresh2020} is limited to a single 
artificial dataset, while we present empirical results with several
non-artificial datasets.
Third, our private adaptation algorithms cover regression and
classification, while those of \citet{BassilyMohriSuresh2020} only
address regression with the squared loss.
In Appendix~\ref{app:relatedwork}, we further discuss related work in
adaptation and privacy.

We first introduce in Section~\ref{sec:preliminaries} several basic
concepts and notation for adaptation and privacy, as well as the
learning problem we consider. Next, in Section~\ref{sec:opt} we
describe supervised adaptation optimization by
\citet{AwasthiCortesMohri2024} and derive private versions for that
setting in Section~\ref{sec:alg_linear_predictors} and
Section~\ref{sec:non-convex}. Finally, Section~\ref{sec:experiments}
provides experimental results.  \ignore{Next, in
  Section~\ref{sec:opt}, we describe the optimization problem for
  supervised adaptation introduced by \citet{AwasthiCortesMohri2024},
  which is based on the right-hand side of a generalization bound. As
  a result, the solution benefits from favorable learning
  guarantees. We give a more concise and self-contained proof of that
  generalization bound in Appendix~\ref{app:learningbound}.}

\section{Preliminaries}
\label{sec:preliminaries}

We write $\sX$ to denote the input space and $\sY$ the output space
which, in the regression setting, is assumed to be a measurable subset
of $\Rset$. We will consider a hypothesis set $\sH$ of functions
mapping from $\sX$ to $\sY$ and a loss function $\ell\colon \sY \times
\sY \to \Rset_+$. We will denote by $B > 0$ an upper bound on the loss
$\ell(h(x), y)$ for $h \in \sH$ and $(x, y) \in \sX \times \sY$.
Given a distribution $\sD$ over $\sX \times \sY$, we denote by
$\cL(\sD, h)$ the expected loss of $h$ over $\sD$, 
$\cL(\sD, h) = \E_{(x, y) \sim \sD} \bracket{\ell(h(x), y)}$.

\textbf{Domain adaptation}. We study a (supervised) domain adaptation
problem with a public source domain defined by a distribution
$\sD^\pub$ over $\sX \times \sY$ and a private target domain defined
by a distribution $\sD^\prv$ over $\sX \times \sY$.  We assume that
the learner receives a labeled sample $S^\pub$ of size $m$ drawn
i.i.d.\ from $\sD^\pub$, $S^\pub = \paren*{(x^\pub_1, y^\pub_1),
  \ldots, (x^\pub_m, y^\pub_m)}$, as well as a labeled sample
$\sD^\prv$ of size $n$ drawn i.i.d.\ from $\sD^\prv$, $S^\prv =
\paren*{(x^\prv_1, y^\prv_1), \ldots, (x^\prv_n, y^\prv_n)}$. The size
of the target sample $n$ is typically more modest than that of the
source sample in applications, $n \ll m$, but we will not require that
assumption and will also consider alternative scenarios.
For convenience, we also write $S = \paren*{S^\pub, S^\prv}$ to denote
the full sample of size $m + n$.

\textbf{Learning scenario}. The learning problem consists of using
both samples to select a predictor $h \in \sH$ with {\em privacy
  guarantees} and small expected loss with respect to the target
distribution $\sD^\prv$. The notion of privacy we adopt is that of
\emph{differential privacy}
\citep{dwork2006our,DworkMcSherryNissimSmith2006,DR14}, which in this
context can be defined as follows: given $\varepsilon$ and $\delta >
0$, a (randomized) algorithm $\cM \colon (\sX \times \sY)^{m + n} \to
\sH$ is said to be $(\varepsilon, \delta)$-differentially private if
for any public sample $S^\pub$, for any pair of private datasets
$S^\prv$ and $\hat{S}^\prv \in \paren*{\sX \times \sY}^n$ that differ
in exactly one entry, and for any measurable subset $\cO \subseteq
\sH$, we have: $\P \paren*{\cM((S^\pub, S^\prv)) \in \cO} \leq
e^\varepsilon \, \P \paren*{\cM((S^\pub, \hat{S}^\prv)) \in \cO} +
\delta$. Thus, the information gained by an observer is approximately
invariant to the presence or absence of a sample point in the private
sample.

\textbf{Discrepancy}.\ignore{ Domain adaptation is a challenging learning
problem, even in the non-private setting.} For adaptation to be
successful, the source and target distributions must be close
according to an appropriate divergence measure.
Several notions of \emph{discrepancy} have been shown to be adequate
divergence measures in previous theoretical analyses of adaptation
problems \citep{KiferBenDavidGehrke2004,MansourMohriRostamizadeh2009,
  MohriMunozMedina2012,CortesMohri2014, CortesMohriMunozMedina2019}.
We will denote by $\dis(\sD^\prv, \sD^\pub)$ the \emph{labeled
discrepancy} of $\sD^\prv$ and $\sD^\pub$, also called
$\sY$-discrepancy in
\citep{MohriMunozMedina2012,CortesMohriMunozMedina2019} and defined
by:
\begin{equation}
\label{eq:disc-definition}
  \dis(\sD^\prv, \sD^\pub)
   = 
  \sup_{h \in \sH} \abs*{\cL(\sD^\prv, h) - \cL(\sD^\pub, h)}.
\end{equation}
Labeled discrepancy can be straightforwardly upper bounded in terms of
the $\norm{\, \cdot \,}_1$ distance of the private and public
distributions:
$\dis(\sD^\prv, \sD^\pub) \leq B \norm{\sD^\prv - \sD^\pub}_1$. Some of its
key benefits are that, unlike the
$\norm{\, \cdot \,}_1$-distance, it takes into account the loss
function and the hypothesis set and it can be estimated from finite
samples, also in the privacy preserving setting, see
Section~\ref{sec:alg_linear_predictors}.  Note that, while we are
using absolute values for the difference of expectations, our analysis does not require that and the proofs
hold with a one-sided definition. In some instances, a finer
and more favorable notion of \emph{local discrepancy} can be used, where the supremum is
restricted to a subset $\sH_1 \subset \sH$ \citep{CortesMohriMunozMedina2019,DeMathelinMougeotVayatis2021,ZhangLiuLongJordan2019,
  ZhangLongWangJordan2020}.

\section{Optimization problem for supervised adaptation}
\label{sec:opt}
Let $\h d$ denote the empirical estimate of the discrepancy
based on the samples $S^\pub$ and $S^\prv$:
\begin{align}
\label{eq:dis_estimate}
\h d &
\triangleq \sup_{h \in \sH}
\abs*{\frac{1}{n}\sum_{i = m + 1}^{m + n}\ell(h(x_i), y_i)
  - \frac{1}{m}\sum_{i = 1}^m \ell(h(x_i), y_i)}.
\end{align}
Let $\sfq \in [0, 1]^{m + n}$ denote a vector of weights over the full
sample $(S^\pub, S^\prv)$, which, depending on their values, are used
to emphasize or deemphasize the loss on each sample $(x_i, y_i)$.  We
also denote by $\ov \sfq^\pub$ the \emph{total weight} on the first
$m$ (public) points, $\ov \sfq^\pub = \sum_{i = 1}^m \sfq_i$, and by
$\ov \sfq^\prv$ the \emph{total weight} on the next $n$ (private)
ones, $\ov \sfq^\prv = \sum_{i = m + 1}^{m + n} \sfq_i$. Note that
$\sfq$ is not required to be a distribution.
Then, the following joint optimization problem based on a
$\sfq$-weighted empirical loss and the empirical discrepancy $\h d$
was suggested by \citet{AwasthiCortesMohri2024} for supervised domain
adaptation.
\ifdim\columnwidth=\textwidth
\begin{align}
  \min_{\substack{h \in \sH\\ \sfq \in \sQ}}
  & \ \sum_{i = 1}^{m + n}\sfq_i\bracket*{\ell(h(x_i),y_i) + \h d \, 1_{i \leq m}}
  + \lambda_1\norm{\sfq - \sfp^{0}}_1 + \lambda_2\norm{\sfq}_2
  + \lambda_\infty\norm{\sfq}_\infty, \label{opt-origin}
\end{align}
\else
\begin{align}
  \min_{\substack{h \in \sH\\ \sfq \in \sQ}}
  & \sum_{i = 1}^{m + n}
  \sfq_i\bracket*{\ell(h(x_i),y_i) + \h d 1_{i \leq m}}
  \label{opt-origin}\\
  & + \lambda_1 
  \norm{\sfq - \sfp^{0}}_1 + \lambda_2\norm{\sfq}_2 +
  \lambda_\infty\norm{\sfq}_\infty, \nonumber
\end{align}
\fi where $\sQ = [0, 1]^{m + n}$ and where $\lambda_1$, $\lambda_2$
and $\lambda_\infty$ are non-negative hyperparameters. Here, $\sfp^0$
is a \emph{reference} or \emph{ideal} reweighting choice, further
discussed below.

This optimization is directly based on minimizing the right-hand side
of a generalization bound (see Theorem~\ref{th:learningbound},
Appendix~\ref{app:learningbound}), for which we give a self-contained
and more concise proof. Moreover, \cite{AwasthiCortesMohri2024}
established a corresponding lower bound for any reweighting technique
in terms of the $\sfq$-weighted empirical loss and the discrepancy of
the distributions $\sD^\prv$ and $\sD^\pub$ for a weight vector
$\sfq$. This further validates the significance of the generalization
bound. It suggests that the optimization problem \eqref{opt-origin}
admits the strongest theoretical learning guarantee we can hope for
and can be regarded as an \emph{ideal} algorithm among
reweighting-based algorithms for supervised adaptation.

The key idea behind the optimization is to assign different weights
$\sfq$ to labeled samples to account for the presence of distinct
domain distributions, akin to reweighting strategies such as
importance weighting.
The success of
adaptation hinges on a favorable balance of several crucial factors
expressed by Theorem~\ref{th:learningbound}. We aim to select a predictor $h$ with a small
$\sfq$-weighted empirical loss ($\sum_i \sfq_i \ell(h(x_i),
y_i)$). Yet, we must limit the total $\sfq$-weight assigned to source
domain samples if the empirical discrepancy $\h d$ is substantial
(captured by the $\h d$ term in \eqref{opt-origin}). Avoiding disproportionate weighting on
a few points is critical to maintaining an adequate effective sample
size, addressed by the inclusion of the norm-2 term
$\norm{\sfq}_2$. The norm-1 term encourages the choice of $\sfq$ not
deviating significantly from the reference weights $\sfp^0$, while the
norm-$\infty$ term relates to controlling complexity.
A careful empirical tuning of the hyperparameters helps achieve a
judicious balance between these terms, leading to a well-performing
adaptation process. A more detailed discussion is given in Appendix~\ref{app:add}.

A natural reference $\sfp^0$, which we assume in the
following, is an $\alpha$-mixture of the empirical distributions $\h
\sD^\pub$ and $\h \sD^\prv$ associated to the samples $S^\pub$ and
$S^\prv$: $\sfp^0 = \alpha \h \sD^\pub + (1 - \alpha) \h \sD^\prv $,
with $\alpha \in (0, 1)$.  Thus, $\sfp^0_i$ is equal to
$\frac{\alpha}{m}$ if $i \in [1, m]$, $\frac{1 - \alpha}{n}$
otherwise. The mixture parameter $\alpha$ can be chosen as a function
of the estimated discrepancy\ignore{ between $\sD^\prv$ and $\sD^\pub$}.

An important advantage of the solution based on this optimization
problem is that the weights $\sfq$ are selected in conjunction with
the predictor $h$. This is unlike most reweighting techniques in the
literature, such as importance weighting \citep{SugiyamaEtAl2007a,
  LuZhangFangTeshimaSugiyama2021,SugiyamaKrauledatMuller2007,
  CortesMansourMohri2010,ZhangYamaneLuSugiyama2020}, KLIEP
\citep{SugiyamaEtAl2007a}, Kernel Mean Matching (KMM)
\citep{HuangSmolaGrettonBorgwardtScholkopf2006}, discrepancy
minimization (DM) \citep{CortesMohri2014}, and gapBoost
\citep{WangMendezCaiEaton2019}, that consist of first pre-determining
some weights $\sfq$ irrespective of the choice of $h$, and
subsequently selecting $h$ by minimizing a $\sfq$-weighted empirical
loss.

\textbf{Optimality and theoretical guarantees.}  In light of the
theoretical properties already underscored, we define an ideal
algorithm for \emph{private} supervised adaptation as one that
achieves $(\epsilon, \delta)$-DP and returns a solution closely
approximating that of problem \eqref{opt-origin}.  This paper
introduces two differentially private algorithms that precisely
fulfill these criteria. We also present empirical evidence
demonstrating the proximity of the performance of our private
algorithms to that of the non-private optimization problem
\eqref{opt-origin}.

For a fixed choice of $\sfq$, an additional error term of
$\Omega\paren[\big]{\frac{\sqrt{d\log \frac{1}{\delta}}}{\e n}}$ is
necessary to ensure privacy for convex ERM
\citep{bassily2014private,steinke2015between}. Here, $d$ represents
the dimension of the parameter space. Therefore, the term \ignore{
$\frac{\sqrt{d\log \frac{1}{\delta}}}{\e n}$} is necessary in the
expected loss of any $(\epsilon, \delta)$-differentially private
supervised adaptation algorithm based on sample reweighting.
Remarkably, the theoretical guarantee that we prove for our first
algorithm only differs from that of \eqref{opt-origin} by a term
closely matching $\frac{\sqrt{d\log \frac{1}{\delta}}}{\e n}$.

\section{Private adaptation algorithm for regression with
  linear predictors}
\label{sec:alg_linear_predictors}

In this section, we consider a regression problem with the squared
loss and using linear predictors, for which we give a differentially
private adaptation algorithm. 

We consider an input space
$\sX = \curl*{x \in \Rset^d \colon \norm{x}_2 \leq r}$, $r > 0$, an
output space $\sY = \curl*{y \in \Rset \colon \abs{y} \leq 1}$, and a
family of bounded linear predictors
$\sH = \curl*{x \mapsto w \cdot x \mid w \in \sW}$, where
$\sW = \curl*{w \colon \norm{w}_2 \leq \Lambda}$, for some
$\Lambda > 0$. This covers scenarios where we fix lower layers of
a pre-trained neural network and only seek to learn the parameters
of the top layer. 

Note that the squared loss is bounded for $x \in \sX$ and $w \in \sW$:
$\sql(w \cdot x, y) = (w \cdot x - y)^2 \leq (\Lambda r + 1)^2
\triangleq B$.  It is also $G$-Lipschitz, $G \triangleq 2r(\Lambda r +
1)$, with respect to $w \in \sW$ since $\abs*{\sql(w\! \cdot\! x, y) -
  \sql(w'\! \cdot\! x, y)} \leq G \norm{w - w'}_2$ for all $w, w' \in \sW$
and $(x ,y) \in \sX \times \sY$. Furthermore,
$\sql(\cdot, y)$ is convex in  $y \in \sY$.

To devise an $(\epsilon, \delta)$-differentially private algorithm for
our adaptation scenario, we first show how to privately estimate the
discrepancy term. Next, we show that the natural optimization problem
\eqref{opt-origin} \ignore{described in the previous section }can be
cast as a convex optimization problem, for which we design a favorable
private solution.

\textbf{Discrepancy estimates}.
Since $\sql$ is convex with respect to its first argument, problem
\eqref{eq:dis_estimate} can be cast as two difference-of-convex problems
((DC)-programming) by removing the absolute value and
considering both possible signs.
Each of these problems can be solved using the DCA algorithm
of \cite{TaoAn1998} (see also
\citep{YuilleRangarajan2003,SriperumbudurTorresLanckriet2007}). Furthermore,
for the squared loss $\sql$ with our linear hypotheses, the DCA
method is guaranteed to reach a global optimum \citep{TaoAn1998}.

Now, observe that the empirical discrepancy $\h d$ defined in
(\ref{eq:dis_estimate}) is estimated using private data, therefore,
the addition of noise is crucial to ensure the differential privacy
(DP). This term solely depends on the dataset $S$ and not on the
particular choice of $h$ or $\sfq$ and it remains unchanged through
the iterations of Algorithm~\ref{Alg:ngd}.  Therefore, its (private)
estimation can be performed upfront, prior to the algorithm's
execution.
Furthermore, the sensitivity of $\h d$ with respect to replacing one
point in $S^\prv$ is at most $B/n$. Thus, we can first generate an
$\epsilon/2$-differentially private version $\dprv =
\mathsf{Proj}_{[0, B]}\paren*{\h d + \nu}$ of $\h d$ by augmenting $\h
d$ with $\nu \sim \lap(2B/(\epsilon n))$, where $\lap(2B/(\epsilon
n))$ is a Laplace distribution with scale $2B/(\epsilon n)$, and
projecting over the interval $[0, B]$. Thus, we modify the objective
function \eqref{opt-origin} by replacing $\h d$ with $\dprv$.  Note
also that by the properties of the Laplace distribution and the fact
that $\h d \geq 0$, the expected excess optimization error due to this
modification is in $O\paren*{B/(\epsilon n)}$, which, as we shall see,
is dominated by the excess error due to privately optimizing the new
objective. In the following, in the private optimization algorithm we
present (Algorithm~\ref{Alg:ngd}), we instantiate the privacy
parameter with $\e/2$ so that the overall algorithm is $(\e,
\delta)$-differentially private.  Alternatively, we could add noise
directly to the gradients to account for the empirical discrepancy
term estimated from the private data. However, a straightforward
analysis shows that this approach would introduce significantly more
noise to the gradients.

After substituting with our choice of a uniform reference distribution, as mentioned in
Section~\ref{sec:preliminaries}, the problem reduces to privately
solving the following optimization problem:
\begin{align}
  & \min_{\substack{\norm{w}_2 \leq \Lambda\\ \sfq \in \sQ}} 
  \label{opt-private-main} 
  \sum_{i = 1}^{m + n} \sfq_i \bracket*{\paren*{w \cdot x_i - y_i}^2
    + \dprv 1_{i \leq m}} \\
  & + \mspace{-5mu} \lambda_1 \bracket*{\sum_{i = 1}^m \abs*{\sfq_i - \frac{\alpha}{m}}
    + \mspace{-10mu}  \sum_{i = m + 1}^{n + m} \abs*{\sfq_i - \frac{1 - \alpha}{n}}} 
  + \lambda_2\norm{\sfq}_2 + \lambda_\infty\norm{\sfq}_\infty. \nonumber
\end{align}
This optimization presents two main challenges: (1) while it is convex
with respect to $w$ and with respect to $\sfq$, it is not jointly
convex; (2) the gradient sensitivity with respect to $\sfq$ of the
objective is a constant and thus not favorable to derive differential
privacy guarantees. In the following, we will show how both issues can
be tackled.
Inspecting \eqref{opt-private-main} leads to the
following useful observation.

Observe that for each $i\in [m]$, the objective is increasing in
$\sfq_i$ over $\bracket*{\frac{\alpha}{m}, 1}$ and similarly, for
each $i \in [m + 1, m + n]$, it is increasing in $\sfq_i$ over the
interval $\bracket*{\frac{1 - \alpha}{n}, 1}$. Thus, the following
stricter constraints on $\sfq$ in \eqref{opt-private-main} do not
affect the optimal solution: $\forall i\in [m], \ 0 \leq \sfq_i\leq
\frac{\alpha}{m}; \forall i\in [n], \ 0 \leq \sfq_{m + i} \leq \frac{1
  - \alpha}{n}$.  The problem can thus be equivalently formulated as:
\begin{align}
  \min_{\norm{w}_2 \leq \Lambda , \sfq}
  & \ \sum_{i = 1}^{m + n}\sfq_i\bracket*{\paren*{w\cdot x_i - y_i}^2
    + \dprv  1_{i \leq m}} 
   + \lambda_1 \bracket[\Big]{1 - \sum_{i = 1}^{m + n}\sfq_i}
  + \lambda_2\norm{\sfq}_2 + \lambda_\infty\norm{\sfq}_\infty \label{opt-private-simplified}\\
  \text{s.t. }
  & \ \forall i \in [m], \ 0 \leq \sfq_i\leq \frac{\alpha}{m};
  \forall i \in [n], \ 0 \leq \sfq_{n + i} \leq \frac{1 - \alpha}{n}.\nonumber
\end{align}

\textbf{Convex-optimization formulation}. We now derive a convex
formulation of this optimization problem, which enables us to devise
an efficient private algorithm with formal convergence guarantee. We
introduce new variables $\sfu_i = \frac{1}{\sfq_i}$,
$\forall i \in [1, m + n]$, and use the following upper
bound that holds by the convexity of $x \mapsto 1/x$ on $\Rset_+^*$:
\begin{align*}
  & 1 - \sum_{i = 1}^{m + n}\frac{1}{\sfu_i}
  \leq \paren*{\frac{\alpha}{m}}^2 \sum_{i = 1}^m \sfu_i
  + \paren*{\frac{1 - \alpha}{n}}^2\sum_{i = m + 1}^{m + n}\sfu_i - 1.
\end{align*}
\ignore{which holds for $\sfu_i \geq \frac{m}{\alpha}$ for $i \in [1, m]$ and
$\sfu_i \geq \frac{n}{1 - \alpha}$ for $i\in [m + 1, m + n]$. }This
yields the following optimization problem in $(w, \sfu)$:
\begin{align}
  \label{opt-private-convex}
  & \min_{\norm{w}_2 \leq \Lambda, \sfu} \ \sum_{i = 1}^{m + n} \frac{\paren*{w\cdot x_i - y_i}^2 + \dprv \, 1_{i \leq m}}{\sfu_i}\\
&  \mspace{-40mu}  + \kappa_1\bracket*{\frac{\alpha^2}{m^2} \mspace{-5mu} \sum_{i = 1}^m \mspace{-2mu} \sfu_i \mspace{-2mu} + \mspace{-2mu}  \frac{(1 - \alpha)^2}{n^2} \mspace{-7mu} \sum_{i = m + 1}^{m + n} \mspace{-10mu} \sfu_i \mspace{-2mu} - \mspace{-2mu} 1}
  \mspace{-2mu} + \mspace{-2mu} \kappa_2\bracket*{\sum_{i = 1}^{m + n} \mspace{-5mu} \frac{1}{\sfu_i^2}}^{\frac{1}{2}}
  \mspace{-14mu} + \mspace{-2mu}  \frac{\kappa_\infty}{\displaystyle \min_{i}\sfu_i} \nonumber\\
  \text{s.t. }
  & \ \forall i \in [m], \sfu_i \geq \frac{m}{\alpha}; \
  \forall i \in [n], \sfu_{m + i} \geq \frac{n}{1 - \alpha},\nonumber
\end{align}
with new
hyperparameters $\kappa_1, \kappa_2, \kappa_\infty$. The problem \eqref{opt-private-convex} is a joint
convex optimization problem in $w$ and $\sfu$ since the constraints on
$u$ are affine, the constraint on $w$ is convex, and since each term
$\frac{\paren*{w \cdot x_i - y_i}^2}{\sfu_i}$ is jointly convex in
$(w, \sfu_i)$ as a quadratic-over-linear function
\citep{BoydVandenberghe2014}. We will denote by $\sfF(w, \sfu)$ the
objective function and by $\sU$ the feasible set of $\sfu$, $\sU =
\paren*{[\frac{m}{\alpha}, \infty)^m \times [\frac{n}{1 - \alpha},
      \infty)^n}$.

In the following, we will also use the shorthands $\sfu^\pub =
(\sfu_1, \ldots, \sfu_m)$ and $\sfu^\prv = (\sfu_{m + 1}, \ldots,
\sfu_{m + n})$, and denote by $\wh L^\pub(w, \sfu^\pub)$ and $\wh
L^\prv(w, \sfu^\prv)$ their  contributions to the empirical
loss:
  $\wh L^\pub(w, \sfu^\pub)
  \triangleq \sum_{i = 1}^m  \frac{(w\cdot x^\pub_i - y^\pub_i)^2 + \dprv}{\sfu^\pub_i}$,
  $\wh L^\prv(w, \sfu^\prv)
   \triangleq \sum_{i = 1}^n  \frac{(w\cdot x^\prv_i - y^\prv_i)^2}{\sfu^\prv_i}$.
Note that, since differential privacy is closed under postprocessing,
$\dprv$ is safe to publish as a differentially private estimate of $\h
d$. Thus, the only term in $\sfF(w, \sfu)$ that is sensitive from a privacy
perspective is $\wh L^\prv(w, \sfu^\prv)$.

Let $\nabla_{w}, \nabla_{\sfu^\pub}$, and $\nabla_{\sfu^\prv}$ denote
the gradients with respect to $w, \sfu^\pub$, and $\sfu^\prv$,
respectively. We will also use the shorthand $\bar B\triangleq B +
\kappa_1 + \kappa_2 + \kappa_\infty$. The following lemma shows several
important properties of the objective function. The proof is given in
Appendix~\ref{app:lemma-convex}.

\begin{algorithm}[t]
    \caption{ $\pcnvx$ Private adaptation algorithm based on $\sfF$}
    \begin{algorithmic}[1]
      \REQUIRE
      $S^\pub \in \paren*{\sX \times \sY}^m$;
      $S^\prv \in \paren*{\sX \times \sY}^n$;
      privacy parameters $(\varepsilon, \delta)$;
      hyperparameters $\kappa_1, \kappa_2, \kappa_\infty$;
      number of iterations $T$.
      \STATE Choose $(w_0, \sfu_0)$ in $\sW \times \sU$ arbitrarily.
          \STATE Set $\sigma_1: =  \frac{2s_1 \sqrt{T\log(\frac{3}{\delta})}}{{\varepsilon}},$ where $s_1: =  \frac{2(1 - \alpha)G}{n}$.\nlabel{step:GD-sigma-1}
          \STATE Set $\sigma_2 : =  \frac{2s_2 \sqrt{T\log(\frac{3}{\delta})}}{{\varepsilon}},$ where $s_2: =  \frac{(1 - \alpha)^2B}{n^2}$. \nlabel{step:GD-sigma-2}
          \STATE Set step sizes $\eta_w := \frac{\Lambda}{\sqrt{T \,\paren*{G^2+d\sigma_1^2}}}$, $\eta_{\sfu^\pub}: = \frac{m^{3/2}}{\sqrt{T} \,\alpha^2(B + \bar B)}$,~ and  $\eta_{\sfu^\prv}: = \frac{n^{3/2}}{\sqrt{T \,\paren*{(1 - \alpha)^4 \bar B^2+n^4\sigma_2^2}}}$.\nlabel{step:step-sizes}
          \FOR{$t = 0$ to $T-1$}
          \STATE $w_{t+1}: =  w_t - \eta_w  \paren*{\nabla_{w}\sfF(w_t, \sfu_t) + \bz_t}$, where $\bz_t\sim \cN(\mathbf{0}, \sigma_1^2\mathbb{I}_d)$. \label{step:noise_add_w}
          \STATE If $\norm{w_{t+1}}_2> \Lambda$ then $w_{t+1} \gets \Lambda \frac{w_{t+1}}{\norm{w_{t+1}}_2}$.\nlabel{step:project-w-convex}
          \STATE $\sfu^\pub_{t+1}: =  \sfu^\pub_t - \eta_{\sfu^\pub} \, \nabla_{\sfu^\pub}\sfF(w_t, \sfu_t)$.
          \STATE For every $i\in [m]$, set $\sfu^\pub_{i, t+1}\gets \max\paren*{\sfu^\pub_{i, t+1}, \frac{m}{\alpha}}$. \nlabel{step:project-u_pub-convex}
          \STATE $\sfu^\prv_{t+1}: = \sfu^\prv_t-\eta_{\sfu^{\prv}} \, \paren*{\nabla_{\sfu^\prv}\sfF(w_t, \sfu_t) + \bz'_t}$, where $\bz'_t\sim \cN(\mathbf{0}, \sigma_2^2\mathbb{I}_n)$. \label{step:noise_add_u_prv} 
          \STATE For every $i\in [n]$, set $\sfu^\prv_{i, t+1}\gets \max\paren*{\sfu^\prv_{i, t+1}, \frac{n}{1 - \alpha}}$. \nlabel{step:project-u_priv-convex}
          \ENDFOR
\STATE \textbf{return} $\paren*{\bar{w}, \bar{\sfu}}  = \frac{1}{T}\sum_{t = 1}^T\paren*{w_t, \sfu_t}$.
    \end{algorithmic}
    \label{Alg:ngd}
\end{algorithm}

\begin{restatable}{lemma}{PropertiesOfConvexObjective}
\label{lem:_properties}
The following properties hold for the objective function $\sfF$.

\noindent (i) The following upper bounds hold for the gradients, for all $(w,
  \sfu)\in \sW \times \sU$:
     $\norm{\nabla_w \sfF(w, \sfu)}_2
     \leq G$,
     $\norm{\nabla_{\sfu^\pub} \sfF(w, \sfu)}_2
     \leq \frac{\alpha^2 \paren*{B + \bar B}}{m^{3/2}}$, and
     $\norm{\nabla_{\sfu^\prv} \sfF(w, \sfu)}_2
     \leq \frac{(1 - \alpha)^2 \bar B}{n^{3/2}}$.

\noindent (ii) The $\ell_2$-sensitivity of $\nabla_{w} \wh L^\prv(w,
  \sfu^\prv)$ with respect to changing one private data point is at
  most $\frac{2(1 - \alpha)G}{n}$;\\
\noindent (iii) The $\ell_2$-sensitivity of $\nabla_{\sfu^\prv} \wh L^\prv(w,
  \sfu^\prv)$ with respect to changing one private data point is at
  most $\frac{(1 - \alpha)^2B}{n^2}$.
      
\end{restatable}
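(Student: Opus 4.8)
The plan is to establish all three items by directly computing the (sub)gradients of $\sfF$ and bounding them with the feasibility constraints $\sfu_i \ge \tfrac{m}{\alpha}$ for $i \in [m]$ and $\sfu_{m+i}\ge \tfrac{n}{1-\alpha}$ for $i \in [n]$, together with the boundedness facts $(w\cdot x_i - y_i)^2 \le B$, $\dprv \le B$, and the per-point bound $\norm{2(w\cdot x_i - y_i)x_i}_2 \le G$ coming from $G$-Lipschitzness of the squared loss.

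For item (i), I would first note $\nabla_w \sfF = \sum_{i=1}^{m+n}\tfrac{1}{\sfu_i}\,2(w\cdot x_i - y_i)x_i$, since neither the $\dprv$ term nor the $\kappa$-terms depend on $w$. The key observation is that the constraints give $\tfrac{1}{\sfu_i}\le \tfrac{\alpha}{m}$ on public coordinates and $\tfrac{1}{\sfu_{m+i}}\le \tfrac{1-\alpha}{n}$ on private ones, so $\sum_{i=1}^{m+n}\tfrac{1}{\sfu_i} \le \alpha + (1-\alpha) = 1$; hence by the triangle inequality $\norm{\nabla_w\sfF}_2 \le G\sum_i \tfrac1{\sfu_i}\le G$. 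For the $\sfu$-gradients I would write $\nabla_{\sfu^\pub}\sfF$ and $\nabla_{\sfu^\prv}\sfF$ as sums of four contributions (weighted loss, $\kappa_1$-, $\kappa_2$-, $\kappa_\infty$-terms) and bound each in $\ell_2$. The loss and $\kappa_1$ contributions are coordinate-wise constant-bounded: on a public coordinate the loss partial has magnitude $\tfrac{(w\cdot x_i - y_i)^2 + \dprv}{\sfu_i^2}\le \tfrac{(B+\dprv)\alpha^2}{m^2}$ and the $\kappa_1$ partial equals $\tfrac{\kappa_1\alpha^2}{m^2}$; multiplying by $\sqrt m$ (there are $m$ coordinates) gives $\tfrac{(B+\dprv)\alpha^2}{m^{3/2}}$ and $\tfrac{\kappa_1\alpha^2}{m^{3/2}}$, where $(w\cdot x_i - y_i)^2\le B$ supplies one $B$ and $\dprv\le B$ together with $\kappa_1,\kappa_2,\kappa_\infty$ supplies $\bar B$. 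The private block is identical except that it carries no $\dprv$ term, which is exactly why its numerator is $\bar B$ rather than $B+\bar B$.

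The genuinely non-routine pieces are the $\kappa_2$- and $\kappa_\infty$-terms, and I expect these to be the main obstacle. For $\kappa_2\,\norm{(\sfu_i^{-1})_i}_2$, the partial in $\sfu_i$ is $-\kappa_2\tfrac{\sfu_i^{-3}}{(\sum_j \sfu_j^{-2})^{1/2}}$, so the public block of this gradient has $\ell_2$ norm $\tfrac{\kappa_2}{(\sum_j \sfu_j^{-2})^{1/2}}\bigl(\sum_{i=1}^m \sfu_i^{-6}\bigr)^{1/2}$; bounding $\sfu_i^{-6}\le (\alpha/m)^4\sfu_i^{-2}$ on public coordinates gives $\bigl(\sum_{i=1}^m \sfu_i^{-6}\bigr)^{1/2}\le (\alpha/m)^2\bigl(\sum_j \sfu_j^{-2}\bigr)^{1/2}$, so this contribution is at most $\tfrac{\kappa_2\alpha^2}{m^2}\le \tfrac{\kappa_2\alpha^2}{m^{3/2}}$ (and symmetrically $\tfrac{\kappa_2(1-\alpha)^2}{n^{3/2}}$ on the private block). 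For $\tfrac{\kappa_\infty}{\min_i\sfu_i}$, which is convex but non-smooth, I would work with a subgradient supported on the argmin set: it equals $-\kappa_\infty\,\theta/(\min_i\sfu_i)^2$ for some probability vector $\theta$ on the argmin, so its norm is at most $\kappa_\infty/(\min_i\sfu_i)^2$; any public coordinate attaining the minimum satisfies $\min_i\sfu_i = \sfu_{i^\ast}\ge m/\alpha$ (and otherwise the public block of $\theta$ vanishes), so this contributes at most $\tfrac{\kappa_\infty\alpha^2}{m^2}\le \tfrac{\kappa_\infty\alpha^2}{m^{3/2}}$, with the analogous statement on the private block. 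Summing the four contributions by the triangle inequality yields $\tfrac{\alpha^2(B+\bar B)}{m^{3/2}}$ and $\tfrac{(1-\alpha)^2\bar B}{n^{3/2}}$.

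For items (ii) and (iii), the observation is that $\wh L^\prv$ is a sum over the $n$ private points, so replacing one point $(x^\prv_j, y^\prv_j)$ by $(\hat x^\prv_j, \hat y^\prv_j)$ changes exactly one summand. For (ii), since $\nabla_w\wh L^\prv = \sum_{i}\tfrac{2(w\cdot x^\prv_i - y^\prv_i)x^\prv_i}{\sfu^\prv_i}$, the change has $\ell_2$ norm $\tfrac{1}{\sfu^\prv_j}\norm{2(w\cdot x^\prv_j - y^\prv_j)x^\prv_j - 2(w\cdot \hat x^\prv_j - \hat y^\prv_j)\hat x^\prv_j}_2\le \tfrac{2G}{\sfu^\prv_j}\le \tfrac{2(1-\alpha)G}{n}$, applying the per-point bound $G$ twice and $\tfrac{1}{\sfu^\prv_j}\le \tfrac{1-\alpha}{n}$. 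For (iii), $\nabla_{\sfu^\prv}\wh L^\prv$ has $j$-th component $-\tfrac{(w\cdot x^\prv_j - y^\prv_j)^2}{(\sfu^\prv_j)^2}$ with all other components unchanged, so the change is a single coordinate of magnitude $\tfrac{1}{(\sfu^\prv_j)^2}\abs{(w\cdot \hat x^\prv_j - \hat y^\prv_j)^2 - (w\cdot x^\prv_j - y^\prv_j)^2}\le \tfrac{B}{(\sfu^\prv_j)^2}\le \tfrac{(1-\alpha)^2 B}{n^2}$, since each squared loss lies in $[0,B]$. Thus the only delicate bookkeeping lies in the $\kappa_2$- and $\kappa_\infty$-terms of item (i); the remaining estimates are routine triangle-inequality arguments.
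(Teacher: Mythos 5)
Your proof is correct and follows essentially the same route as the paper's: direct computation of the gradient components, coordinate-wise bounds via the constraints $\sfu_i \geq \frac{m}{\alpha}$ and $\sfu_{m+i} \geq \frac{n}{1-\alpha}$ (with $\dprv \leq B$ supplying the extra $B$ in the public block), and a single-summand replacement argument for the sensitivities in (ii) and (iii). Your only deviations, namely a block-wise bound for the $\kappa_2$ term (slightly sharper, $\kappa_2\alpha^2/m^2$ before relaxing to $m^{3/2}$) and an explicit subgradient treatment of the non-smooth $\kappa_\infty$ term via a probability vector on the argmin, where the paper informally uses an argmin indicator, are cosmetic refinements of the same argument.
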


Algorithm~\ref{Alg:ngd}, denoted $\pcnvx$, gives pseudocode for
our differentially private adaptation algorithm based on the convex
problem \eqref{opt-private-convex}. Our algorithm is a variant of
noisy projected gradient descent with
steps~\ref{step:project-w-convex}, \ref{step:project-u_pub-convex},
and \ref{step:project-u_priv-convex}\ignore{in
  Algorithm~\ref{Alg:ngd}} implementing the Euclidean projection of
$(w_t, \sfu_t)$ onto the constraint set $\sW \times \sU$. The
 non-private version of Algorithm~\ref{Alg:ngd} we will
denote $\cnvx_\infty$ for $\epsilon\!=\!\infty.$
The following provides both a DP and convergence
guarantee for our algorithm.

\begin{restatable}{theorem}{ConvexAlgorithmGuarantees}
\label{th:guarantees-convex-priv-alg}
Algorithm~\ref{Alg:ngd} is $(\varepsilon, \delta)$-differentially
private. Furthermore, let $\paren*{w^\ast, \sfu^\ast}$ be a minimizer
of $\sfF(w, \sfu)$, then, the expected optimization error of the
solution $(\bar{w}, \bar{\sfu})$ returned by Algorithm~\ref{Alg:ngd}
is bounded as follows: 
\begin{align*}
 \sfF(\bar{w}, \bar{\sfu}) - \sfF(w^\ast, \sfu^\ast)
   \leq O\paren[\bigg]{
    \frac{G\Lambda\sqrt{d \log \frac{1}{\delta}}}{n\varepsilon}
    + \frac{B \max\paren*{1, \norm{\sfu_0 - \sfu^\ast}_2^2}
      \sqrt{\log \frac{1}{\delta}}}{n^{3/2}\varepsilon}},
\end{align*}
for $T \geq \max\paren*{1, \frac{n^2\varepsilon^2}{d(1 -
    \alpha)^2\log(\frac{1}{\delta})}, \frac{\bar B^2\varepsilon^2}{B^2
    \log(\frac{1}{\delta})}, \frac{\varepsilon^2\bar B^2
    n^3}{\log(\frac{1}{\delta})B^2 m^3}}$.
\end{restatable}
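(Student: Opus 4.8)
The plan is to establish the two assertions separately: first the $(\varepsilon,\delta)$-privacy of Algorithm~\ref{Alg:ngd}, then the optimization-error bound, viewing the algorithm as a blockwise noisy projected gradient descent on the jointly convex objective $\sfF$ over the convex set $\sW \times \sU$. For privacy, I would first note that $\dprv$ is produced by the Laplace mechanism on a quantity of sensitivity $B/n$ and is therefore differentially private; since privacy is closed under post-processing and $\dprv$ is the only data access made before the loop, it may be treated as public throughout the analysis of the iterations. Inside the loop, the only privacy-sensitive quantities are $\nabla_{w}\wh L^\prv$ and $\nabla_{\sfu^\prv}\wh L^\prv$, because the $\sfu^\pub$ update, the regularizers (functions of $\sfu$ only), and every projection depend on public data alone and are hence post-processing. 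By Lemma~\ref{lem:_properties}(ii)--(iii) these gradients have $\ell_2$-sensitivities $s_1 = 2(1-\alpha)G/n$ and $s_2 = (1-\alpha)^2 B/n^2$, so steps~\ref{step:noise_add_w} and~\ref{step:noise_add_u_prv}, which add $\cN(\mathbf{0},\sigma_1^2\mathbb{I}_d)$ and $\cN(\mathbf{0},\sigma_2^2\mathbb{I}_n)$, are Gaussian mechanisms. The iterates form an adaptive sequence, so I would invoke the composition theorem for the Gaussian mechanism across the $2T$ releases (e.g.\ via R\'enyi/zCDP accounting): with $\sigma_j \propto s_j\sqrt{T\log(3/\delta)}/\varepsilon$ each stream contributes a fixed amount of privacy independent of $T$, and splitting $\delta$ across the two streams (the origin of the $\log(3/\delta)$) yields the claimed $(\varepsilon,\delta)$ guarantee for the loop.

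For the utility bound, I would run the standard potential-function argument for projected stochastic gradient descent, but blockwise, since the three coordinate blocks $w,\sfu^\pub,\sfu^\prv$ carry different step sizes and only the $w$ and $\sfu^\prv$ blocks are perturbed. Writing $x_t=(w_t,\sfu_t)$, $x^\ast=(w^\ast,\sfu^\ast)$, and $g_t^{(b)}$ for the (possibly noisy) gradient used on block $b$, non-expansiveness of the Euclidean projection onto each convex constraint set gives, for every block,
\[
\norm{x^{(b)}_{t+1}-x^{\ast(b)}}_2^2 \le \norm{x^{(b)}_{t}-x^{\ast(b)}}_2^2 - 2\eta_b\tri*{g_t^{(b)}, x^{(b)}_t-x^{\ast(b)}} + \eta_b^2\norm{g_t^{(b)}}_2^2 .
\]
Since the injected noise is zero-mean and independent, $\E[g_t^{(b)}\mid x_t]=\nabla_b\sfF(x_t)$; summing the three blocks, using joint convexity of $\sfF$ to lower bound $\tri*{\nabla\sfF(x_t),x_t-x^\ast}$ by $\sfF(x_t)-\sfF(x^\ast)$, telescoping over $t=0,\dots,T-1$, and applying Jensen to the average iterate $(\bar w,\bar\sfu)$ yields
\[
\E\bracket*{\sfF(\bar w,\bar\sfu)-\sfF(w^\ast,\sfu^\ast)} \le \sum_{b}\paren*{\frac{\norm{x_0^{(b)}-x^{\ast(b)}}_2^2}{2T\eta_b} + \frac{\eta_b}{2}\max_t\E\norm{g_t^{(b)}}_2^2}.
\]

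The three second moments are controlled by Lemma~\ref{lem:_properties}(i) together with the noise variances: $\E\norm{g^{w}}_2^2\le G^2+d\sigma_1^2 =: V_w$, $\E\norm{g^{\sfu^\pub}}_2^2\le \alpha^4(B+\bar B)^2/m^3 =: V_\pub$, and $\E\norm{g^{\sfu^\prv}}_2^2\le (1-\alpha)^4\bar B^2/n^3+n\sigma_2^2 =: V_\prv$. A direct check shows that each step size in step~\ref{step:step-sizes} is the balanced choice $\Theta(1/\sqrt{T\,V_b})$ (scaled by $\Lambda$ on the $w$-block, where $\norm{w_0-w^\ast}_2\le 2\Lambda$), so block $b$ contributes $O\paren*{\max(1,\norm{x_0^{(b)}-x^{\ast(b)}}_2^2)\sqrt{V_b}/\sqrt T}$. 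Substituting $\sigma_1,\sigma_2$ and simplifying, the $w$-block contributes $O\paren*{\Lambda\sqrt{G^2+d\sigma_1^2}/\sqrt T}$, whose noise part is exactly $O\paren*{G\Lambda\sqrt{d\log(1/\delta)}/(n\varepsilon)}$, while the $\sfu^\prv$-block contributes the second stated term $O\paren*{B\max(1,\norm{\sfu_0-\sfu^\ast}_2^2)\sqrt{\log(1/\delta)}/(n^{3/2}\varepsilon)}$. The deterministic remainders (the $G^2$ and $\bar B^2$ pieces) and the entire noiseless $\sfu^\pub$-block become lower order precisely under the three stated lower bounds on $T$, which are designed so that $d\sigma_1^2\gtrsim G^2$, $n\sigma_2^2\gtrsim (1-\alpha)^4\bar B^2/n^3$, and the public-block term falls below the $\sfu^\prv$ noise floor.

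I expect the main obstacle to be the privacy accounting: pinning down the Gaussian-composition constants and the $\delta$-split so that the loop delivers exactly $(\varepsilon,\delta)$, and reconciling this with the preceding discussion in which the preliminary $\dprv$ estimate spends an additional $\varepsilon/2$ combined by basic composition. The utility argument itself is routine convex analysis once the blockwise potential inequality, the second-moment bounds, and the three $T$-conditions are organized as above; the only nonstandard feature is carrying per-block step sizes and noise through the same telescoping sum.
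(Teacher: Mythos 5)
Your proposal is correct and follows essentially the same route as the paper's proof: privacy via the Gaussian mechanism with the sensitivities of Lemma~\ref{lem:_properties}(ii)--(iii) composed over the $T$ iterations (the paper uses the moments accountant of \citet{abadi2016deep}, which is the same R\'enyi-type accounting you invoke), and utility via the blockwise projected-SGD potential argument with per-block step sizes, the identical second-moment bounds $G^2+d\sigma_1^2$, $\alpha^4(B+\bar B)^2/m^3$, and $(1-\alpha)^4\bar B^2/n^3+n\sigma_2^2$, and the three lower bounds on $T$ playing exactly the role you describe. Your closing observation that the $\varepsilon/2$ budget for $\dprv$ is reconciled outside the theorem is also consistent with the paper, which instantiates Algorithm~\ref{Alg:ngd} with privacy parameter $\varepsilon/2$ in Section~\ref{sec:alg_linear_predictors} and treats $\dprv$ as post-processing within the proof.
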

The more explicit form of the bound as well as the proof are presented
in Appendix~\ref{app:thm-convex}.
We note here that, for sufficiently large $n$, our bound scales as
$O\paren*{{G\Lambda\sqrt{d\log(1/\delta)}}/{n\varepsilon}}$. This
bound on the optimization error is essentially optimal for our
optimization problem under differential privacy. To see this, note
that our optimization task is generally harder than the standard
empirical risk minimization (ERM) as the latter task can be viewed as
a simple instantiation of our optimization problem, where the optimal
weights $\sfq$ are uniform on the private data and zero on the public
data, and they are given to the algorithm beforehand (hence, the
optimization algorithm is only required to optimize over the
parameters vector $w$). Hence, the known lower bound of
$\Omega\paren*{{G\Lambda\sqrt{d\log(1/\delta)}}/{n\varepsilon}}$
on private convex ERM \citep{bassily2014private}\footnote{The lower
bound in \citep{bassily2014private} does not have the
$\sqrt{\log(1/\delta)}$ factor, but it's been known that the lower
bound can be improved to include this factor via the more recent
results of \citep{steinke2015between}.}  implies a lower bound on the
optimization error in our problem. Note that this implies that the
additional error incurred by our algorithm due to privacy (i.e.,
compared to the best non-private algorithm for
optimizing~\eqref{opt-origin}) scales as
$O\paren*{{\sqrt{d\log(1/\delta)}}/{\e n}}$, which matches the
necessary additional error (due to privacy) in the expected loss of
any private algorithm based on sample reweighting, as discussed in
Section~\ref{sec:opt}. Formally stated: 

\begin{restatable}{theorem}{ConvexAlgorithmErrorOptimality}
\label{th:convex-optimal-error}
Suppose $n\geq \frac{\paren*{B \max\paren*{1, U^\pub + U^\prv}}^2}{(G\Lambda)^2 d}$ and $T \geq \max\paren*{1, \frac{n^2\varepsilon^2}{d(1 -
    \alpha)^2\log(\frac{1}{\delta})}, \frac{\bar B^2\varepsilon^2}{B^2
    \log(\frac{1}{\delta})}, \frac{\varepsilon^2\bar B^2
    n^3}{\log(\frac{1}{\delta})B^2 m^3}}$ in Algorithm~\ref{Alg:ngd}. Then, the resulting expected optimization error is $O\paren*{\frac{G\Lambda\sqrt{d\log(1/\delta)}}{n\varepsilon}}$, which is optimal.
\end{restatable}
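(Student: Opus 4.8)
The plan is to derive this statement as a corollary of Theorem~\ref{th:guarantees-convex-priv-alg}: the upper bound on the error follows by checking that, under the stated sample-size condition, the second term in the bound of Theorem~\ref{th:guarantees-convex-priv-alg} is absorbed into the first, while the matching lower bound follows from the reduction to private convex ERM already sketched in the discussion preceding the statement.

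For the upper bound, I would first bound the initialization distance $\norm{\sfu_0 - \sfu^\ast}_2^2$ by $U^\pub + U^\prv$, where $U^\pub$ and $U^\prv$ denote the squared-diameter quantities of the public and private coordinate blocks of the feasible set $\sU$ that enter the explicit form of the bound in Appendix~\ref{app:thm-convex}; this gives $\max\paren*{1, \norm{\sfu_0 - \sfu^\ast}_2^2} \leq \max\paren*{1, U^\pub + U^\prv}$. I would then rewrite the second error term as
\[
\frac{B \max\paren*{1, U^\pub + U^\prv}}{\sqrt{n}} \cdot \frac{\sqrt{\log \frac{1}{\delta}}}{n \varepsilon},
\]
and observe that the hypothesis $n \geq \frac{\paren*{B \max\paren*{1, U^\pub + U^\prv}}^2}{(G\Lambda)^2 d}$ is exactly equivalent to $\frac{B \max\paren*{1, U^\pub + U^\prv}}{\sqrt{n}} \leq G \Lambda \sqrt{d}$. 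Substituting this bound into the display shows the second term is at most $\frac{G \Lambda \sqrt{d \log \frac{1}{\delta}}}{n \varepsilon}$, i.e.\ of the same order as the first term, so the entire bound collapses to $O\paren*{\frac{G \Lambda \sqrt{d \log(1/\delta)}}{n \varepsilon}}$.

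For optimality, I would formalize the reduction already indicated before the statement: fixing the weight vector in problem~\eqref{opt-private-simplified} to the $\sfq$ that is uniform on the private sample and zero on the public sample reduces the $\sfq$-weighted objective, up to an additive constant and the fixed scaling $\frac{1 - \alpha}{n}$ per private term, to the standard $G$-Lipschitz squared-loss convex ERM objective $\sum_{i = 1}^{n} (w \cdot x^\prv_i - y^\prv_i)^2$ over the ball $\sW$ of radius $\Lambda$ on the $n$ private points. Since revealing $\sfq$ to the algorithm only makes the task easier, any $(\varepsilon, \delta)$-DP algorithm for our problem yields one for this ERM instance; hence the lower bound $\Omega\paren*{\frac{G \Lambda \sqrt{d \log(1/\delta)}}{n \varepsilon}}$ of \citep{bassily2014private} (with the $\sqrt{\log(1/\delta)}$ factor from \citep{steinke2015between}) transfers, matching the upper bound and establishing optimality.

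The main obstacle is the bookkeeping in the reduction rather than the (routine) algebra of the upper bound: I have to verify that after fixing the weights the $\sfq$-weighted objective is genuinely a scaled private convex ERM problem with effective Lipschitz constant $\Theta(G)$ and constraint radius $\Theta(\Lambda)$, so that the hard instance of \citep{bassily2014private} embeds with the correct constants and no hidden dependence on $m$, $\alpha$, or the public discrepancy term $\dprv$ (which, being differentially private by postprocessing, is free) weakens the bound.
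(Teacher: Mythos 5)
Your proposal is correct and follows essentially the same route as the paper: the paper gives no standalone proof of Theorem~\ref{th:convex-optimal-error}, obtaining it exactly as you do by substituting the stated conditions on $T$ and $n$ into the explicit bound of Theorem~\ref{th:guarantees-convex-priv-alg} (Appendix~\ref{app:thm-convex}) so that the term $\frac{B\max\paren*{1, U^\pub + U^\prv}\sqrt{\log(1/\delta)}}{n^{3/2}\varepsilon}$ is absorbed into $\frac{G\Lambda\sqrt{d\log(1/\delta)}}{n\varepsilon}$, and by appealing to the same fixed-weights reduction to private convex ERM from \citep{bassily2014private,steinke2015between} for optimality. One cosmetic correction: $U^\pub$ and $U^\prv$ are the squared initialization-to-optimum distances $\norm{\sfu^\pub_0 - \sfu^{\pub\ast}}_2^2$ and $\norm{\sfu^\prv_0 - \sfu^{\prv\ast}}_2^2$, not diameters of the feasible set $\sU$ (which is unbounded); your algebra is unaffected since $\norm{\sfu_0 - \sfu^\ast}_2^2 = U^\pub + U^\prv$ exactly.
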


\section{Private adaptation -- more general settings}
\label{sec:non-convex}

Here, we consider a general possibly non-convex setting, where the loss function is only
assumed to be $G$-Lipschitz and $\beta$-smooth, that is
differentiable, with $\beta$-Lipschitz gradient in the parameter $w$
with respect to the $\ell_2$-norm. To
simplify, we will here abusively adopt
the notation $\ell(w, x, y)$ to denote the loss associated with a
parameter vector $w \in \sW$ (defining a hypothesis) and a labeled
point $(x, y) \in \sX\times\sY$. Since the $\ell_2$-diameter of $\sW$
is $\Lambda$-bounded and the loss is Lipschitz, we assume without loss
of generality that $\ell$ is uniformly bounded by $B = G\Lambda$ over 
$\sW\times \sX \times \sY$.

Our goal is to privately optimize a more general version of problem~\ref{opt-private-simplified}, where the squared loss is replaced with any such loss $\ell$. This is in general a
non-convex optimization problem and
finding a global minimum is hence
intractable. An alternative,  widely adopted  in
the literature, is to find a stationary
point of the non-convex objective.

\textbf{Challenges for the design of a private adaptation algorithm.}
Before we discuss the stationarity criterion, we first describe our
approach. One issue with the objective when expressed as a function of
$\sfq = (\sfq^\pub, \sfq^\prv)$ is that its gradient with respect to
$\sfq^\prv$ admits an $\Omega(1)$ sensitivity. That can improved by
introducing new variables $\tilde \sfq^\pub =
\frac{\alpha}{m}\sfq^\pub$ and $\tilde{\sfq}^\prv = \frac{1 -
  \alpha}{n}\sfq^\prv$, thereby reducing the sensitivity of the
gradient with respect to $\tilde \sfq^\prv$ to
$O\paren*{\frac{1}{n}}$. Since this gradient is $n$-dimensional, the
magnitude of the noise added for privacy will be
$O\paren*{\frac{1}{\sqrt{n}}}$. However,  we can
further enhance the sensitivity if we resort to the reparameterization
technique described in Section~\ref{sec:alg_linear_predictors}. As we
show in the sequel, by applying the transformation of variables
$\sfu_i = \frac{1}{\sfq_i}$, $i \in [m + n],$ we are able to reduce
the sensitivity of the gradient components, and hence achieve better
convergence guarantees.

Another challenge we face here is the non-smoothness of the
objective. Together with the non-convex functions, it
leads to weaker convergence guarantees even in the non-private setting
\citep{arjevani2019lower, shamir2020can}. Although the aforementioned
transformation ensures that the term corresponding to $\norm{\sfq}_2$
is smooth as a function of $\sfu$, the term corresponding to
$\norm{\sfq}_\infty$, i.e., $\frac{1}{\min_{i\in [m + n]}\sfu_i}$,
remains non-smooth. To address this issue, we replace that term with
its $\mu$-softmax approximation, namely, $\frac{1}{\mu}
\log\paren*{\sum_{i = 1}^{m + n}e^{\mu/\sfu_i}}$, where $\mu >0$ is
the softmax approximation parameter. A basic fact about this
$\mu$-softmax approximation is that its approximation error is
uniformly bounded by $O\paren*{\frac{1}{\mu} \log(m + n)}$. With $\mu = O\paren*{\sqrt{m + n}}$, the excess error we incur \ignore{ due
to optimizing the objective with the softmax approximation} is
$\widetilde{O}\paren*{1/\sqrt{m + n}}$.

Thus, our goal is to privately find a stationary point of the
following optimization problem:
\begin{align}
  \min_{\substack{\norm{w}_2
      \leq \Lambda\\ \sfu}} & \ \sum_{i = 1}^{m + n} \frac{\ell(w, x_i, y_i) + \dprv \, 1_{i \leq m}}{\sfu_i} 
  \label{opt-private-non-convex}  
     + \lambda_1 \bracket*{1 - \sum_{i = 1}^{m + n}\frac{1}{\sfu_i}} 
  + \lambda_2 \bracket*{\sum_{i = 1}^{m + n}\frac{1}{\sfu_i^2}}^{\frac{1}{2}}
  \mspace{-10mu}
   + \frac{\lambda_\infty}{\mu} \log \bracket*{\sum_{i = 1}^{m + n}e^{\frac{\mu}{\sfu_i}} \mspace{-5mu}}  \\
  \text{s.t. }
  & \ \forall i \in [m],  \sfu_i\geq \frac{m}{\alpha}; \
  \forall i \in [n], \sfu_{m + i} \geq \frac{n}{1 - \alpha}. \nonumber
\end{align}
We denote the objective in
(\ref{opt-private-non-convex}) by $\sfJ(w, \sfu)$, and let
$\sV\triangleq \sW\times \sU$, where $\sW\times \sU$ is the constraint
set in the above problem.

In the following lemma, we show several useful properties of $\sfJ$
that will be crucial for proving the privacy and convergence guarantees
of our private algorithm. We stress that our reparameterization idea,
which led to the above problem formulation, was key
to ensuring such properties. That and the following lemma are crucial results
enabling us to devise a private
adaptation solution with guarantees for this general non-convex setting.
The proof is given in Appendix~\ref{app:lemma-nonconvex}.

\begin{restatable}{lemma}{PropertiesOfNonConvexObjective}
\label{lem:properties_nonconvex}
The objective function $\sfJ$ admits the following properties:\\
\noindent (i) $\sfJ$ satisfies the same properties as those stated for $\sfF$
  in Lemma~\ref{lem:_properties}.\\
\noindent (ii) Assume $m^{\frac{1}{3}} = O(n), n = O(m^3)$, and $\mu =
  O\paren[\big]{(m + n)^{\frac{2}{3}}}$. Then, $\sfJ$ is
  $\bar{\beta}$-smooth over $\sV$, where $\bar{\beta}=O(\beta)$.
\end{restatable}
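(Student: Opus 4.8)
The plan is to obtain (i) by transferring the argument behind Lemma~\ref{lem:_properties} essentially verbatim, and to obtain (ii) by a blockwise bound on the Hessian of $\sfJ$ in which the only delicate contribution is that of the softmax term. For part (i), note that $\sfJ$ has exactly the term-by-term structure of $\sfF$ in \eqref{opt-private-convex}: the general loss $\ell(w,x_i,y_i)$ replaces the squared loss and is likewise $G$-Lipschitz and bounded by $B=G\Lambda$, while the softmax term $\frac{\lambda_\infty}{\mu}\log\sum_i e^{\mu/\sfu_i}$ replaces $\frac{\kappa_\infty}{\min_i\sfu_i}$. Since the gradient and sensitivity bounds of Lemma~\ref{lem:_properties} use only Lipschitzness and boundedness of the loss together with the feasibility lower bounds $\sfu_i\ge m/\alpha$ (public) and $\sfu_i\ge n/(1-\alpha)$ (private), the bound $\norm{\nabla_w\sfJ}_2\le G$ and the two sensitivity estimates (ii)--(iii) follow unchanged, the latter because they involve only $\wh L^\prv$, whose form is identical. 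The sole new ingredient is the softmax gradient component $-\lambda_\infty p_i/\sfu_i^2$, where $p_i=e^{\mu/\sfu_i}/\sum_j e^{\mu/\sfu_j}$ satisfies $\sum_i p_i=1$ and hence $\sum_i p_i^2\le 1$. I would verify that, summed over the public (resp.\ private) block, this contributes at most $\lambda_\infty\alpha^2/m^{3/2}$ (resp.\ $\lambda_\infty(1-\alpha)^2/n^{3/2}$) to the corresponding $\ell_2$ gradient norm, so that together with the loss, $\lambda_1$, and $\lambda_2$ contributions the total stays within the budgets $\frac{\alpha^2(B+\bar B)}{m^{3/2}}$ and $\frac{(1-\alpha)^2\bar B}{n^{3/2}}$, with $\bar B=B+\lambda_1+\lambda_2+\lambda_\infty$; here $\sum_i p_i^2\le1$ plays exactly the role that the single active coordinate of $\min_i\sfu_i$ played in the convex case.

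For part (ii), I would bound the spectral norm of the Hessian of $\sfJ$ on $\sV$ blockwise in $(w,\sfu)$, treating the loss through its $\beta$-smoothness rather than assuming it is twice differentiable. The $ww$ block $\sum_i\nabla^2_w\ell_i/\sfu_i$ has norm at most $\beta\sum_i 1/\sfu_i\le\beta$, since $\sum_i 1/\sfu_i\le\alpha+(1-\alpha)=1$. The mixed block has columns $-\nabla_w\ell_i/\sfu_i^2$ and Frobenius norm $O\paren{G(m^{-3/2}+n^{-3/2})}=o(1)$. Within the $\sfu\sfu$ block, the loss and $\lambda_1$ parts are diagonal with entries $O(\sfu_i^{-3})$, hence $O(m^{-3}+n^{-3})$, while the $\lambda_2$ part, whose Hessian is $3S^{-1/2}\diag(\sfu_i^{-4})-S^{-3/2}vv^\top$ with $S=\sum_i\sfu_i^{-2}$ and $v=(\sfu_i^{-3})_i$, is controlled by $(\min_i\sfu_i)^{-3}+S^{3/2}=o(1)$ using $S\le\alpha^2/m+(1-\alpha)^2/n$.

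The crux is the softmax Hessian. A direct computation gives
\[
\nabla^2_{\sfu}\left[\frac{\lambda_\infty}{\mu}\log\textstyle\sum_i e^{\mu/\sfu_i}\right]
=\lambda_\infty\mu\left[\diag(p_i\sfu_i^{-4})-rr^\top\right]+2\lambda_\infty\diag(p_i\sfu_i^{-3}),
\qquad r_i=p_i\sfu_i^{-2}.
\]
The second summand has norm $O\paren{\lambda_\infty(m^{-3}+n^{-3})}$. The first carries the dangerous factor $\mu$; since $rr^\top\succeq 0$ and $\diag(p_i\sfu_i^{-4})\succeq0$, its spectral norm is at most $\lambda_\infty\mu\max\paren{\max_i p_i\sfu_i^{-4},\,\norm{r}_2^2}\le\lambda_\infty\mu(\min_i\sfu_i)^{-4}$, using $p_i\le1$ and $\norm{r}_2^2=\sum_i p_i^2\sfu_i^{-4}\le(\min_i\sfu_i)^{-4}$. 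With $\min_i\sfu_i\ge\min(m/\alpha,n/(1-\alpha))$ this is $O\paren{\lambda_\infty\mu(m^{-4}+n^{-4})}$. Here the scaling assumptions enter: $\mu=O\paren{(m+n)^{2/3}}$ with $n=O(m^3)$ gives $(m+n)^{2/3}=O(m^2)$ and hence $\mu/m^4=O(m^{-2})$, while $m^{1/3}=O(n)$ (i.e.\ $m=O(n^3)$) gives $(m+n)^{2/3}=O(n^2)$ and hence $\mu/n^4=O(n^{-2})$; thus $\mu(m^{-4}+n^{-4})=O(1)$ and the softmax Hessian is $O(\lambda_\infty)=O(1)$. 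Summing the blocks yields $\norm{\nabla^2\sfJ}_{\mathrm{op}}\le\beta+O(1)$, so $\bar\beta=O(\beta)$, treating the fixed problem constants $\alpha,B,G,\Lambda,\lambda_1,\lambda_2,\lambda_\infty$ as $O(1)$.

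\textbf{Main obstacle.} The heart of the argument, and the reason both the exponent $2/3$ and the two-sided window $m^{1/3}=O(n)$, $n=O(m^3)$ appear, is the need to balance the factor $\mu$ in the softmax Hessian against the $(\min_i\sfu_i)^{-4}$ decay produced by the reparameterization $\sfu_i=1/\sfq_i$, which forces $\sfu_i$ to be large ($\ge m/\alpha$ or $n/(1-\alpha)$). Making the softmax sharp enough for a good approximation (large $\mu$) while keeping it smooth enough (bounded Hessian) is exactly what $\mu=O\paren{(m+n)^{2/3}}$ secures, and the polynomial relation between $m$ and $n$ is what guarantees $\mu/m^4$ and $\mu/n^4$ are simultaneously $O(1)$.
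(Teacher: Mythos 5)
Your proposal is correct, and its skeleton matches the paper's proof: part (i) is transferred from Lemma~\ref{lem:_properties} by noting that the two new terms are data-independent (so the sensitivities of $\nabla_w \wh L^\prv$ and $\nabla_{\sfu^\prv}\wh L^\prv$ are untouched) and that their gradient contributions fit within the stated budgets, and part (ii) bounds the Hessian blockwise via $\norm{H}_2 \leq \norm{\nabla^2_w\sfJ}_2 + \norm{\sfK}_2 + \norm{\nabla^2_\sfu \sfJ}_2$, with the $ww$ block at most $\beta\sum_i 1/\sfu_i \leq \beta$ and the mixed block $O\paren{G(m^{-3/2}+n^{-3/2})}$ via the Frobenius norm, exactly as in Appendix~\ref{app:lemma-nonconvex}. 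Where you genuinely diverge is the $\sfu\sfu$ block: the paper computes all second partials entrywise and passes to the Frobenius norm, discarding the softmax weights via $p_i p_j \leq 1$ in the off-diagonal bound $\lambda_\infty\mu/(\sfu_i^2\sfu_j^2)$, and arrives at $\beta' = O\paren{\sqrt{n}/m^{5/2} + \sqrt{m}/n^{5/2} + \mu(m^{-3}+n^{-3})}$; you instead exploit the exact spectral structure, writing the $\lambda_2$ Hessian as $3S^{-1/2}\diag(\sfu_i^{-4}) - S^{-3/2}vv^\top$ and the softmax Hessian as $\lambda_\infty\mu\bracket{\diag(p_i\sfu_i^{-4}) - rr^\top} + 2\lambda_\infty\diag(p_i\sfu_i^{-3})$ (both formulas check out), then use the PSD comparison $\norm{A-B}_2 \leq \max\paren{\norm{A}_2,\norm{B}_2}$ together with $\sum_i p_i^2 \leq 1$ to get $O\paren{\lambda_\infty\mu\,(\min_i\sfu_i)^{-4}}$. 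Your route is tighter by roughly a factor of $\min(m,n)$ on the $\mu$-dependent term ($\mu/m^4$ versus the paper's $\mu/m^3$) and makes more transparent why the window $m = O(n^3)$, $n = O(m^3)$, $\mu = O\paren{(m+n)^{2/3}}$ is exactly what balances softmax sharpness against smoothness; the paper's entrywise computation is more elementary and produces the explicit constant $\beta'$ used to define $\bar\beta$, but both routes need the same scaling assumptions. Two minor remarks: your part (i) is actually more explicit than the paper's, which only asserts that a "straightforward calculation" of the gradients of the new terms preserves the bounds, whereas you verify the softmax gradient $-\lambda_\infty p_i/\sfu_i^2$ via $\sum_i p_i^2 \leq 1$; and your stated intention to avoid assuming $\ell$ twice differentiable is not realized, since your $ww$ and mixed blocks are written with $\nabla^2_w\ell$ and $\nabla_w\ell$ --- but this leaves you at exactly the paper's level of rigor, as its proof likewise begins by asserting that $\sfJ$ is twice differentiable.
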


\textbf{Gradient mapping as a stationarity criterion.}  Here, we adopt
a standard stationarity criterion given by the norm of the
\emph{gradient mapping} \citep{beck2017first}. The $\gamma$-gradient
mapping of a function $f\colon \sV\rightarrow \Rset$ at $\sfv\in \sV$
is denoted and defined by $\sG_{f, \gamma}(\sfv) \triangleq
\gamma\paren[\big]{\sfv - \proj_{\sV}\paren[\big]{\sfv-\frac{1}{\gamma}\nabla
    f(\sfv)}}$.
A point $\sfv^\ast\in \sV$ is a stationary point of $f$ if and only if
$\norm{\sG_{f, \gamma}(\sfv^\ast)}_2 = 0$. For an $r$-smooth function
$f$, $\norm{\sG_{f, r}(\cdot)}_2$ serves as a measure of convergence
to a stationary point \citep{beck2017first, ghadimi2016mini,
  li2018simple}.

\textbf{Private algorithm for general adaptation settings:} Our
private algorithm for the non-convex problem, denoted by $\pncnvx$, is
a variant of noisy projected gradient descent. Our algorithm
admits exactly the same generic description as
Algorithm~\ref{Alg:ngd}, including the settings of the noise variances
$\sigma_1^2$ and $\sigma_2^2$, with the following differences: (1) we
use $\sfJ$ instead of $\sfF$; (2) we set
$\eta_w = \eta_{\sfu^\pub} = \eta_{\sfu^\prv} = \eta \triangleq
\frac{1}{\bar{\beta}}$, where $\bar{\beta}$ is the smoothness
parameter given in Lemma~\ref{lem:properties_nonconvex}; (3) the
algorithm returns $(w_{t^\ast}, \sfu_{t^\ast})$, where $t^\ast$ is
drawn uniformly from $[T]$. The full pseudocode
of $\pncnvx$ is given in Appendix~\ref{app:alg-nonconvex}. We will denote
by $\ncnvx_\infty$ the
 non-private version of $\pncnvx$.

Our algorithm is similar to that of
\citet{wang2019differentially}, but our analysis is distinct and
yields stronger guarantee than their Theorem~2. The
convergence measure in that reference is based on the norm of the
noisy projected gradient. In contrast, our convergence guarantee is
based on the noiseless gradient mapping $\sG_{\sfJ, \bar{\beta}}(w_t,
\sfu_t)$, which we view as more relevant.
The following theorem gives privacy and convergence guarantees for our
algorithm, see proof in Appendix~\ref{app:thm-nonconvex}; it establishes the total number of iterations
(gradient updates) of our algorithm is in $O\paren*{\varepsilon/
  \sqrt{d \log (1/\delta)}}$.

\begin{restatable}{theorem}{NonConvexAlgorithmGuarantees}
\label{thm:nonconvex}
Algorithm $\pncnvx$ is $(\varepsilon,
\delta)$-differentially private. Moreover, for the choice $T =
O\paren*{{\varepsilon n}/{\sqrt{d \log(1/\delta)}}},$ the
output of the algorithm satisfies the following bound on the norm of
the gradient mapping: $\norm{\sG_{\sfJ, \bar{\beta}}(w_{t^\ast},
  \sfu_{t^\ast})}_2^2 = O
\paren[\Big]{{\sqrt{\bar{\beta}d\log(1/\delta)}}/{\varepsilon
    n}}$.
\end{restatable}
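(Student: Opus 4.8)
The plan is to prove the two assertions separately, both resting on Lemma~\ref{lem:properties_nonconvex}: privacy follows the template of the convex algorithm, while convergence is a noisy projected-gradient descent analysis phrased through the gradient mapping.

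\textbf{Privacy.} Since $\pncnvx$ has the generic structure of Algorithm~\ref{Alg:ngd}, I would argue privacy exactly as in the convex case. The discrepancy $\dprv$ is released once by the $(\varepsilon/2)$-DP Laplace mechanism of Section~\ref{sec:alg_linear_predictors} and, by post-processing, may be treated as a public constant thereafter. Conditioned on the current iterate $\sfv_t = (w_t,\sfu_t)$, the only quantities depending on the private sample values are $\nabla_w \wh L^\prv$ and $\nabla_{\sfu^\prv}\wh L^\prv$, whose $\ell_2$-sensitivities are $s_1 = \tfrac{2(1-\alpha)G}{n}$ and $s_2 = \tfrac{(1-\alpha)^2 B}{n^2}$ by Lemma~\ref{lem:properties_nonconvex}(i) (i.e. Lemma~\ref{lem:_properties}(ii),(iii)); the $\sfu^\pub$-gradient and the projections are deterministic functions of $\sfv_t$ and the released noisy gradients, hence carry no further dependence on the private data. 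Each iteration is thus a composition of two Gaussian mechanisms calibrated to $s_1,s_2$, and the choice $\sigma_j = 2 s_j\sqrt{T\log(3/\delta)}/\varepsilon$ is exactly the scaling making the whole $T$-step trajectory $(\varepsilon/2,\delta)$-DP under a standard composition bound (advanced composition or the R\'enyi/zCDP accountant). Combining with the $(\varepsilon/2)$-DP release of $\dprv$ and using that returning a single iterate is post-processing yields the claimed $(\varepsilon,\delta)$-DP.

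\textbf{Convergence.} Let $\xi_t$ denote the zero-mean Gaussian noise added at step $t$ ($\bz_t$ on the $w$-block, $\bz_t'$ on the $\sfu^\prv$-block, $0$ on the $\sfu^\pub$-block), set $g_t = \nabla\sfJ(\sfv_t) + \xi_t$, so that $\sfv_{t+1} = \proj_{\sV}\paren*{\sfv_t - \tfrac{1}{\bar{\beta}} g_t}$ with $\eta = 1/\bar{\beta}$. Define the noisy mapping $\wt{\sG}_t \triangleq \bar{\beta}(\sfv_t - \sfv_{t+1})$ and the criterion $\sG_t \triangleq \sG_{\sfJ,\bar{\beta}}(\sfv_t)$. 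By Lemma~\ref{lem:properties_nonconvex}(ii), $\sfJ$ is $\bar{\beta}$-smooth, so the descent lemma combined with the first-order projection inequality $\norm{\sfv_t-\sfv_{t+1}}_2^2 \le \tfrac{1}{\bar{\beta}}\langle g_t,\, \sfv_t-\sfv_{t+1}\rangle$ (equivalently $\norm{\wt{\sG}_t}_2^2 \le \langle g_t, \wt{\sG}_t\rangle$) yields the per-step decrease
\begin{equation*}
\tfrac{1}{2\bar{\beta}}\norm{\wt{\sG}_t}_2^2 \;\le\; \sfJ(\sfv_t) - \sfJ(\sfv_{t+1}) + \tfrac{1}{\bar{\beta}}\langle \xi_t, \wt{\sG}_t\rangle .
\end{equation*}
I would then take expectations and telescope. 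The crux is the term $\langle\xi_t,\wt{\sG}_t\rangle$: writing $\wt{\sG}_t = \sG_t + (\wt{\sG}_t - \sG_t)$, the inner product $\langle\xi_t,\sG_t\rangle$ vanishes in expectation since $\sG_t$ is measurable with respect to the past and $\xi_t$ is independent and mean-zero, while the residual is controlled because the Euclidean projection is $1$-Lipschitz, giving $\norm{\wt{\sG}_t - \sG_t}_2 \le \norm{\xi_t}_2$ and hence $\langle\xi_t, \wt{\sG}_t-\sG_t\rangle \le \norm{\xi_t}_2^2$. The same Lipschitz bound gives $\norm{\sG_t}_2^2 \le 2\norm{\wt{\sG}_t}_2^2 + 2\norm{\xi_t}_2^2$.

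Summing over $t$, telescoping $\sfJ(\sfv_0)-\sfJ(\sfv_T)\le\Delta$ with $\Delta \triangleq \sfJ(\sfv_0) - \min_{\sV}\sfJ = O(1)$ (the objective is bounded over $\sV$ thanks to the constraints $\sfu_i\ge m/\alpha$, $\sfu_{m+i}\ge n/(1-\alpha)$), and using $\E\norm{\xi_t}_2^2 = d\sigma_1^2 + n\sigma_2^2$, I obtain
\begin{equation*}
\E\norm{\sG_{\sfJ,\bar{\beta}}(w_{t^\ast},\sfu_{t^\ast})}_2^2 = \tfrac{1}{T}\sum_t \E\norm{\sG_t}_2^2 \;\le\; \frac{4\bar{\beta}\Delta}{T} + 6\paren*{d\sigma_1^2 + n\sigma_2^2}.
\end{equation*}
Substituting the noise variances gives $d\sigma_1^2 + n\sigma_2^2 = O\paren*{\tfrac{T d\log(1/\delta)}{n^2\varepsilon^2}}$ (the $w$-block term dominates the $\sfu^\prv$-block by a factor $n$), so the bound reads $O\paren*{\tfrac{\bar{\beta}}{T} + \tfrac{T d\log(1/\delta)}{n^2\varepsilon^2}}$. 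Balancing the two terms produces the stated $T = O\paren*{\varepsilon n/\sqrt{d\log(1/\delta)}}$ and the final rate $O\paren*{\sqrt{\bar{\beta}\, d\log(1/\delta)}/(\varepsilon n)}$. I expect the main obstacle to be precisely the control of the noise-correlated mapping $\wt{\sG}_t$ and its comparison to the noiseless criterion $\sG_t$ that the theorem measures (this is the very reason the guarantee is stated on the noiseless gradient mapping rather than the noisy projected gradient, as emphasized before the theorem); nonexpansiveness of the projection is the tool that closes this gap, while the smoothness $\bar{\beta}=O(\beta)$ and boundedness of $\sfJ$ needed for the telescoping are exactly what Lemma~\ref{lem:properties_nonconvex} supplies.
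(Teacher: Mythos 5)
Your proposal is correct and follows essentially the same route as the paper: privacy via per-iteration Gaussian mechanisms calibrated to the sensitivities of Lemma~\ref{lem:properties_nonconvex} and composed over $T$ steps (plus the $\epsilon/2$-DP Laplace release of $\dprv$), and convergence via the $\bar{\beta}$-smoothness descent lemma, the first-order projection inequality, nonexpansiveness of the Euclidean projection to pass from the noisy update to the noiseless gradient mapping, telescoping against the uniformly bounded objective, and balancing $T$ to get $O\paren[\big]{\sqrt{\bar{\beta}d\log(1/\delta)}/(\varepsilon n)}$. The only (harmless) difference is bookkeeping of the noise cross-term: the paper bounds it by Cauchy--Schwarz and completes the square in $\norm{\sfv_{t+1}-\sfv_t}_2$, whereas you use $\E\tri{\xi_t, \sG_t} = 0$ by measurability of $\sG_t$ with respect to the past and absorb the residual via $\norm{\wt{\sG}_t - \sG_t}_2 \leq \norm{\xi_t}_2$, which yields slightly better constants but the same rate.
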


\section{Experiments}
\label{sec:experiments}

We here report the results of several experiments for both
our convex private adaptation algorithm for regression, $\pcnvx$\ introduced in Section~\ref{sec:alg_linear_predictors}, 
and our non-convex private algorithm for more general settings, $\pncnvx$ introduced in Section~\ref{sec:non-convex}. Since state-of-the-art domain adaptation algorithms in general have not been extended to the private supervised domain adaptation we consider, we adopt the following experimental validation procedure. We first compare our non-private versions of the algorithms with the other domain adaptation algorithms. Here we demonstrate that our proposed algorithms outperforms state-of-the-art baselines. Our further experiments then serve to demonstrate that our private versions in performance remain close to the non-private versions.  

\ignore{
We first compare
our non-private algorithm $\cnvx_\infty$ with several baselines
for domain adaptation, including the DM algorithm\ignore{ (discrepancy
minimization)} \citep{CortesMohri2014} that was previously reported as
the state-of-the-art for adaptation for regression tasks. Next, we
compare our private adaptation algorithm $\pcnvx$\ with the
non-private solution of $\cnvx_\infty$. In the general setting, we compare
logistic regression classifiers trained by our non-private algorithm
$\ncnvx_\infty$ with several baselines and then demonstrate that the
performance of our private adaptation algorithm $\pncnvx$ remains
close to that of $\ncnvx_\infty$.
}

\ignore{
The \texttt{{traffic}} dataset from the Minnesota Department of
Transportation \citep{TaekMuKwon2004,Dua:2019} where the goal is to
predict the traffic volume. We create source and target by splitting
based on the time of the day. This leads to $2200$ examples from the
source and $1000$ examples from the test set ($200$ for training,
$400$ for validation and $400$ for testing).
}


\textbf{Non-private comparison to baselines. Regression.}  We consider five
regression datasets with dimensions as high as $384$ from the UCI
machine learning repository \citep{Dua:2019}, the \texttt{{Wind}},
\texttt{{Airline}}, \texttt{{Gas}},  \texttt{{News}} and \texttt{{Slice}}. Detailed
information about the sample sizes of these datasets is given in
Table~\ref{tbl:dataset} (Appendix~\ref{app:add-convex}).
We compare our non-private convex algorithm $\cnvx_\infty$ with the
Kernel Mean Matching (KMM) algorithm
\citep{HuangSmolaGrettonBorgwardtScholkopf2006} and the DM algorithm
\citep{CortesMohri2014}.  For all algorithms, we do model selection on
the target validation set and report in
Table~\ref{tbl:regression-real} the mean and standard deviation on the
test set over $10$ random splits of the target training and validation
sets.  We report relative MeanSquaredErrors, MSE, normalized so that
training on the target data only has an MSE of 1.0. Thus, MSE numbers
less than one signify performance improvements achieved by leveraging
the source data in the algorithm. The results show that our convex
algorithm consistently outperforms the baselines.

\begin{table}[h]
\vskip -.2in
\caption{MSE of non-private convex algorithm against baselines. We
  report relative errors normalized so that training on target only
  has an MSE of $1.0$. The best results are indicated in boldface.}
\begin{center}
\begin{tabular}{@{\hspace{0cm}}llll@{\hspace{0cm}}}
\toprule
Dataset & {KMM} & {DM} & $\cnvx_\infty$\\
\toprule
\small{\tt{Wind}} & $1.009 \pm 0.035$ & $1.009 \pm 0.045$ &  $\mathbf{0.985 \pm 0.019}$ \\
\small{\tt{Airline}} & $2.716 \pm 0.202$ & $1.547 \pm 0.068$  & $\mathbf{0.992} \pm \mathbf{0.012}$ \\
\small{\tt{Gas}} & $0.441 \pm 0.034$ & $0.381 \pm 0.028$  &  $\mathbf{0.342 \pm 0.023}$\\
\small{\tt{News}} & $1.162 \pm 0.044$ & $1.006 \pm 0.009$  & $\mathbf{0.995 \pm 0.019}$\\
\small{\tt{Slice}} & $1.282 \pm 0.076$ & $1.218 \pm 0.130$  & $\mathbf{0.992 \pm 0.050}$\\
\bottomrule
\end{tabular}
\end{center}
\vskip -.1in
\label{tbl:regression-real}
\end{table}

\textbf{Comparison of private and non-private algorithm. Regression.}
Having demonstrated the quality of our non-private algorithm $\cnvx_\infty$,
we study the performance and convergence properties of  $\pcnvx$ as a function of the privacy guarantee
$\epsilon$ and training sample size $n$. To obtain larger training set
sizes, we sample the data with replacement (see Appendix~\ref{app:add} for a more detailed discussion). For all experiments, the
privacy parameter $\delta$ is set to be
$0.01$. 

\begin{figure}[t]
\begin{center}
\begin{tabular}{@{}cc@{}}
\includegraphics[scale=0.48]{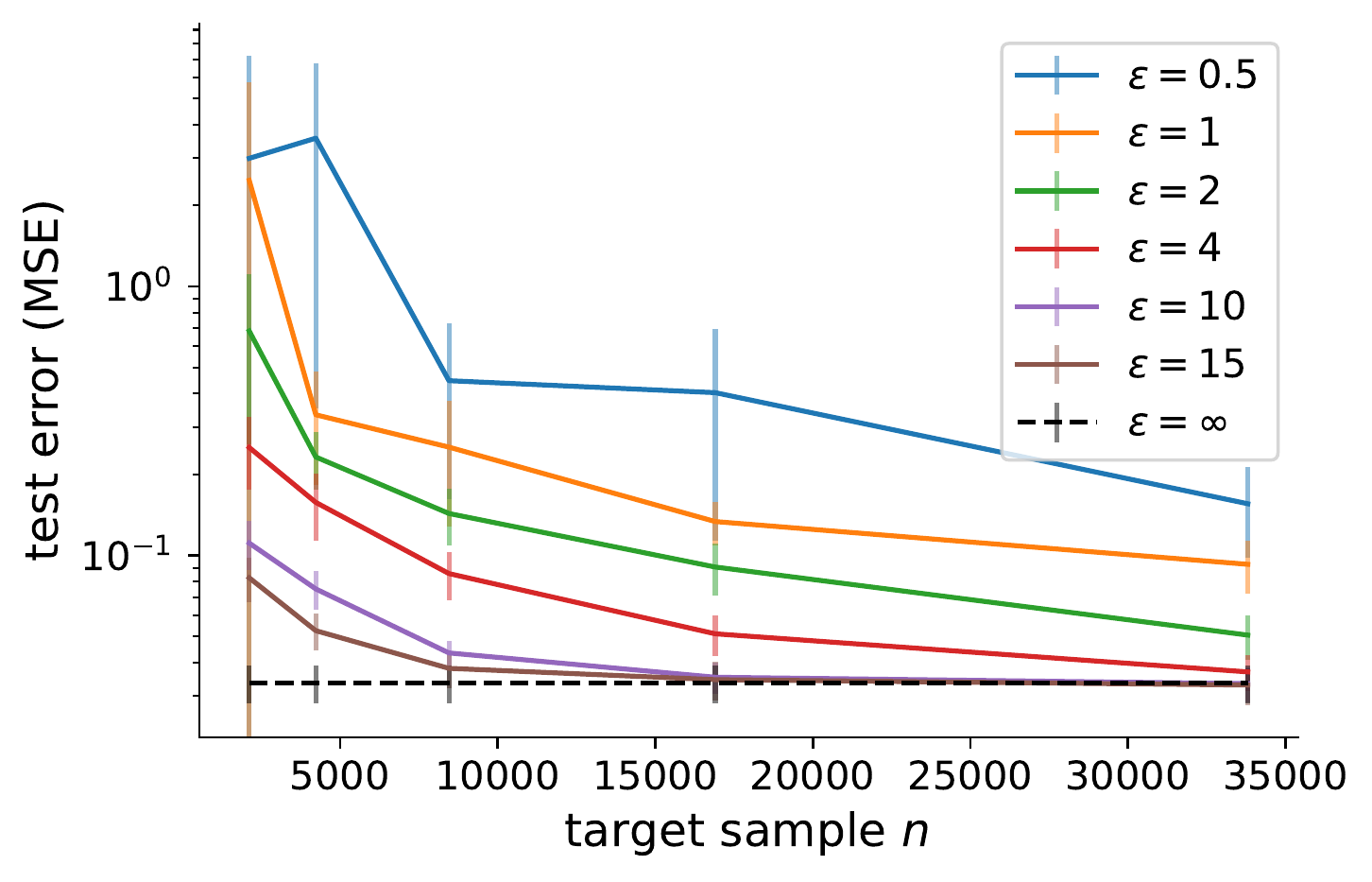}& 
\includegraphics[scale=0.48]{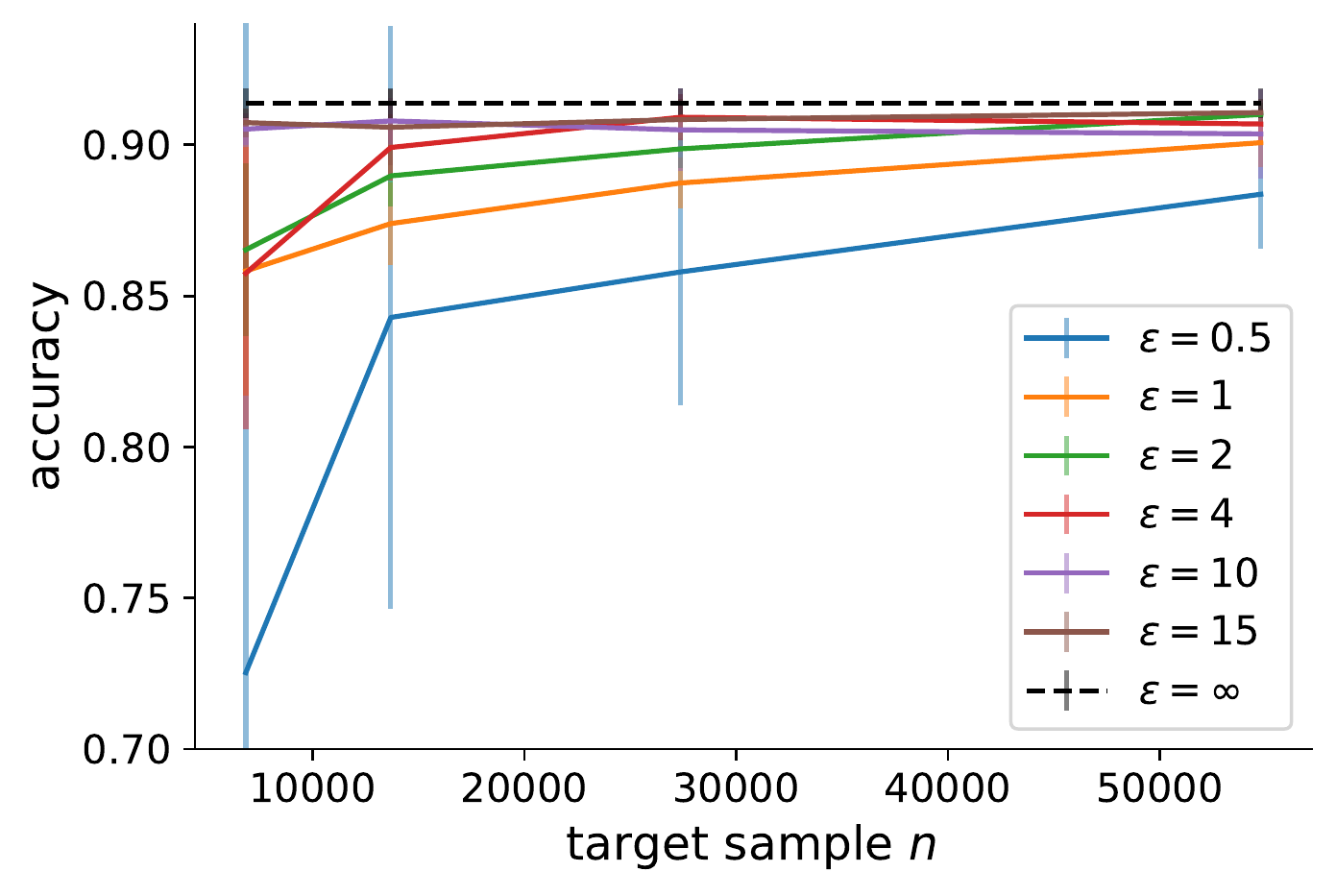}\\[-0.2cm]
{ $\pcnvx$, (MSE): \texttt{Gas}} & { $\pncnvx$, (Accuracy): \texttt{Adult}} \\
\includegraphics[scale=0.48]{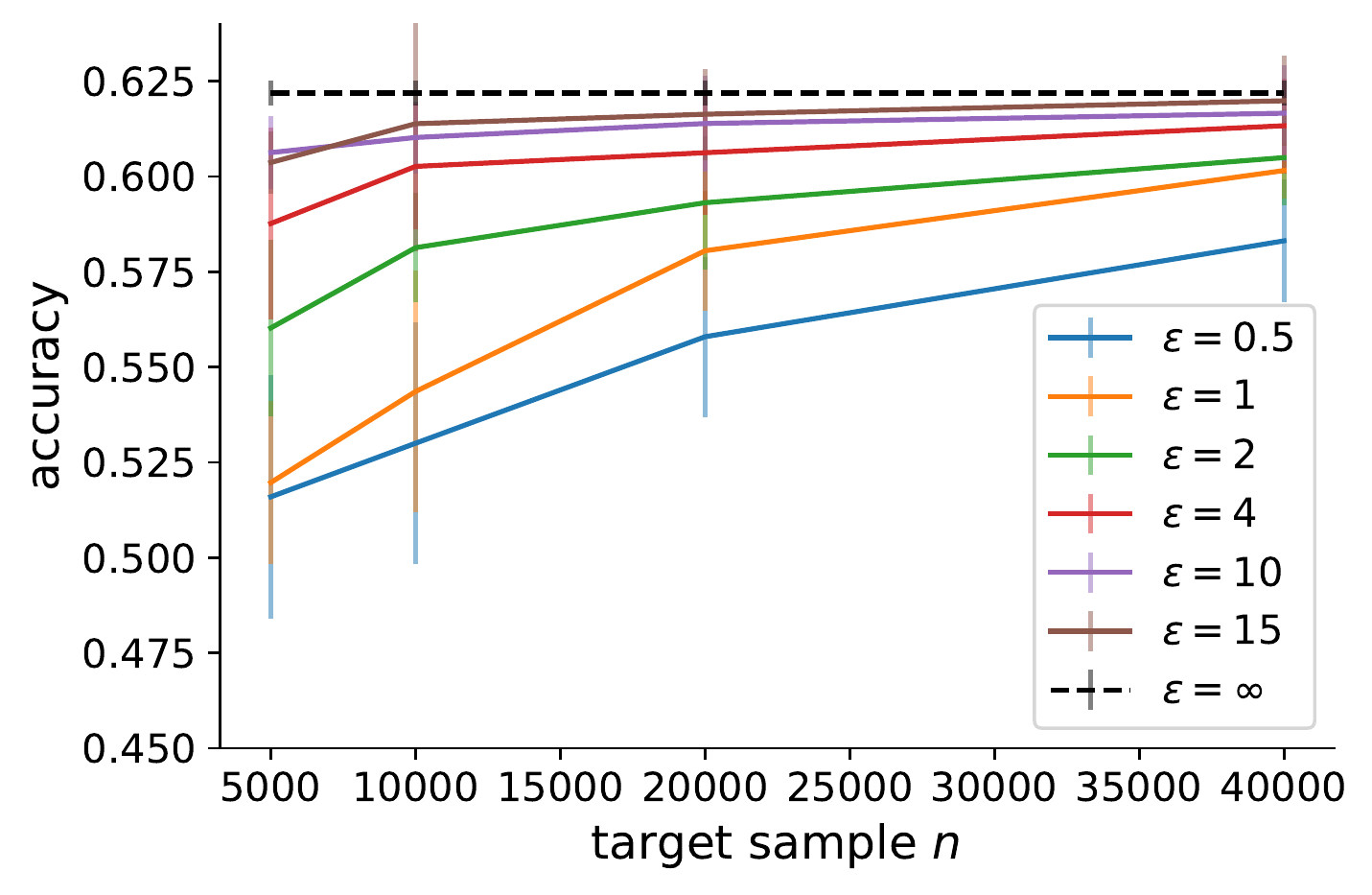}&
\includegraphics[scale=0.48]{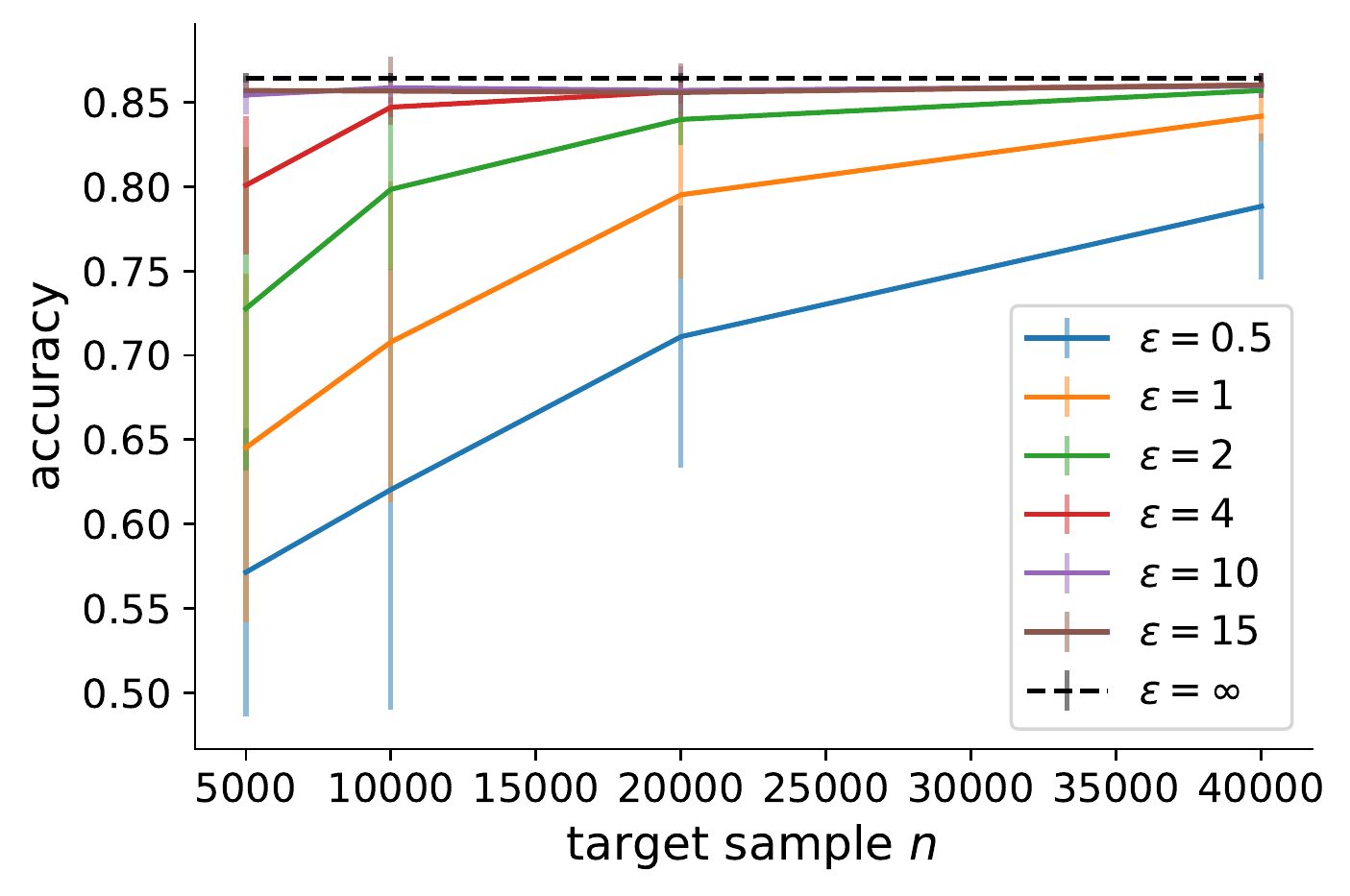}\\[-0.2cm]
{ $\pncnvx$, (Accuracy): \texttt{CIFAR-100}} & { $\pncnvx$, (Accuracy): \texttt{CIFAR-10}} 
\end{tabular}
\vskip -0.1in
\caption{Performance of $\pcnvx$ and $\pncnvx$ on various dataset against number of target samples for various values of $\epsilon$.
\ignore{; Upper right panel: accuracy on \texttt{{Adult}} against 
  target sample size for various values of $\epsilon$; Lower left panel: accuracy on \texttt{{CIFAR-100}} against 
  target sample size for various values of $\epsilon$; 
  Lower right panel: accuracy on \texttt{{CIFAR-10}} against 
  target sample size for various values of $\epsilon$.}}
\label{fig:private}
\end{center}
\vskip -0.4in
\end{figure} 
Figure~\ref{fig:private}(upper left panel) presents the MSE on the
\texttt{{Gas}} dataset over ten runs against the number of target
samples with various values of $\e$ as small as $0.5$. The dashed line
corresponds to the non-private convex solution of $\cnvx_\infty$. We observe
that the performance of our private adaptation algorithm $\pcnvx$
approaches that of the non-private solution ($\e = \infty$), as $\e$
increases and as $n$ increases, thereby verifying our convergence and
theoretical analyses.  For $\e = 10$ or $\e = 15$, the performance of
$\pcnvx$ is close to that of $\cnvx_\infty$ for $n = 10\mathord,000$. For
reference, we also compared  $\cnvx$ with the noisy minibatch SGD
from \citep{bassily2019private}, see Appendix~\ref{app:add},
Figure~\ref{fig:compare}. We verify that our algorithm outperforms minibatch SGD, which only
benefits from the target labeled data.
 \ignore{Our experiments show that for low values of $\e$ and even
   high values of $n$, the private-DM does not out-perform our
   algorithm.}

\ignore{To further illustrate the effectiveness of our
 algorithms} In Table~\ref{tbl-sec:sentiment} we report 
 a series of results
 comparing our private adaptation algorithm $\pcnvx$ to the
 non-private baseline, the DM algorithm \citep{CortesMohri2014}, on
 the multi-domain sentiment analysis dataset
 \citep{blitzer2007biographies}. For $\epsilon\!=\!4$ the performance of $\pcnvx$ is on par with the DM algorithm, and for as low a value as $\epsilon\!=\!10$, it clearly outperforms DM. See Appendix~\ref{app:add} for more details.
 \begin{table}[t]
 \vskip -.2in
  \caption{MSE of $\pcnvx$ for $\epsilon\!=\!4$ and $\epsilon\!=\!10$ against the non-private DM algorithm on the sentiment analysis dataset.}
\begin{center}

\begin{tabular}{@{\hspace{0cm}}llccc@{\hspace{0cm}}}
\toprule
Source & Target &  $\pcnvx$  &  $\pcnvx$ &  DM\\[-0.1cm]
 & & ($\e = 4$) & ($\e = 10$) & \\
\hline
                     & \small{\tt{BOOKS}}& $1.649 \pm 1.909$ & $0.713 \pm 0.625$ &  $0.778 \pm 0.459$ \\
\small{\tt{KITCHEN}} & \small{\tt{DVD}}  & $0.832 \pm 0.640$ & $0.521 \pm 0.354$ &  $1.396 \pm 1.145$ \\
                     & \small{\tt{ELEC}} & $0.790 \pm 0.153$ & $0.555 \pm 0.139$ &  $1.740 \pm 1.483$ \\
\hline
                     & \small{\tt{DVD}}  & $0.929 \pm 0.365$ & $0.637 \pm 0.297$ &  $0.761 \pm 0.562$ \\
\small{\tt{BOOKS}} & \small{\tt{ELEC}}   & $0.708 \pm 0.092$ & $0.485 \pm 0.147$ &  $0.711 \pm 0.320$ \\
                  & \small{\tt{KITCHEN}} & $0.796 \pm 0.126$ & $0.632 \pm 0.181$ &  $0.956 \pm 0.442$ \\
\hline
                     & \small{\tt{ELEC}} & $0.671 \pm 0.120$ & $0.429 \pm 0.189$ &  $0.677 \pm 0.231$ \\
\small{\tt{DVD}} & \small{\tt{KITCHEN}}  & $0.665 \pm 0.197$ & $0.453 \pm 0.189$ &  $1.185 \pm 0.753$ \\
                     & \small{\tt{BOOKS}}& $0.727 \pm 0.288$ & $0.430 \pm 0.143$ &  $0.676 \pm 0.812$ \\
\hline
                   & \small{\tt{KITCHEN}}& $0.671 \pm 0.161$ & $0.439 \pm 0.141$ &  $1.389 \pm 0.755$ \\
\small{\tt{ELEC}} & \small{\tt{BOOKS}}   & $0.743 \pm 0.365$ & $0.351 \pm 0.204$ &  $0.884 \pm 0.639$ \\
                     & \small{\tt{DVD}}  & $0.755 \pm 0.585$ & $0.589 \pm 0.495$ &  $0.843 \pm 0.403$ \\
\bottomrule
\end{tabular}
\end{center}
\vskip -.3in
\label{tbl-sec:sentiment}
\end{table}
Finally, despite the difference in
 scenario, as discussed in Section~\ref{sec:intro}, in Appendix~\ref{app:add}
 we also compare our algorithm to the private DM
 \citep{BassilyMohriSuresh2020} and show that for low
 values of $\e$ and even high values of $n$, the private-DM does not
 outperform our algorithm.

\begin{table}[t]
\vskip -.1in
\caption{Accuracy of non-private non-convex algorithm against baselines. Best results are boldfaced.}
\begin{center}
\vskip -.0in
\begin{tabular}{@{\hspace{0cm}}lcccc@{\hspace{0cm}}}
\toprule
Dataset & Train source & Train target & {KMM} & $\ncnvx_\infty$\\
\toprule
\small{\tt{Adult}}  & $88.44 \pm 0.61$ & $91.09 \pm 0.54$ & $74.62 \pm 0.64$ &  $\mathbf{91.38 \pm 0.49}$ \\
\small{\tt{German}} & $74.16 \pm 1.64$ & $76.40 \pm 1.17$ & $74.43 \pm 1.65$ &  $\mathbf{77.96 \pm 1.31}$ \\
\small{\tt{Accent}} & $51.60 \pm 1.09$ & $92.27 \pm 1.10$ & $52.18 \pm 2.27$ &  $\mathbf{93.95 \pm 0.78}$ \\
\small{\tt{CIFAR-100}} & $58.50 \pm 0.30$ & $59.51 \pm 0.38$ & $58.27 \pm 0.28$ &  $\mathbf{62.19 \pm 0.32}$ \\
\small{\tt{CIFAR-10}} & $83.18 \pm 0.47$ & $85.18 \pm 0.24$ & $83.57 \pm 0.45$ &  $\mathbf{86.44 \pm 0.29}$ \\
\small{\tt{SVHN}} & $84.16 \pm 1.21$ & $86.28 \pm 0.44$ & $84.13 \pm 1.21$ &  $\mathbf{87.34 \pm 0.46}$ \\
\small{\tt{ImageNet}} & $71.20 \pm 1.75$ & $85.70 \pm 0.29$ & $71.25 \pm 1.77$ &  $\mathbf{86.84 \pm 0.38}$ \\
\bottomrule
\end{tabular}
\end{center}
\vskip -.2in
\label{tbl:classification}
\end{table}


\textbf{Non-private comparison to baselines. Classification.} For the general non-convex setting, we experiment with logistic regression classifiers and consider seven datasets. Three
 datasets are from the UCI machine learning repository
\citep{Dua:2019}, the \texttt{{Adult}}, \texttt{{German}} and
\texttt{{Accent}} (see Appendix~\ref{app:add} for the detailed
information about these datasets).
We also convert the \texttt{{CIFAR-100}}, \texttt{{CIFAR-10}} \citep{Krizhevsky09learningmultiple}, \texttt{{SVHN}} \citep{Netzer2011} and \texttt{{ImageNet}} \citep{deng2009imagenet} datasets into domain adaptation tasks by establishing two distinct sampling methods for selecting the source and target data (see Appendix~\ref{app:add-nconvex} for more details).
We first compare our non-private non-convex algorithm $\ncnvx_\infty$ with
the KMM algorithm \citep{HuangSmolaGrettonBorgwardtScholkopf2006}.  We
report in Table~\ref{tbl:classification} the mean and standard
deviation of the accuracy on the test set over $10$ random splits of
the target training (70\%), validation (20\%) and test sets (10\%). The results show that
our non-convex algorithm consistently outperforms the baseline.

\ignore{
\begin{figure}[t]
\vskip -.1in
\begin{center}
\hspace{-.25cm}
\includegraphics[scale=0.35]{figs/private_classification.pdf}
\vskip -0.15in
\label{fig:private-classification}
\caption{Accuracy on \texttt{{Adult}} against 
  target sample size for various values of $\epsilon$.}
\end{center}
\vskip -0.3in
\end{figure} 
}

\textbf{Comparison of private and non-private
  algorithm. Classification.} We then study the performance and
convergence properties of $\pncnvx$
as a function of the privacy guarantee $\epsilon$ and training sample
size $n$.  Figure~\ref{fig:private} presents the
accuracy on the \texttt{{Adult}}, the \texttt{{CIFAR-100}}, and the \texttt{{CIFAR-10}} datasets  over ten runs against the
number of target samples with various values of $\e$. 
The dashed line corresponds to the non-private  solution of
$\ncnvx_\infty$. We observe that the performance of our private algorithm
 approaches that of the non-private solution ($\e = \infty$),
as $\e$ increases and as $n$ increases. For $\e = 10$ or $\e = 15$,
the performance is close to the dashed line for $n =
10\mathord,000$.

\section{Conclusion}

We presented two $(\e, \delta)$-differentially private algorithms for
supervised adaptation based on strong theoretical learning
guarantees. Our experimental results suggest that these algorithms can
be effective in applications and scenarios where domain adaptation can
be successful. Our proof and algorithmic techniques, such as our
reparameterization, are likely to be useful in the analysis of other
related problems (see Appendix~\ref{app:extensions}) including for the
reverse problem of adaptation from a private source to a public target
domain.

\acks{
RB's research is supported by NSF CAREER Award 2144532, NSF Award 2112471, and Google Faculty Research Award.}

\newpage
\bibliography{dpbeff}

\newpage
\appendix

\renewcommand{\contentsname}{Contents of Appendix}
\tableofcontents
\addtocontents{toc}{\protect\setcounter{tocdepth}{4}} 
\clearpage


\section{Related work}
\label{app:relatedwork}

There is a very broad literature on domain
adaptation that we cannot survey in detail within this limited
space. Thus, we refer the reader to surveys such as
\citep{pan2009,wang2018deep, li2012literature} for a relatively
comprehensive overview and briefly discuss approaches that are the
most relevant to our study.

Our analysis admits a strong theoretical component since we seek a
differentially private algorithm with theoretical learning and privacy
guarantees.  We benefit from several past publications already
referenced that have given a theoretical analysis of adaptation using
the notion of discrepancy. There are several other related
publications using the notion of discrepancy for a PAC-Bayesian
analysis \citep{GermainHabrardLavioletteMorvant2013} or active
learning \citep{DeMathelinMougeotVayatis2021}.  There are also other
interesting theoretical analyses of adaptation such as
\citep{HannekeKpotufe2019}, which deals with the notions of super
transfer or localization; these notions admit some connections with
that of (local) discrepancy
\citep{CortesMohriMunozMedina2019,ZhangLongWangJordan2020}.

In the privacy literature, several interesting algorithms have been
given with formal differentially private learning guarantees, assuming
access to public data \citep{chaudhuri2011sample,beimel2013private,
  bassily2018model,ABM19,nandi2020privately,bassily2020private}.  But
these results cannot be used in the adaptation scenario we consider
since they assume that the source and target domains coincide. A
differentially private correlation alignment approach for domain
adaptation was given by \citet{jin2021differentially} for a distinct
scenario where both source and target data are private. More recently,
\citet{wang2020deep} described algorithms for deep domain adaptation
for classification, but the authors do not provide theoretical
guarantees for these algorithms.

The problem of private density estimation using a small amounto public
data has been studied in several recent
publications. \citet{BieKamathSinghal2022} studied the problem of
estimating a $d$-dimensional Gaussian distribution, under the
assumption of access to a Gaussian that may have vanishing similarity
in total variation distance with the underlying Gaussian of the
private data.  \citet{BenDavidBieCanonneKamathSinghal2023} studied the
problem of private distribution learning with access to public data.
They related private density estimation to sample compression schemes
for distributions.  They approximately recovered previous results on
Gaussians, and presented other results such as sample complexity upper
bounds for arbitrary $k$-mixtures of Gaussians.
\citet{TramerKamathCarlini2023} presented a general discussion of the
question of private learning with large-scale public pretraining.

As discussed in the main text, the most closely related work to ours
is the recent study of \citet{BassilyMohriSuresh2020}, which considers
a similar adaptation scenario with a public source domain and a
private target domain and which also gives private algorithms with
theoretical guarantees.
However, that work can be distinguished from ours in several aspects.
First, the authors consider a purely unsupervised adaptation scenario
where no labeled sample is available from the target domain, while we
consider a supervised scenario. Our study and algorithms can be
extended to the unsupervised or weakly supervised setting using the
notion of \emph{unlabeled discrepancy}
\citep{MansourMohriRostamizadeh2009}, by leveraging upper bounds
on \emph{labeled discrepancy} in terms of unlabeled discrepancy
as in \citep{AwasthiCortesMohri2024}.
Second, the learning guarantees of our private algorithms benefit from
the recent optimization of \citet{AwasthiCortesMohri2024}, which they
show are theoretically stronger than those of the DM solution of
\citet{CortesMohri2014} adopted by \citet{BassilyMohriSuresh2020}.
Similarly, in our experiments, our convex optimization solution
outperforms the DM algorithm. Note that the empirical study in
\citep{BassilyMohriSuresh2020} is limited to a single specific
artificial dataset, while we present empirical results with several
non-artificial datasets.
Third, our private adaptation algorithms include solutions both for
regression and classification, while those of
\citet{BassilyMohriSuresh2020} are specifically given for regression
with the squared loss.

\section{General analysis of supervised adaptation}
\label{app:learningbound}

In this section, we describe the general learning bound of
\citet{AwasthiCortesMohri2024}, for which we give a self-contained and
concise proof. This bound holds for any sample reweighting method in
domain adaptation.  This includes as special cases a number of methods
presented for adaptation in the past, including KMM
\citep{HuangSmolaGrettonBorgwardtScholkopf2006}, KLIEP
\citep{SugiyamaEtAl2007a}, importance weighting with bounded weights
\citep{CortesMansourMohri2010}, discrepancy minimization
\citep{CortesMohri2014}, gapBoost algorithm
\citep{WangMendezCaiEaton2019}, and many others. Next, we discuss
the implications of this bound and the related optimization
problem.

\subsection{General learning bound}

The learning bound draws on a natural extension of the notion of
Rademacher complexity to the weighted case, \emph{$\sfq$-weighted
Rademacher complexity}, which is denoted by $\Rad_{\sfq}(\ell \circ
\sH)$ and defined by
\begin{equation}
\label{eq:q-weighted-Rad}
\Rad_{\sfq}(\ell \circ \sH) 
= \E_{S^\pub, S^\prv, \bsigma} \bracket*{\sup_{h \in \sH}
\sum_{i = 1}^{m + n} \sigma_i \sfq_i \ell(h(x_i), y_i)},
\end{equation}
where $\sigma_i$s are independent random variables 
uniformly distributed over $\mathset{-1, +1}$.
The bound holds uniformly over both the choice of a hypothesis $h$
selected in $\sH$ and that of a weight vector $\sfq$ in the open
$\norm{\, \cdot \,}_1$-ball of radius one centered in $\sfp^0$,
$\sfB_1(\sfp^0, 1) = \curl*{\sfq \colon \| \sfq - \sfp^0 \|_1 < 1}$,
where $\sfp^0$ can be interpreted as a \emph{reference} or
\emph{ideal} reweighting choice.

\begin{restatable}{theorem}{LearningBound}
\label{th:learningbound}
For any $\eta > 0$, with probability at least $1 - \eta$ over the draw
of a sample $S^\pub$ of size $m$ from $\sD^\pub$ and a sample $S^\prv$
of size $n$ from $\sD^\prv$, the following holds for all $h \in \sH$
and $\sfq \in \sfB_1(\sfp^0, 1)$:
\ifdim\columnwidth=\textwidth
{
\begin{align*}
\cL(\sD^\prv, h)
& \leq \sum_{i = 1}^{m + n} \sfq_i \ell(h(x_i), y_i)
+ \ov \sfq^\pub \dis\paren*{\sD^\prv, \sD^\pub}
+ 2 \Rad_{\sfq}(\ell \circ \sH)
+ 6B \norm{\sfq - \sfp^0}_1 \\
& \quad + B \bracket*{\norm{\sfq}_2 + 2 \norm{\sfq - \sfp^0}_1}
\bracket*{ \sqrt{\log \log_2 \tfrac{2}{1 - \norm{\sfq - \sfp^0}_1}}
  + \sqrt{\tfrac{\log (2/\eta)}{2}} }.
\end{align*}
}
\else
{
\begin{align*}
& \cL(\sD^\prv, h)\\
& \leq \sum_{i = 1}^{m + n} \sfq_i \ell(h(x_i), y_i)
+ \ov \sfq^\pub \dis\paren*{\sD^\prv, \sD^\pub}\\
& \quad + 2 \Rad_{\sfq}(\ell \circ \sH)
+ 6B \norm{\sfq - \sfp^0}_1 \\
& \quad + B \bracket*{\norm{\sfq}_2 + 2\norm{\sfq - \sfp^0}_1}
\bracket*{\! \sqrt{\log \log_2 \tfrac{2}{1 - \norm{\sfq - \sfp^0}_1}}
  \!+\! \sqrt{\! \tfrac{\log \frac{2}{\eta}}{2}} }.
\end{align*}
}
\fi
\end{restatable}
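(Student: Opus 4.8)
The plan is to peel off the discrepancy contribution through an exact algebraic identity, control the remaining weighted empirical process for a fixed $\sfq$ by combining symmetrization with a bounded–differences concentration, and finally upgrade the fixed-$\sfq$ estimate to one that is uniform over the ball $\sfB_1(\sfp^0,1)$ by a stratification over $\norm{\sfq-\sfp^0}_1$. Write the weighted population loss $\cL_\sfq(h)=\ov{\sfq}^{\pub}\cL(\sD^\pub,h)+\ov{\sfq}^{\prv}\cL(\sD^\prv,h)$ and the weighted empirical loss $\wh\cL_\sfq(h)=\sum_{i=1}^{m+n}\sfq_i\ell(h(x_i),y_i)$. Adding and subtracting $\ov{\sfq}^{\pub}\cL(\sD^\prv,h)$ gives the identity
\[
\cL(\sD^\prv,h)-\cL_\sfq(h)
=\paren*{1-\norm{\sfq}_1}\cL(\sD^\prv,h)
+\ov{\sfq}^{\pub}\bracket*{\cL(\sD^\prv,h)-\cL(\sD^\pub,h)}.
\]
The second term is at most $\ov{\sfq}^{\pub}\dis(\sD^\prv,\sD^\pub)$ by definition of the labeled discrepancy, and since $\norm{\sfp^0}_1=1$ and $\ell\in[0,B]$, the first is at most $B\,\abs*{\norm{\sfp^0}_1-\norm{\sfq}_1}\le B\norm{\sfq-\sfp^0}_1$. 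This already yields the discrepancy term and one unit of the $6B\norm{\sfq-\sfp^0}_1$ slack, so it remains to control $\cL_\sfq(h)-\wh\cL_\sfq(h)$.

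Second, I would fix $\sfq$ and study $\Phi_\sfq(S)=\sup_{h\in\sH}\bracket*{\cL_\sfq(h)-\wh\cL_\sfq(h)}$. Only the empirical term depends on $S$, and replacing one point changes $\wh\cL_\sfq(h)$ by at most $\sfq_jB$; hence $\Phi_\sfq$ has bounded differences with constants $\sfq_jB$, and McDiarmid's inequality gives $\Phi_\sfq(S)\le\E[\Phi_\sfq(S)]+B\norm{\sfq}_2\sqrt{\log(1/\eta')/2}$ with probability $1-\eta'$, using $\sqrt{\sum_j(\sfq_jB)^2}=B\norm{\sfq}_2$. A standard symmetrization step, valid because $\cL_\sfq(h)$ is exactly the expectation of the independent (though not identically distributed) summands $\sfq_i\ell(h(x_i),y_i)$, bounds $\E[\Phi_\sfq(S)]\le 2\Rad_\sfq(\ell\circ\sH)$. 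This delivers the $2\Rad_\sfq$ and $B\norm{\sfq}_2$-confidence terms for a single weight vector.

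Third, and this is the crux, I would make the estimate uniform over $\sfB_1(\sfp^0,1)$. The enabling structure is that $\cL_\sfq(h)$, $\wh\cL_\sfq(h)$ and $\Rad_\sfq$ are all linear, hence $B$-Lipschitz in $\norm{\,\cdot\,}_1$, in $\sfq$ (each being a $\sfq$-weighted sum of quantities bounded by $B$), and $\sfq\mapsto\norm{\sfq}_2$ is $1$-Lipschitz under $\norm{\,\cdot\,}_2\le\norm{\,\cdot\,}_1$. Consequently a deviation controlled at a reference transfers to a nearby $\sfq$ at the additive cost of a $B\norm{\sfq-\sfp^0}_1$ term, while $\Rad_{\sfp^0}\le\Rad_\sfq+B\norm{\sfq-\sfp^0}_1$ and $\norm{\sfp^0}_2\le\norm{\sfq}_2+\norm{\sfq-\sfp^0}_1$ convert reference quantities into the stated $\sfq$-dependent ones; these Lipschitz slacks are exactly what assemble into the $6B\norm{\sfq-\sfp^0}_1$ term and the extra $2\norm{\sfq-\sfp^0}_1$ inside the bracket. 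To obtain the precise confidence dependence on the distance to the boundary, I would stratify the ball into shells $\norm{\sfq-\sfp^0}_1\in[1-2^{-(k-1)},1-2^{-k})$, $k\ge1$, and allot failure probability $\eta_k\propto\eta/k^2$ so that $\sum_k\eta_k\le\eta$; on shell $k$ one has $k\asymp\log_2\tfrac{1}{1-\norm{\sfq-\sfp^0}_1}$, whence $\sqrt{\log(1/\eta_k)/2}$ splits into $\sqrt{\log(2/\eta)/2}+\sqrt{\log\log_2\tfrac{2}{1-\norm{\sfq-\sfp^0}_1}}$, producing the iterated logarithm. Assembling the discrepancy term, $2\Rad_\sfq$, the confidence term scaled by $B\paren*{\norm{\sfq}_2+2\norm{\sfq-\sfp^0}_1}$, and the $6B\norm{\sfq-\sfp^0}_1$ slack yields the claim.

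The principal obstacle is precisely this uniformity step: a naive net over the $(m+n)$-dimensional ball would inject a spurious $\sqrt{m+n}$ factor through the union bound, which the stated dimension-free bound cannot tolerate. The resolution I would exploit is that the only sample-dependent object, $\sup_{h}\bracket*{\cL_\sfq(h)-\wh\cL_\sfq(h)}$, is deterministically $B$-Lipschitz in $\sfq$ thanks to the linearity of the weighted losses, so uniformity requires only reference-point concentration together with the deterministic Lipschitz transfer and the scalar stratification over the radius, none of which sees the ambient dimension. The delicate bookkeeping is verifying that the bounded-differences constants, the symmetrization bound, and the $\Rad_{\sfp^0}\to\Rad_\sfq$ and $\norm{\sfp^0}_2\to\norm{\sfq}_2$ conversions compose so that the final slack coefficients match the stated $6B$ and the factor $2$ inside the bracket.
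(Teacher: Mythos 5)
You take essentially the same route as the paper's proof: the identical algebraic decomposition isolating $(\norm{\sfp^0}_1 - \norm{\sfq}_1)\,\cL(\sD^\prv, h)$ and $\ov \sfq^\pub \paren*{\cL(\sD^\prv, h) - \cL(\sD^\pub, h)}$, then McDiarmid with bounded-difference constants $\sfq_i B$ combined with weighted symmetrization to get $2\Rad_\sfq(\ell \circ \sH) + B\norm{\sfq}_2\sqrt{\log(1/\eta')/2}$ for fixed $\sfq$, and finally uniformity via a single representative per radius shell $\norm{\sfq^k - \sfp^0}_1 = 1 - 2^{-k}$ together with the deterministic $B$-Lipschitz transfer (in $\norm{\,\cdot\,}_1$) of $\cL_S(\sfq, h)$, $\ov\sfq^\pub$, $\Rad_\sfq$, and $\norm{\sfq}_2$, each costing $2\norm{\sfq - \sfp^0}_1$. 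The one place you deviate — allotting confidence $\eta_k \propto \eta/k^2$ rather than the paper's written $\eta_k = \eta/2^{k+1}$ — is in fact the bookkeeping that actually produces the stated $\sqrt{\log\log_2 \tfrac{2}{1 - \norm{\sfq - \sfp^0}_1}}$ term (the geometric allocation would yield a single logarithm, $\sqrt{\tfrac{1}{2}\log\tfrac{2}{1-\norm{\sfq-\sfp^0}_1}}$, so your polynomial allocation is the choice consistent with the theorem as stated).
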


\begin{proof}
The proof of theorem consists of first deriving a $\sfq$-weighted
Rademacher complexity bound for a fixed reweighting $\sfq$, using the
fact that the expectation of the empirical term is then \[\paren*{\ov
  \sfq^\pub \cL(\sD^\pub, h) + \ov \sfq^\prv \cL(\sD^\prv, h)}.\] Next,
the difference of $\sfq^\prv \cL(\sD^\prv, h)$ and this term is
analyzed in terms of the discrepancy term $\ov \sfq^\pub
\dis\paren*{\sD^\prv, \sD^\pub}$ and then the bound is extended to
hold uniformly over $\sfB_1(\sfp^0, 1)$, using a technique similar to
that of deriving uniform margin bounds, see for example
\citep{MohriRostamizadehTalwalkar2018}[Chapter~5].
We will use $S$ to refer to the full sample: $S = (S^\pub, S^\prv)$
and will use the shorthand $\cL_S(\sfq, h) = \sum_{i = 1}^{m + n}
\sfq_i \ell(h(x_i), y_i)$ for the empirical term.

Fix $\sfq \in [0, 1]^{m + n}$. The expectation of $\cL_S(\sfq, h)$
over the draw of $(S^\pub, S^\prv) \sim (\sD^\pub)^m \times
(\sD^\pub)^n$ is then given by
  \begin{align*}
  \E_S[\cL_S(\sfq, h)]
  & = \sum_{i = 1}^{m} \sfq_i \E_{S^\pub}[\ell(h(x_i), y_i)]
  + \sum_{i = m + 1}^{m + n} \sfq_i \E_{S^\prv}[\ell(h(x_i), y_i)]\\
  & = \sum_{i = 1}^{m} \sfq_i \cL(\sD^\pub, h)
  + \sum_{i = m + 1}^{m + n} \sfq_i \cL(\sD^\prv, h)\\
  & = \ov \sfq^\pub \cL(\sD^\pub, h) + \ov \sfq^\prv \cL(\sD^\prv, h).
  \end{align*}
  Consider $\Psi(S) = \sup_{h \in \sH} \curl*{\ov \sfq^\pub
    \cL(\sD^\pub, h) + \ov \sfq^\prv \cL(\sD^\prv, h) - \cL_S(\sfq,
    h)}$. Changing point $x_i$ to $x'_i$ affects $\Psi(S)$ at most by
  $\sfq_i B$, since the loss is bounded by $B$. It is also not hard to
  see that the standard symmetrization argument (see
  \citep{MohriRostamizadehTalwalkar2018}) can be extended to the
  weighted case and that $\E_S[\Psi(S)] \le 2 \Rad_\sfq(\ell \circ
  \sH)$. Thus, by McDiarmid's inequality, for any $\eta > 0$, with
  probability at least $1 - \eta$, the following holds:
  \begin{equation}
    \label{eq:mcdiarmid}
  \ov \sfq^\pub \cL(\sD^\pub, h) + \ov \sfq^\prv \cL(\sD^\prv, h) - \cL_S(\sfq, h)
  \leq 2 \Rad_\sfq(\ell \circ \sH) + B \sqrt{\frac{\log \frac{1}{\eta}}{2m}}.
  \end{equation}
  Now, we can also analyze the difference of $\cL(\sD^\prv, h)$ and
  $\ov \sfq^\pub \cL(\sD^\pub, h) + \ov \sfq^\prv \cL(\sD^\prv, h)$ as
  follows:
  \begin{align}
    & \cL(\sD^\prv, h) - \paren*{\ov \sfq^\pub \cL(\sD^\pub, h) + \ov \sfq^\prv \cL(\sD^\prv, h)} \nonumber\\
    & = (1 - \ov \sfq^\prv) \cL(\sD^\prv, h) - \ov \sfq^\pub \cL(\sD^\pub, h) \nonumber\\
    & = (1 - \ov \sfq^\prv - \ov \sfq^\pub) \cL(\sD^\prv, h) + \ov \sfq^\pub ( \cL(\sD^\prv, h) - \cL(\sD^\pub, h)) \nonumber\\
    & = (\norm{\sfp^0}_1 - \norm{\sfq}_1) \cL(\sD^\prv, h) + \ov \sfq^\pub ( \cL(\sD^\prv, h) - \cL(\sD^\pub, h)) \nonumber\\
    \label{eq:lossinequality}
    & \leq B \norm{\sfp^0 - \sfq}_1 + \ov \sfq^\pub \dis(\sD^\prv, \sD^\pub).
  \end{align}
Combining \eqref{eq:mcdiarmid} and \eqref{eq:lossinequality} yields the following high-probability
inequality for a fixed $\sfq$:
  \begin{equation}
    \label{eq:fixedq}
    \cL(\sD^\prv, h) \leq
    \cL_S(\sfq, h) +
  2 B \norm{\sfp^0 - \sfq}_1 + \ov \sfq^\pub \dis(\sD^\prv, \sD^\pub) + \Rad_\sfq(\ell \circ \sH) + B \sqrt{\frac{\log \frac{1}{\eta}}{2m}}.
  \end{equation}
  Consider a sequence of weight vectors $\sfq_k \in [0, 1]^{m + n}$
  and a sequence of confidence weights $\eta_k = \frac{\eta}{2^{k
      + 1}}$. Inequality \eqref{eq:fixedq}, with $\sfq$ replaced by
  $\sfq^k$ and $\eta$ replaced by $\eta_k$, holds for each $k \geq
  0$, with probability $1 - \eta_k$. Thus, by the union bound, since
  $\sum_{k = 0}^{+\infty} \frac{\eta}{2^{k + 1}} = \eta$, with probability
  $1 - \eta$, it holds for all $k \geq 0$.
  \begin{equation}
  \label{eq:unifk}
  \cL(\sD^\prv, h)  \leq 
  \cL_S(\sfq^k, h) +
  2 B \norm{\sfp^0 - \sfq^k}_1 + \ov \sfq^{k, \pub} \dis(\sD^\prv, \sD^\pub)
  + \Rad_{\sfq^k}(\ell \circ \sH) + B \sqrt{\frac{\log \frac{1}{\eta}}{2m}}.
  \end{equation}
We can choose $\sfq^k$ such that $\| \sfq^k - \sfp^0 \|_1 = 1 -
\frac{1}{2^k}$.  Then, for any $\sfq \in \sfB(\sfp^0, 1)$, there
exists $k \geq 0$ such that $\| \sfq^k - \sfp^0 \|_1 \leq \| \sfq -
\sfp^0 \|_1 < \| \sfq^{k + 1} - \sfp^0 \|_1$ and thus such that
\begin{align*}
\sqrt{2 \log (k + 1)} 
= \sqrt{2 \log \log_2 \frac{2}{1 - \norm{\sfq^k - \sfp^0}_1}}
& \leq \sqrt{2 \log \log_2 \frac{2}{1 - \norm{\sfq - \sfp^0}_1}}.
\end{align*}
Furthermore, for that $k$, the following inequalities hold:
\begin{align*}
\cL_S(\sfq^k, h) 
& \leq \sum_{i = 1}^{m + n} \sfq_i \ell(h(x_i), y_i) + B \norm{\sfq^k - \sfq}_1
\leq \cL_S(\sfq, h) + 2 B \norm{\sfp^0 - \sfq}_1\\
\ov \sfq^{k, \pub}
& \leq \ov \sfq^\pub + \norm{\sfq^k - \sfq}_1
\leq \ov \sfq^\pub + 2 \norm{\sfp^0 - \sfq}_1\\
\Rad_{\sfq^k}(\ell \circ \sH)
  & \leq \Rad_{\sfq}(\ell \circ \sH) + B \norm{\sfq^k -  \sfq}_1
  \leq \Rad_{\sfq}(\ell \circ \sH) + 2 B \norm{\sfq - \sfp^0}_1,\\
\norm{\sfq^k}_2 
 & \leq \norm{\sfq}_2 + \norm{\sfq^k - \sfq}_2
 \leq \norm{\sfq}_2 + \norm{\sfq^k - \sfq}_1
 \leq \norm{\sfq}_2 + 2 \norm{\sfq - \sfp^0}_1.
\end{align*}
Plugging in these inequalities in \eqref{eq:unifk} completes the proof.
\end{proof}

\subsection{Optimization problem}

The bound suggests the following to achieve a good generalization
error in adaptation: ensure a small $\sfq$-empirical loss (first
term), but not at the price of a too sparse weight vector, which would
result in a larger $\norm{\sfq}_2$ (fifth term); allocate a smaller
total weight to public points when the discrepancy is larger (second
term); limit the $\sfq$-weighted complexity of the hypothesis set
$\sH$ combined with the loss function $\ell$ (third term); and
ensure the closeness of $\sfq$ to the reference weight vector
$\sfp^0$ (fourth and fifth terms).\ignore{ Note that, for uniform
  weights, we have $\norm{\sfq}_2 = \frac{1}{\sqrt{m + n}}$, as in
  standard generalization bounds.}

The joint optimization problem \eqref{opt-origin} is directly based on
minimizing the right-hand side of the inequality over the choice of
both $h \in \sH$ and $\sfq \in \sfB_1(\sfp^0, 1)$.
By McDiarmid's inequality, the discrepancy term $\dis\paren*{\sD^\prv,
  \sD^\pub}$ can be replaced by its estimate $\h d$
\eqref{eq:dis_estimate} from finite sample modulo a term in
$O(\sqrt{\frac{m + n}{mn}})$.
Instead of the supremum over the full family $\sH$, one can also use a
local discrepancy \cite{CortesMohriMunozMedina2019,
  DeMathelinMougeotVayatis2021,ZhangLiuLongJordan2019,
  ZhangLongWangJordan2020} and restrict oneself to a ball around the
empirical minimizer of the private loss of radius $O(1/\sqrt{n})$.
Using Talagrand's inequality \citep{LedouxTalagrand1991} and the
straightforward observation that for any $i$, $x \mapsto \sfq_i x$ is
$\norm{\sfq}_\infty$-Lipschitz, the weighted Rademacher complexity
bound can be upper bounded by $\norm{\sfq}_\infty (m + n) \Rad_{m +
  n}(\ell \circ \sH)$, where $\Rad_{m + n}(\ell \circ \sH)$ is the
standard (unweighted) Rademacher complexity of the family of loss
functions over the hypothesis set $\sH$. For uniform weights, this
is an equality.
Thus, using the upper bounds just discussed and replacing constants
with hyperparameters, minimizing the right-hand side of the learning
bounds of Theorem~\ref{th:learningbound} can be formulated as the
joint optimization problem \eqref{opt-origin}.


\section{Proofs of Section~\ref{sec:alg_linear_predictors}}
 
\subsection{Proof of Lemma~\ref{lem:_properties}}
\label{app:lemma-convex}
 
\PropertiesOfConvexObjective*

\begin{proof}
  First observe that the following inequalities hold:
  \begin{align*}
    \norm{\nabla_w \sfF(w, \sfu)}_2
    & \leq \sum_{i = 1}^{m + n}\frac{\norm{\nabla_w \sql(w, (x_i, y_i))}_2}{\sfu_i}
    \leq G\sum_{i = 1}^{n+m}\frac{1}{\sfu_i}
    \leq G,
  \end{align*}
  where the second inequality follows from the $G$-Lipschitzness of
  the loss, and the third from the constraints on $\sfu$:
  $\frac{1}{\sfu_i} \leq \frac{\alpha}{m}, \forall i \in [m]$ and
  $\frac{1}{\sfu_i} \leq \frac{1 - \alpha}{n}, \forall i \in [m + 1,
    m + n]$.
  
  Next, note that we have
  \begin{align*}
    \abs*{\frac{\partial \sfF(w, \sfu)}{\partial \sfu^\pub_i}}
    & \leq \frac{2 B}{\paren*{\sfu^\pub_i}^2} + \kappa_1\frac{\alpha^2}{m} + \kappa_2\frac{\frac{1}{\paren*{\sfu^\pub_i}^3}}{\sqrt{\sum_{i = 1}^{m + n}\frac{1}{\paren*{\sfu^\pub_i}^2}}} 
    + \kappa_\infty \frac{1\paren*{i\in \argmin_{j\in [m + n]}\sfu_j}}{\paren*{\sfu^\pub_i}^2}\\
    & \leq \frac{2 B}{\paren*{\sfu^\pub_i}^2} + \kappa_1\frac{\alpha^2}{m} + \frac{\kappa_2}{\paren*{\sfu^\pub_i}^2} + \frac{\kappa_\infty}{\paren*{\sfu^\pub_i}^2} \\
    & \leq \frac{\alpha^2}{m^2}\paren*{2B + \kappa_1+ \kappa_2 + \kappa_\infty},
  \end{align*}
  where the first inequality follows from the fact that the loss is
  uniformly bounded by $B$, and hence $\sql(w, (x_i, y_i)) + \dprv \,
  1_{i \leq m}\leq 2B$. The remaining steps follow straightforwardly
  from the constraints on $\sfu^\pub$.  Thus, we have
  $\norm{\nabla_{\sfu^\pub}\sfF(w, \sfu^\pub)}_2 \leq
  \frac{\alpha^2}{m^{3/2}}\paren*{2B + \kappa_1+ \kappa_2 +
    \kappa_\infty}$.
  
  Similarly, we have $\abs*{\frac{\partial \sfF(w, \sfu)}{\partial
      \sfu^\prv_i}}\leq \frac{(1 - \alpha)^2}{n^2}\paren*{B +
    \kappa_1+ \kappa_2 + \kappa_\infty}$ and thus
  \[\norm{\nabla_{\sfu^\prv}\sfF(w, \sfu^\prv)}_2 \leq \frac{(1 -
    \alpha)^2}{n^{3/2}}\paren*{B + \kappa_1+ \kappa_2 +
    \kappa_\infty}.\] This proves the first item of the lemma.
  
  Second, we bound the $\ell_2$-sensitivity of $\nabla_{w} \wh
  L^\prv(w, \sfu^\prv)$ and $\nabla_{\sfu^\prv} \wh L^\prv(w,
  \sfu^\prv)$. Consider any pair of neighboring private datasets
  $S^\prv$ and $\bar{S}^\prv$. Let $(x^\prv_j, y^\prv_j)$ and
  $(\bar{x}^\prv_j, \bar{y}^\prv_j)$ be the data points by which the
  two datasets differ. To emphasize the dependence on the dataset, we
  will denote $\nabla_{w} \wh L^\prv(w, \sfu^\prv)$ with respect to
  dataset $S$ as $\nabla_{w} \wh L^\prv(w, \sfu^\prv; S)$. We can write:
  \begin{align*}
    & \mspace{-80mu}
    \norm*{\nabla_{w} \wh L^\prv(w,
      \sfu^\prv; S^\prv) - \nabla_{w} \wh L^\prv(w,
      \sfu^\prv; \bar{S}^\prv)}_2\\
    & = \frac{\norm*{\nabla_w\sql(w, (x^\prv_j, y^\prv_j)) 
        - \nabla_w\sql(w, (\bar{x}^\prv_j, \bar{y}^\prv_j))}_2}{\sfu^\prv_j}\\
    & \leq \frac{2G(1 - \alpha)}{n}.
  \end{align*}
Similarly, we can write:
  \begin{align*}
    & \mspace{-80mu}
    \norm*{\nabla_{\sfu^\prv} \wh L^\prv(w, \sfu^\prv; S^\prv)
      - \nabla_{\sfu^\prv} \wh L^\prv(w, \sfu^\prv; \bar{S}^\prv)}_2\\
    & = \abs*{\frac{\sql(w, (x^\prv_j, y^\prv_j))
        - \sql(w, (\bar{x}^\prv_j, \bar{y}^\prv_j))}{\paren*{\sfu^\prv_j}^2}}\\
    & \leq \frac{B(1 - \alpha)^2}{n^2}.
  \end{align*}
  This completes the proof.
\end{proof}

\subsection{Proof of Theorem~\ref{th:guarantees-convex-priv-alg}}
\label{app:thm-convex}
 
\ConvexAlgorithmGuarantees*

We will show more precisely the following inequality:
\begin{align*}
  & \mspace{-40mu} \ex{}{\sfF(\bar{w}, \bar{\sfu}) - \sfF(w^\ast, \sfu^\ast)}\\
  & \leq \Lambda G \sqrt{\frac{1}{T} + \frac{32(1 - \alpha)^2d\log(\frac{2}{\delta})}{n^2\varepsilon^2}}\\
    & \quad + \frac{(1 - \alpha)^2\paren*{U^\prv + 1}}{2n^{3/2}}\sqrt{\frac{\bar B^2}{T} + \frac{8B^2 \log(\frac{2}{\delta})}{\varepsilon^2}}\\
    & \quad + \frac{\alpha^2\paren*{U^\pub+1}(B + \bar B)}{2m^{3/2}\sqrt{T}},
\end{align*}
where $\sfu^\ast = \paren*{\sfu^{\pub\ast}, \sfu^{\prv\ast}}$,
$U^\prv\triangleq \norm{\sfu^\prv_0- \sfu^{\prv\ast}}_2^2$, and
$U^\pub\triangleq \norm{\sfu^\pub_0- \sfu^{\pub\ast}}_2^2$.

\begin{proof}
  First, we show the privacy guarantee. Note that for any iteration
  $t \in [T]$, the only quantities that depend on the private dataset
  are the gradient components $\nabla_w \sfF(w_t, \sfu_t)$ and
  $\nabla_{\sfu^\prv}\sfF(w_t, \sfu_t)$. From the guarantees on the
  $\ell_2$-sensitivity of these gradient components given by parts~2
  and 3 of Lemma~\ref{lem:_properties} and by the properties of the
  Gaussian mechanism of differential privacy, each iteration
  $t \in [T]$ of Algorithm~\ref{Alg:ngd} is
  $(\frac{\varepsilon}{\sqrt{T}}, \delta)$-differentially
  private. Now, using the \emph{moments accountant} technique of
  \citet{abadi2016deep} to account for the composition over the $T$
  iterations of the algorithm, which applies to Gaussian noise,
  Algorithm~\ref{Alg:ngd} is $(\varepsilon, \delta)$-differentially
  private. Note that we could have also resorted to the advanced
  composition theorem of differential privacy, but the moments
  accountant technique leads to a tighter privacy analysis; in
  particular, it allows us to save a $\sqrt{\log(1/\delta)}$-factor in
  the final $\varepsilon$ and $T$ factor in the final $\delta$ privacy
  parameters.
    
  Next, we prove the bound on the optimization error. The proof
  involves some tweaks of the the standard analysis of the
  (stochastic) projected gradient descent algorithm for convex
  objectives. In particular, our proof entails decomposing each
  gradient $\nabla \sfF$ into its three components
  $\nabla_w \sfF, \nabla_{\sfu^\pub}\sfF,$ and
  $\nabla \sfF_{\sfu^ \prv}$ to allow for introducing a different step
  size for updating each of $w, \sfu^\pub$ and $\sfu^\prv$. By a
  standard argument, we have
\begin{align}
& \mspace{-20mu}
    \ex{}{\norm{w_{t+1}-w^\ast}_2^2}\nonumber\\&\leq \ex{}{\norm{w_t-w^\ast}_2^2} - 2\eta_w\ex{}{\tri{\nabla_w\sfF(w_t, \sfu_t)+\bz_t, w_t- w^\ast}}+ \eta_w^2 \ex{}{\norm{\nabla_w\sfF(w_t, \sfu_t) +\bz_t}_2^2}\nonumber\\
    &=\ex{}{\norm{w_t-w^\ast}_2^2} - 2\eta_w\ex{}{\tri{\nabla_w\sfF(w_t, \sfu_t), w_t- w^\ast}}+ \eta_w^2 \ex{}{\norm{\nabla_w\sfF(w_t, \sfu_t) +\bz_t}_2^2}\nonumber\\
    &\leq \ex{}{\norm{w_t-w^\ast}_2^2} - 2\eta_w\ex{}{\tri{\nabla_w\sfF(w_t, \sfu_t), w_t- w^\ast}}+ \eta_w^2 \paren*{G^2+ \sigma_1^2 d},\nonumber
\end{align}
where the second step follows from the linearity of expectation and
the fact that $\bz_t$ is independent of
$\paren*{(w_0, \sfu_0), \ldots, (w_t, \sfu_t)}$ and that $\bz_t$ has
zero mean, and the last step follows from the fact that
$\norm{\nabla_w\sfF(w_t, \sfu_t)}^2_2\leq G^2$ proved in
Lemma~\ref{lem:_properties} and the fact that
$\norm{\bz_t}_2^2=\sigma_1^2 d.$ Hence,
\begin{align}
    \ex{}{\tri{\nabla_w\sfF(w_t, \sfu_t), w_t- w^\ast}}&\leq \frac{\ex{}{\norm{w_t-w^\ast}_2^2} -\ex{}{\norm{w_{t+1}-w^\ast}_2^2}}{2\eta_w} + \frac{\eta_w}{2}\paren*{G^2+ \sigma_1^2 d}\label{bound:inner-prod-w}
\end{align}

By a similar argument for $\norm{\sfu^\pub_{t+1}-\sfu^\pub_t}^2_2$ and $\norm{\sfu^\prv_{t+1}-\sfu^\prv_t}^2_2$ and using Lemma~\ref{lem:_properties}, we get 

\begin{align}
& \mspace{-20mu}
    \ex{}{\tri{\nabla_{\sfu^\pub}\sfF(w_t, \sfu_t), \sfu^\pub_t- \sfu^{\pub\ast}}}\nonumber\\
    &\leq \frac{\ex{}{\norm{\sfu^\pub_t-\sfu^{\pub\ast}}_2^2} -\ex{}{\norm{\sfu^\pub_{t+1}-\sfu^{\pub\ast}}_2^2}}{2\eta_{\sfu^\pub}} + \frac{\eta_{\sfu^\pub}}{2}\ex{}{\norm{\nabla_{\sfu^\pub}\sfF(w_t, \sfu_t)}_2^2}\nonumber\\
    &\leq \frac{\ex{}{\norm{\sfu^\pub_t-\sfu^{\pub\ast}}_2^2} -\ex{}{\norm{\sfu^\pub_{t+1}-\sfu^{\pub\ast}}_2^2}}{2\eta_{\sfu^\pub}} + \frac{\eta_{\sfu^\pub}}{2}\cdot\frac{\alpha^4(\bar{B}+B)^2}{m^3}\label{bound:inner-prod-u-pub}\\
& \mspace{-20mu}
    \ex{}{\tri{\nabla_{\sfu^\prv}\sfF(w_t, \sfu_t), \sfu^\prv_t- \sfu^{\prv\ast}}}\nonumber\\
    &\leq \frac{\ex{}{\norm{\sfu^\prv_t-\sfu^{\prv\ast}}_2^2} -\ex{}{\norm{\sfu^\prv_{t+1}-\sfu^{\prv\ast}}_2^2}}{2\eta_{\sfu^\prv}} + \frac{\eta_{\sfu^\prv}}{2}\ex{}{\norm{\nabla_{\sfu^\prv}\sfF(w_t, \sfu_t)+\bz_t'}_2^2}\nonumber\\
    &\leq \frac{\ex{}{\norm{\sfu^\prv_t-\sfu^{\prv\ast}}_2^2} -\ex{}{\norm{\sfu^\prv_{t+1}-\sfu^{\prv\ast}}_2^2}}{2\eta_{\sfu^\prv}} + \frac{\eta_{\sfu^\prv}}{2}\paren*{\frac{(1-\alpha)^4\bar{B}^2}{n^3}+\sigma_2^2 n}\label{bound:inner-prod-u-prv}
\end{align}
By convexity of $\sfF$, we have
\begin{align}
    \ex{}{\sfF(w_t, \sfu_t)- \sfF(w^\ast, \sfu^\ast)}
  &\leq \ex{}{\tri{\nabla \sfF(w_t, \sfu_t), ~(w_t, \sfu_t) - (w^\ast, \sfu^\ast)}}\\
  &= \ex{}{\tri{\nabla_w \sfF(w_t, \sfu_t), w_t  - w^\ast}} + \ex{}{\tri{\nabla_{\sfu^\pub}\sfF(w_t, \sfu_t), \sfu^\pub_t- \sfu^{\pub\ast}}} \\
  &~~ +  \ex{}{\tri{\nabla_{\sfu^\prv}\sfF(w_t, \sfu_t), \sfu^\prv_t- \sfu^{\prv\ast}}} \label{bound:convexity-def}
\end{align} 
As in the standard analysis of gradient descent for convex objectives,
we combine (\ref{bound:convexity-def}) with
(\ref{bound:inner-prod-w}), (\ref{bound:inner-prod-u-pub}), and
(\ref{bound:inner-prod-u-prv}), and use the fact that
$\sfF(\bar{w}, \bar{\sfu})-\sfF(w^\ast, \sfu^\ast)\leq \frac{1}{T}
\sum_{t=1}^T \paren*{\sfF(w_t, \sfu_t)-\sfF(w^\ast, \sfu^\ast)}$
(which follows from the convexity of $\sfF$) to arrive at

\begin{align*}
  \ex{}{\sfF(\bar{w}, \bar{\sfu})- \sfF(w^\ast, \sfu^\ast)}
  \leq & \frac{\Lambda^2}{2\eta_w T} + \frac{\eta_w}{2}(G^2+d \sigma_1^2)
  + \frac{U^\prv}{2\eta_{\sfu^\prv}T}
  + \frac{\eta_{\sfu^\prv}}{2}\paren*{\frac{(1 - \alpha)^4\bar{B}^2}{n^3}
    + n \sigma_2^2}\\
  & + \frac{U^\pub}{2\eta_{\sfu^\pub}T}
  + \frac{\eta_{\sfu^\pub}}{2}\cdot\frac{\alpha^4(B + \bar{B})^2}{m^3}
\end{align*}
Substituting with the choices of $\eta_w$, $\eta_{\sfu^\pub}$, and
$\eta_{\sfu^\prv}$ in step~\ref{step:step-sizes} of
Algorithm~\ref{Alg:ngd} yields the claimed upper bound on the
expected optimization error.
 \end{proof}

\section{Proofs of Section~\ref{sec:non-convex}}
\label{app:non-convex}
 
\subsection{Proof of Lemma~\ref{lem:properties_nonconvex}}
 \label{app:lemma-nonconvex}
 
 \PropertiesOfNonConvexObjective*

\begin{proof}
  First, the proof of item~1 is similar to that of
  Lemma~\ref{lem:_properties} with minor, straightforward differences:
  first, note that that replacing the squared loss with any
  $G$-Lipschitz loss impacts neither the bounds on the norm of the
  gradient components nor the sensitivity of the gradients with
  respect to the private dataset; second, the two different terms in
  $\sfJ$ are the $\paren*{1 - \sum_{i = 1}^{m + n}\frac{1}{\sfu_i}}$
  term and the $\mu$-softmax term
  $\frac{1}{\mu}\log\paren*{\sum_{i = 1}^{m +
      n}e^{\frac{\mu}{\sfu_i}}}$ do not affect the bounds on the
  gradient norms (a straightforward calculation of the gradients of
  these terms with respect to $\sfu^\pub$ and $\sfu^\prv$, together
  with the constraints on these variables, shows that the bounds on
  the norm of the gradient components still hold) and those two terms
  also do not have any effect on the sensitivity of the gradients with
  respect to the private dataset.
 
 Next, we show the smoothness guarantee for $\sfJ$. First, note that $\sfJ$ is twice differentiable. We can express its Hessian as 
 \begin{align*}
     H(w, \sfu) & = \begin{bmatrix}
     \nabla^2_w \sfJ & \sfK\\
      & \\
       \sfK^T & \nabla^2_\sfu \sfJ
     \end{bmatrix},
 \end{align*}
 where, for any $(w, \sfu)$,
 $\nabla^2_w \sfJ(w, \sfu) \in \Rset^{d\times d}$ is given by
 $\nabla^2_w \sfJ(w, \sfu) = \bracket*{\frac{\partial^2\sfJ}{\partial
     w_i \partial w_j}(w, \sfu) \colon (i, j)\in [d]^2}$,
 $\nabla^2_\sfu \sfJ(w, \sfu) \in \Rset^{(m + n)\times (m + n)}$ by
 $\nabla^2_\sfu \sfJ(w, \sfu) =
 \bracket*{\frac{\partial^2\sfJ}{\partial \sfu_i \partial \sfu_j}(w,
   \sfu) \colon (i, j)\in [m + n]^2}$, and
 $\sfK(w, \sfu) \in \Rset^{d\times (m + n)}$ by
 $\sfK(w, \sfu) = \bracket*{\frac{\partial^2\sfJ}{\partial
     w_i \partial \sfu_j}(w, \sfu) \colon i \in [d], j \in [m +
   n]}$. Note that the spectral norm of $H$ can be upper bounded as follows:
 \begin{align*}
 \norm{H(w, \sfu)}_2
 & = \max_{\substack{V = (V_1, V_2)\\ \norm{V}_2 \leq 1}} \norm{H(w, \sfu) V}_2\\
 & = \max_{\substack{V = (V_1, V_2)\\ \norm{V}_2 \leq 1}} \norm*{
 \begin{bmatrix}
 \nabla^2_w \sfJ (w, \sfu) V_1 + \sfK(w, \sfu) V_2\\
 \sfK^\top (w, \sfu) V_1 + \nabla^2_u \sfJ (w, \sfu) V_2 \\
 \end{bmatrix}}_2\\
 &\leq  \norm{\nabla^2_w \sfJ(w, \sfu) }_2 + \norm{\sfK(w, \sfu) }_2 + \norm{\nabla^2_\sfu \sfJ(w, \sfu) }_2.
 \end{align*}
 Thus, to prove that $\sfJ$ is $O\paren*{\beta}$-smooth, it suffices for us to show that $\norm{\nabla^2_w \sfJ}_2 + \norm{\sfK}_2 + \norm{\nabla^2_\sfu\sfJ}_2  = O\paren*{\beta}$. 
 
First, observe that the following inequalities hold:
\begin{align}
    \norm*{\nabla^2_w \sfJ}_2 &= \norm*{\sum_{i = 1}^{m + n} \nabla_w^2 \ell(w, x_i, y_i)}_2
    \leq \sum_{i = 1}^{m + n}\frac{\norm*{\nabla_w^2 \ell(w, x_i, y_i)}_2}{\sfu_i}
    \leq \beta \sum_{i = 1}^{m + n}\frac{1}{\sfu_i}
    \leq \beta, \label{bound:smoothness-w}
\end{align}
where the second inequality follows from the $\beta$-smoothness of the loss $\ell$ and the last inequality from the constraints $\sfu_i \geq \frac{m}{\alpha}$ for $i \in [m]$ and $\sfu_i \geq \frac{1-\alpha}{n}$ for $i \in [m+1, m + n]$. 

Second, we bound $\norm*{\nabla^2_\sfu \sfJ}_2$. Observe that for all $i\in [m + n]$, we have
\begin{multline*}
    \frac{\partial^2 \sfJ(w, \sfu)}{\partial \sfu_i^2}= \frac{2\paren*{\ell(w, x_i, y_i)+\dprv \, 1_{i \leq m}-\lambda_1}}{\sfu_i^3} 
    + \lambda_2\frac{3\frac{1}{\sfu_i^4}\sum_{j=1}^{m + n}\frac{1}{\sfu_j^2}-\frac{1}{\sfu_i^6}}{\paren*{\sum_{j=1}^{m + n}\frac{1}{\sfu_j^2}}^{\frac{3}{2}}} + \lambda_\infty \frac{\paren*{2+\frac{\mu}{\sfu_i}}\frac{e^{\mu/\sfu_i}}{\sfu_i^3}\sum_{j=1}^{m + n}e^{\mu/\sfu_j}+\frac{\mu}{\sfu_i^4}e^{2\mu/\sfu_i}}{\paren*{\sum_{j=1}^{m + n}e^{\mu/\sfu_j}}^2}.
\end{multline*}
Thus, since the loss $\ell$ is uniformly bounded by $B$ and
$\sfu_i\geq \frac{m}{\alpha}~\forall i\in [m]$, we can bound
$\abs*{\frac{\partial^2 \sfJ(w, \sfu)}{\partial \sfu_i^2}}$ for all
$i \in [m]$ as follows:
 \begin{align*}
     \abs*{\frac{\partial^2 \sfJ(w, \sfu)}{\partial \sfu_i^2}}&\leq \frac{2\alpha^3\abs{2B-\lambda_1}}{m^3}+3\lambda_2\frac{\frac{1}{\sfu_i^4}}{\sqrt{\sum_{j=1}^{m + n}\frac{1}{\sfu_j^2}}}+\lambda_\infty \frac{\frac{2}{\sfu_i^3}e^{\mu/\sfu_i}+\frac{\mu}{\sfu_i^4}e^{\mu/\sfu_i}\sum_{j=1}^{m + n}e^{\mu/\sfu_j}}{\paren*{\sum_{j=1}^{m + n}e^{\mu/\sfu_j}}^2}\\
     &\leq \frac{2\alpha^3\abs{2B-\lambda_1}}{m^3}+\frac{3\lambda_2}{\sfu_i^3}+\lambda_\infty\paren*{\frac{2}{\sfu_i^3}+\frac{\mu}{\sfu_i^4}}\\
     &\leq \frac{2\alpha^3\abs{2B-\lambda_1}+3\lambda_2\alpha^3+2\lambda_\infty\alpha^3}{m^3}+\frac{\mu\lambda_\infty\alpha^4}{m^4}.
 \end{align*}
 Similarly, we can show that for all $i\in [m+1, m + n],$
 \begin{align*}
     \abs*{\frac{\partial^2 \sfJ(w, \sfu)}{\partial \sfu_i^2}}&\leq \frac{2(1-\alpha)^3\abs{B-\lambda_1}+3\lambda_2(1-\alpha)^3+2\lambda_\infty(1-\alpha)^3}{n^3}+\frac{\mu\lambda_\infty(1-\alpha)^4}{n^4}.
 \end{align*}
 Moreover, for all $i, j \in [m + n]$ where $i\neq j$, 
 \begin{align*}
     \abs*{\frac{\partial^2 \sfJ(w, \sfu)}{\partial \sfu_i\partial \sfu_j}}&= \frac{\frac{\lambda_2}{\sfu_i^3\sfu_j^3}}{\paren*{\sum_{t=1}^{m + n}\frac{1}{\sfu_t^2}}^{\frac{3}{2}}}+\frac{\frac{\lambda_\infty\mu}{\sfu_i^2\sfu_j^2}e^{\mu/\sfu_i}e^{\mu/\sfu_j}}{\paren*{\sum_{t=1}^{m + n}e^{\mu/\sfu_t}}^2}\\
     &\leq \frac{\lambda_2}{\sfu_i^3}+\frac{\lambda_\infty\mu}{\sfu_i^2\sfu_j^2}\\
    &\leq \begin{cases}
    \frac{\lambda_2 \alpha^3}{m^3}+\frac{\lambda_\infty\mu\alpha^4}{m^4}, & i\in [m], j\in [m] ~(i\neq j)\\
    \frac{\lambda_2 \alpha^3}{m^3}+\frac{\lambda_\infty\mu\alpha^2(1-\alpha)^2}{m^2n^2}, & i\in [m], j\in [m+1, m + n]\\
    \frac{\lambda_2 (1-\alpha)^3}{n^3}+\frac{\lambda_\infty\mu\alpha^2(1-\alpha)^2}{m^2n^2}, & i\in [m+1, m + n], j\in [m]\\
    \frac{\lambda_2 (1-\alpha)^3}{n^3}+\frac{\lambda_\infty\mu(1-\alpha)^4}{n^4}, & i\in [m+1, m + n], j\in [m+1, m + n] ~(i\neq j)
    \end{cases}
 \end{align*}
 Letting $\norm{\cdot}_F$ denote the Frobenius norm, given all the above bounds, we can bound $\norm{\nabla^2_\sfu \sfJ(w, \sfu)}_2$ as
 \begin{align}
     \norm*{\nabla^2_\sfu \sfJ(w, \sfu)}_2&\leq \norm*{\nabla^2_\sfu \sfJ(w, \sfu)}_F = \paren*{\sum_{i = 1}^{m + n}\sum_{j=1}^{m + n}\abs*{\frac{\partial^2 \sfJ(w, \sfu)}{\partial \sfu_i\partial \sfu_j}}^2}^{\frac{1}{2}}\nonumber\\
     &\leq \frac{\lambda_2\alpha^3}{m^2}+\frac{2\alpha^3\paren*{\abs*{2B-\lambda_1}+\lambda_2\sqrt{n}+\lambda_\infty}}{m^{\frac{5}{2}}}+\lambda_\infty \mu \alpha^4\paren*{\frac{1}{m^3}+\frac{1}{m^{\frac{7}{2}}}}\nonumber\\
     &~+\frac{\lambda_2(1-\alpha)^3}{n^2}+\frac{2(1-\alpha)^3\paren*{\abs{B-\lambda_1}+\lambda_2\sqrt{m}+\lambda_\infty}}{n^{\frac{5}{2}}}+\lambda_\infty \mu (1-\alpha)^4\paren*{\frac{1}{n^3}+\frac{1}{n^{\frac{7}{2}}}}\nonumber\\
     &~+ \frac{2\lambda_\infty \mu \alpha^2(1-\alpha)^2}{m^{\frac{3}{2}}n^{\frac{3}{2}}}\nonumber\\
     &\triangleq \beta'. \label{bound:smoothness-u}
 \end{align}
 Note that $\beta' = O\paren*{\frac{\sqrt{n}}{m^{5/2}}+\frac{\sqrt{m}}{n^{5/2}}+\mu\paren*{\frac{1}{m^3}+\frac{1}{n^3}}}.$ 
 Hence, when $m=O(n^3),$ $n=O(m^3),$ and $\mu=O\paren*{(m + n)^\frac{2}{3}},$ we have $\beta'=O\paren*{\frac{1}{m}+\frac{1}{n}}$. 
 
 Finally, we bound $\norm{\sfK}_2$. Observe that for all $i\in [d]$ and $j\in [m + n]$, we have
 \begin{align*}
     \abs*{\frac{\partial^2 \sfJ(w, \sfu)}{\partial w_i\partial \sfu_j}}&=\frac{\abs*{\frac{\partial \ell(w, x_j, y_j)}{\partial w_i}}}{\sfu_j^2}.
 \end{align*}
 Hence, we have 
 \begin{align}
     \norm*{\sfK}_2&\leq \norm*{\sfK}_F=\paren*{\sum_{j=1}^{m + n}\frac{1}{\sfu_j^4}\sum_{i = 1}^d \abs*{\frac{\partial \ell(w, x_j, y_j)}{\partial w_i}}^2}^{\frac{1}{2}}\nonumber\\
     &\leq G\paren*{\sum_{j=1}^{m + n}\frac{1}{\sfu_j^4}}^{\frac{1}{2}}\nonumber\\
     &\leq G\paren*{\frac{\alpha^2}{m^{3/2}}+\frac{(1-\alpha)^2}{n^{3/2}}}. \label{bound:smoothness-w-u}
 \end{align}
 Putting inequalities (\ref{bound:smoothness-w}), (\ref{bound:smoothness-u}), and (\ref{bound:smoothness-w-u}) together, we see that 
 $\sfJ$ is $\bar{\beta}$-smooth, where $\bar{\beta}=\beta+\beta' + G\paren*{\frac{\alpha^2}{m^{3/2}}+\frac{(1-\alpha)^2}{n^{3/2}}}$. As mentioned earlier, when the conditions on $m, n,$ and $\mu$ in the lemma statement are satisfied, $\beta'=O\paren*{\frac{1}{m}+\frac{1}{n}}$. Thus, under these conditions, we have $\bar{\beta}=O(\beta)$. 
\end{proof} 
 
 
 \subsection{Formal description of Algorithm \texorpdfstring{$\pncnvx$}{pncnvx} of Section~\ref{sec:non-convex}}
 \label{app:alg-nonconvex}
 
 Next, we give the pseudocode for our private algorithm (Algorithm~\ref{Alg:nonconvex-ngd}) for general adaptation scenarios described in Section~\ref{sec:non-convex}. \\
 
 \begin{algorithm}[ht]
    \caption{$\pncnvx$ Private algorithm for general adaptation scenarios based on $\sfJ$}
    \begin{algorithmic}[1]
      \REQUIRE
      $S^\pub \in \paren*{\sX \times \sY}^m$;
      $S^\prv \in \paren*{\sX \times \sY}^n$;
      privacy parameters $(\varepsilon, \delta)$;
      hyperparameters $\lambda_1, \lambda_2, \lambda_\infty$;
      number of iterations $T$.
      \STATE Choose $(w_0, \sfu_0)$ in $\sW \times \sU$ arbitrarily.
          \STATE Set $\sigma_1: =  \frac{2s_1 \sqrt{T\log(\frac{3}{\delta})}}{{\varepsilon}},$ where $s_1: =  \frac{2(1 - \alpha)G}{n}$.\nlabel{step:nc-GD-sigma-1}
          \STATE Set $\sigma_2 : =  \frac{2s_2 \sqrt{T\log(\frac{3}{\delta})}}{{\varepsilon}},$ where $s_2: =  \frac{(1 - \alpha)^2B}{n^2}$. \nlabel{step:nc-GD-sigma-2}
          \STATE Set step size $\eta := \frac{1}{\bar{\beta}}$, where $\bar{\beta}$ is the smoothness parameter given in Lemma~\ref{lem:properties_nonconvex}.\nlabel{step:nc-step-size}
          \FOR{$t = 0$ to $T-1$}
          \STATE $w_{t+1}: =  w_t - \eta  \paren*{\nabla_{w}\sfJ(w_t, \sfu_t) + \bz_t}$, where $\bz_t\sim \cN(\mathbf{0}, \sigma_1^2\mathbb{I}_d)$. \label{step:nc-noise_add_w}
          \STATE If $\norm{w_{t+1}}_2> \Lambda$ then $w_{t+1} \gets \Lambda \frac{w_{t+1}}{\norm{w_{t+1}}_2}$.\nlabel{step:nc-project-w-convex}
          \STATE $\sfu^\pub_{t+1}: =  \sfu^\pub_t - \eta \, \nabla_{\sfu^\pub}\sfJ(w_t, \sfu_t)$.
          \STATE For every $i\in [m]$, set $\sfu^\pub_{i, t+1}\gets \max\paren*{\sfu^\pub_{i, t+1}, \frac{m}{\alpha}}$. \nlabel{step:nc-project-u_pub-convex}
          \STATE $\sfu^\prv_{t+1}: = \sfu^\prv_t-\eta \, \paren*{\nabla_{\sfu^\prv}\sfJ(w_t, \sfu_t) + \bz'_t}$, where $\bz'_t\sim \cN(\mathbf{0}, \sigma_2^2\mathbb{I}_n)$. \nlabel{step:nc-noise_add_u_prv} 
          \STATE For every $i\in [n]$, set $\sfu^\prv_{i, t+1}\gets \max\paren*{\sfu^\prv_{i, t+1}, \frac{n}{1 - \alpha}}$. \nlabel{step:nc-project-u_priv-convex}
          \ENDFOR
\STATE \textbf{return} $\paren*{w_{t^\ast}, \sfu_{t^\ast}}$, where $t^\ast$ is uniformly sampled from $[T]$.
    \end{algorithmic}
    \label{Alg:nonconvex-ngd}
    
\end{algorithm}

 \subsection{Proof of Theorem~\ref{thm:nonconvex}}
 \label{app:thm-nonconvex}
 
 \NonConvexAlgorithmGuarantees*
 
 \begin{proof}
  First, we note that the privacy guarantee follows from exactly the
  same privacy argument for Algorithm~\ref{Alg:ngd} given in the proof
  of Theorem~\ref{th:guarantees-convex-priv-alg}. This because the
  differences between our algorithm in Section~\ref{sec:non-convex}
  (Algorithm~\ref{Alg:nonconvex-ngd} in
  Appendix~\ref{app:alg-nonconvex}) and Algorithm~\ref{Alg:ngd} do not
  impact the privacy analysis.
  
  We now turn to the proof of convergence to a stationary point of
  $\sfJ$ over $\sW\times \sU$ by showing that the expected norm of the
  gradient mapping of $\sfJ$ at the output $\paren*{w_{t^\ast},
    \sfu_{t^\ast}}$ is bounded as given in the theorem statement.
  
  To simplify notation, we let $\sfv_t\triangleq (w_t, \sfu_t)~
  \forall t\in [T]$, let $\sV\triangleq \sW\times \sU$, and let
  $\bg_t\triangleq \paren*{\bz_t, \mathbf{0}^m, \bz'_t}$ be the
  combined noise vector added to $\nabla\sfJ = \paren*{\nabla_w
    \sfJ(\sfv_t), \nabla_{\sfu^\pub}\sfJ(\sfv_t),
    \nabla_{\sfu^\prv}\sfJ(\sfv_t)}$ in the $t$-th iteration $\forall
  t\in [T]$. Here, $\mathbf{0}^m$ denote the $m$-dimensional all-zero
  vector. Recall that $\bz_t\sim \cN(\mathbf{0}^d,
  \sigma_1^2\mathbb{I}_d)$ and $\bz'_t \sim \cN(\mathbf{0}^n,
  \sigma_2^2\mathbb{I}_n)$ as defined in
  steps~\ref{step:nc-noise_add_w} and \ref{step:nc-noise_add_u_prv} in
  Algorithm~\ref{Alg:nonconvex-ngd}. We let
  $\bar{\sigma}^2=d\sigma_1^2+n\sigma_2^2$. Also, we let $\wt \nabla_t
  \triangleq \nabla \sfJ(\sfv_t) + \bg_t,~ \forall t\in [T]$. For any
  $v\in\Rset^{d+m + n},$ we let $\proj_{\sV}(v)$ denote the Euclidean
  projection of $v$ onto $\sV$.
  
  By $\bar{\beta}$-smoothness of $\sfJ$, we have 
  \begin{align}
      \sfJ(\sfv_{t+1})&\leq \sfJ(\sfv_t)+\tri{\nabla \sfJ(\sfv_t) , \sfv_{t+1} - \sfv_t} +\frac{\bar{\beta}}{2}\norm*{\sfv_{t+1}-\sfv_t}_2^2\nonumber\\
      &= \sfJ(\sfv_t) +\tri{\wt \nabla_t, \sfv_{t+1}-\sfv_t} - \tri{\bg_t, \sfv_{t+1}-\sfv_t} +\frac{\bar{\beta}}{2}\norm*{\sfv_{t+1}-\sfv_t}_2^2\label{ineq:smoothness}
  \end{align}
  Note that $\sfv_{t+1}=\proj_{\sV}\paren*{\sfv_t-\eta \wt\nabla_t}$. By a known property of Euclidean projection (e.g., see \cite{beck2017first}[Theorem~9.8]), we have
  \begin{align*}
      \tri{\sfv_t - \eta\wt\nabla_t - \sfv_{t+1},\sfv_t-\sfv_{t+1}}&\leq 0,
  \end{align*}
  which implies 
  \begin{align*}
      \tri{\wt\nabla_t,\sfv_t-\sfv_{t+1}}&\leq -\frac{1}{\eta}\norm{\sfv_{t+1}-\sfv_t}_2^2.
      \end{align*}
      Hence, inequality (\ref{ineq:smoothness}) implies 
      \begin{align*}
      \sfJ(\sfv_{t+1})&\leq \sfJ(\sfv_t) -\frac{1}{\eta}\paren*{1-\frac{\bar{\beta}\eta}{2}}\norm{\sfv_{t+1}-\sfv_t}_2^2 - \tri{\bg_t, \sfv_{t+1}-\sfv_t}\\
      &\leq \sfJ(\sfv_t) -\frac{1}{\eta}\paren*{1-\frac{\bar{\beta}\eta}{2}}\norm{\sfv_{t+1}-\sfv_t}_2^2 + \norm{\bg_t}_2 \norm{\sfv_{t+1}-\sfv_t}_2.
      \end{align*}
By setting $\eta=\frac{1}{\bar{\beta}}$, taking the expectation of both sides of the inequality above, and use the fact that $\ex{}{\norm{\bg_t}_2 \norm{\sfv_{t+1}-\sfv_t}_2}\leq \sqrt{\ex{}{\norm{\bg_t}_2^2}\ex{}{\norm{\sfv_{t+1}-\sfv_t}_2^2}}$, we get 
\begin{align*}
    \ex{}{\sfJ(\sfv_{t+1})}&\leq \ex{}{\sfJ(\sfv_t)}-\frac{\bar{\beta}}{2}\paren*{\sqrt{\ex{}{\norm{\sfv_{t+1}-\sfv_t}_2^2}}-\frac{\bar{\sigma}}{\bar{\beta}}}^2+\frac{\bar{\sigma}^2}{2\bar{\beta}},
\end{align*}
which implies 
\begin{align*}
     \paren*{\sqrt{\ex{}{\bar{\beta}^2\norm{\sfv_{t+1}-\sfv_t}_2^2}}-\bar{\sigma}}^2&\leq 2\bar{\beta}\ex{}{\sfJ(\sfv_{t+1})-\sfJ(\sfv_t)}+\bar{\sigma}^2.
\end{align*}
Since $\ex{}{\bar{\beta}^2\norm{\sfv_{t+1}-\sfv_t}_2^2}\leq 2\paren*{\sqrt{\ex{}{\bar{\beta}^2\norm{\sfv_{t+1}-\sfv_t}_2^2}}-\bar{\sigma}}^2+2\bar{\sigma}^2$, we get
\begin{align}
    \ex{}{\bar{\beta}^2\norm{\sfv_{t+1}-\sfv_t}_2^2}&\leq 4\bar{\beta}\ex{}{\sfJ(\sfv_{t+1})-\sfJ(\sfv_t)}+4\bar{\sigma}^2\label{ineq:noisy-grad-map-per-step}.
\end{align}
For any $t\in [T]$, let $\sfv_t^\dag\triangleq \proj_\sV \paren*{\sfv_t-\eta \nabla\sfJ(\sfv_t)}$. Observe that 
\begin{align*}
    \norm{\sfv_t - \sfv^\dag_t}_2&\leq \norm{\sfv_t-\sfv_{t+1}}_2+\norm{\sfv_{t+1}-\sfv^\dag_t}_2\\
    &=\norm{\sfv_t-\sfv_{t+1}}_2+\norm{\proj_\sV\paren*{\sfv_t-\eta \wt\nabla_t}-\proj_\sV \paren*{\sfv_t-\eta \nabla\sfJ(\sfv_t)}}_2\\
    &\leq \norm{\sfv_{t+1}-\sfv_t}_2+ \eta \norm{\bg_t}_2\\
    &= \norm{\sfv_{t+1}-\sfv_t}_2+ \frac{\norm{\bg_t}_2}{\bar{\beta}},
\end{align*}
where the first bound follows from the triangle inequality and the third bound follows from the non-expansiveness of the Euclidean projection. Thus, we have 
\begin{align*}
    \ex{}{\norm{\sfv_t - \sfv^\dag_t}_2^2}&\leq 2\ex{}{\norm{\sfv_{t+1}-\sfv_t}_2^2}+2\frac{\bar{\sigma}^2}{\bar{\beta}^2}.
\end{align*}
Combining this with (\ref{ineq:noisy-grad-map-per-step}) yields 
\begin{align*}
    \ex{}{\norm{\sG_{\sfJ, \bar{\beta}}(\sfv_t)}_2^2}=\ex{}{\bar{\beta}^2\norm{\sfv_t - \sfv^\dag_t}_2^2}&\leq 8\bar{\beta}\ex{}{\sfJ(\sfv_{t+1})-\sfJ(\sfv_t)}+10\bar{\sigma}^2.
\end{align*}
Now, taking expectation with respect to the randomness in the
uniformly drawn index $t^\ast$ of the output, we get
\begin{align*}
    \ex{}{\norm{\sG_{\sfJ, \bar{\beta}}(\sfv_{t^\ast})}_2^2}
    &=\frac{1}{T}\sum_{t=1}^T\ex{}{\norm{\sG_{\sfJ, \bar{\beta}}(\sfv_t)}_2^2}\\
    &\leq \frac{8\bar{\beta}}{T}\ex{}{\sfJ(\sfv_{0})-\sfJ(\sfv_{T})}+10\bar{\sigma}^2\\
    &\leq 8\paren*{\frac{\bar{\beta} M}{T} + \frac{40(1-\alpha)^2G^2 d T\log(2/\delta)}{\varepsilon^2 n^2}+\frac{(1-\alpha)^4B^2 T\log(2/\delta)}{\varepsilon^2 n^3}},
\end{align*}
where in the second inequality, we use the fact that $\sfJ$ is uniformly bounded over $\sV$ by $M\triangleq 2B+\lambda_1+\lambda_2 \paren*{\frac{\alpha}{\sqrt{m}}+\frac{1-\alpha}{\sqrt{n}}}+\lambda_\infty\max\paren*{\frac{\alpha}{m}, \frac{1-\alpha}{n}}.$ 
By setting 
\[
T=\frac{\sqrt{\bar{\beta}M} \varepsilon n^{3/2}}{(1-\alpha)\sqrt{\log(\frac{2}{\delta})}\sqrt{40G^2 d n +(1 - \alpha)^2 B^2}}
= O\paren*{\frac{n\varepsilon}{\sqrt{d\log(\frac{1}{\delta})}}},
\]
we finally obtain
\begin{align*}
    \ex{}{\norm{\sG_{\sfJ, \bar{\beta}}(\sfv_{t^\ast})}_2^2}&\leq 16 (1-\alpha)\sqrt{\bar{\beta}M}\sqrt{\log\paren*{\frac{2}{\delta}}}\paren*{\frac{2\sqrt{10}G \sqrt{d}}{\varepsilon n}+\frac{(1-\alpha)B}{\varepsilon n^{\frac{3}{2}}}}
    = O\paren*{\frac{\sqrt{\bar{\beta}d\log\paren*{\frac{1}{\delta}}}}{\varepsilon n}},
    \end{align*}
    which completes the proof.
 \end{proof}


\section{Additional experimental results}
\label{app:add}

\subsection{Convex setting}
\label{app:add-convex}
Table~\ref{tbl:dataset} gives the sample sizes and input dimensions of the datasets
\texttt{{Wind}}, \texttt{{Airline}}, \texttt{{Gas}}, \texttt{{News}} and \texttt{{Slice}}.
For the \texttt{{Wind}} dataset \citep{HaslettRaftery1987}, the source
and target data are collected in different months of the year with the
labels being the speed of the wind. The \texttt{{Airline}} dataset
stems from \citep{Ikonomovska}. The source and target data come from a
subset of the data for the Chicago O'Haire International Airport (ORD)
in 2008 and are divided based on different hours of the day. The goal
is to predict the amount of time the flight is delayed. The
\texttt{{Gas}} dataset \citep{RodriguezLujan2014,Vergara2012,Dua:2019}
uses features from various sensor measurements to predict the
concentration level. For the \texttt{{News}} dataset
\citep{Fernandes2015,Dua:2019}, the source contains articles from
Monday to Saturday while the target contains articles from Sunday. The task is to predict the popularity of the articles. The \texttt{{Slice}} dataset \citep{misc_relative_location_of_ct_slices_on_axial_axis_206} uses features retrieved from CT images to predict the relative location of CT slices on the axial axis of the human body. The source and target data are divided based on individual patients.

\begin{table}[t]

\begin{center}
\begin{tabular}{@{\hspace{0cm}}lrrrrr@{\hspace{0cm}}}
\toprule
 & Target training & Target validation & Target test & Source training  & Input \\
Dataset & sample size &  sample size &  sample size &  sample size  &dimension\\
\hline
\small{\tt{Wind}}    & $158$ & $200$  & $200$  & $6016$   & $11$ \\
\small{\tt{Airline}} & $200$ & $300$  & $300$  & $16000$  & $11$ \\
\small{\tt{Gas}}     & $613$ & $1000$ & $2000$ & $7297$   & $133$ \\
\small{\tt{News}}    & $737$ & $1000$ & $1000$ & $3000$   & $50$ \\
\small{\tt{Slice}}    & $1077$ & $154$ & $308$ & $7803$   & $384$ \\
\bottomrule
\end{tabular}
\end{center}
\caption{Sample sizes of the datasets \texttt{{Wind}}, \texttt{{Airline}}, \texttt{{Gas}}, \texttt{{News}} and \texttt{{Slice}}.}
\label{tbl:dataset}
\end{table}

\begin{figure}[t]
\begin{center}
\hspace{-.25cm}
\includegraphics[scale=0.57]{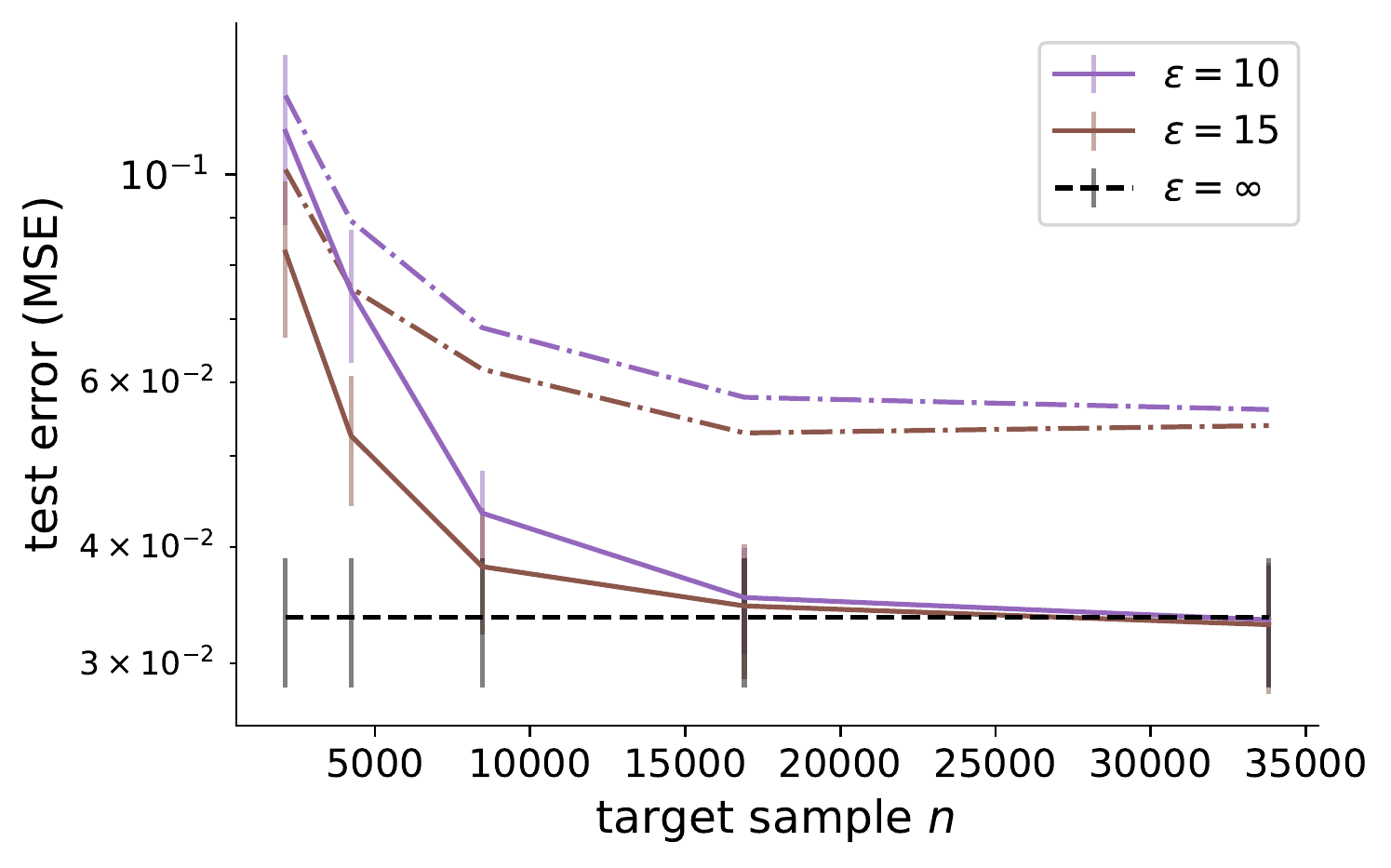}
\vskip -0.15in
\caption{MSE on the \texttt{{Gas}} dataset over ten runs against the number of target samples with various values of $\epsilon$. Comparison of our algorithm (solid lines) with the noisy minibatch
SGD from \citep{bassily2019private} (dash-dotted lines).}
\label{fig:compare}
\end{center}
\vskip -0.4in
\end{figure} 

For reference, we also compared our algorithm with the noisy minibatch
SGD from \citep{bassily2019private} (top dash-dotted plots in the figure),
see Figure~\ref{fig:compare}. We verify that our algorithm outperforms
that noisy minibatch SGD algorithm, which only benefits from the
target labeled data.

\begin{table}[t]
  \caption{MSE of private adaptation algorithm $\pcnvx$ against the non-private DM algorithm on the multi-domain sentiment analysis dataset.}
\begin{center}
\begin{tabular}{@{\hspace{0cm}}lllll@{\hspace{0cm}}}
\toprule
Source & Target &  $\pcnvx$ ($\e = 4$) &  $\pcnvx$ ($\e = 10$)&  DM\\
\hline
                     & \small{\tt{BOOKS}}& $1.649 \pm 1.909$ & $0.713 \pm 0.625$ &  $0.778 \pm 0.459$ \\
\small{\tt{KITCHEN}} & \small{\tt{DVD}}  & $0.832 \pm 0.640$ & $0.521 \pm 0.354$ &  $1.396 \pm 1.145$ \\
                     & \small{\tt{ELEC}} & $0.790 \pm 0.153$ & $0.555 \pm 0.139$ &  $1.740 \pm 1.483$ \\
\hline
                     & \small{\tt{DVD}}  & $0.929 \pm 0.365$ & $0.637 \pm 0.297$ &  $0.761 \pm 0.562$ \\
\small{\tt{BOOKS}} & \small{\tt{ELEC}}   & $0.708 \pm 0.092$ & $0.485 \pm 0.147$ &  $0.711 \pm 0.320$ \\
                  & \small{\tt{KITCHEN}} & $0.796 \pm 0.126$ & $0.632 \pm 0.181$ &  $0.956 \pm 0.442$ \\
\hline
                     & \small{\tt{ELEC}} & $0.671 \pm 0.120$ & $0.429 \pm 0.189$ &  $0.677 \pm 0.231$ \\
\small{\tt{DVD}} & \small{\tt{KITCHEN}}  & $0.665 \pm 0.197$ & $0.453 \pm 0.189$ &  $1.185 \pm 0.753$ \\
                     & \small{\tt{BOOKS}}& $0.727 \pm 0.288$ & $0.430 \pm 0.143$ &  $0.676 \pm 0.812$ \\
\hline
                   & \small{\tt{KITCHEN}}& $0.671 \pm 0.161$ & $0.439 \pm 0.141$ &  $1.389 \pm 0.755$ \\
\small{\tt{ELEC}} & \small{\tt{BOOKS}}   & $0.743 \pm 0.365$ & $0.351 \pm 0.204$ &  $0.884 \pm 0.639$ \\
                     & \small{\tt{DVD}}  & $0.755 \pm 0.585$ & $0.589 \pm 0.495$ &  $0.843 \pm 0.403$ \\
\bottomrule
\end{tabular}
\end{center}
\vskip -.2in
\label{tbl:sentiment}
\end{table}

To further illustrate the effectiveness of our algorithms, we also
report a series of additional empirical results comparing our private
adaptation algorithm $\pcnvx$ to the non-private baseline, the DM
algorithm \citep{CortesMohri2014}, on the multi-domain sentiment
analysis dataset \citep{blitzer2007biographies} formed as a regression
task for each category as in prior work
\cite{AwasthiCortesMohri2024}. We consider four categories:
\texttt{BOOKS}, \texttt{DVD}, \texttt{ELECTRONICS}, and
\texttt{KITCHEN}. We report MeanSquaredErrors, MSE, for 12 pairwise
experiments (TaskA, TaskB) in Table~\ref{tbl:sentiment}. As a source,
we use a combination of 500 examples from TaskA and 200 examples from
TaskB. For the target data we use 300 examples from TaskB. We use 50
examples from TaskB for validation and 1000 examples for testing.
The results are averaged over 10 independent source/target splits,
which show that our private adaptation algorithm $\pcnvx$ consistently
outperforms DM (non-private algorithm), even for $\e = 10$ for this
relatively small target sample size.

For a more fair comparison with private-DM \citep{BassilyMohriSuresh2020}, here, we also provide results for our private algorithm $\pcnvx$ with $\alpha =1$. In future work we seek to estimate $\alpha$ in a principled way based on the discrepancy.
Figure~\ref{fig:private_dm} shows that even for high values of $n$, the private-DM does not outperform our algorithm.

\subsection*{Hyperparameter tuning}
\label{app:hyperparameter}
While our algorithm involves hyperparameters, it is important to note that this holds for virtually all standard learning algorithms, even in the absence of adaptation; e.g., neural networks require fine-tuning of multiple parameters through validation datasets. In particular, our methodology does not rely more on hyperparameter tuning than the baselines.
Empirical hyperparameter tuning can incur a privacy cost indeed, which requires careful attention. One approach to mitigate this is using privacy-preserving hyperparameter tuning via local sensitivity analysis to estimate the privacy cost of each tuning query.

\subsection*{Sampling with replacement}
\label{app:sampling}

We use sampling with replacement to increase the number of samples in the datasets we experiment on and enable reporting results for larger values of target sample size $n$. In terms of the privacy guarantee, we would like to note that the sampling with replacement we perform does not mean that each individual in the resulting dataset contributes multiple data points. Each of the repeated data points is viewed as belonging to a different individual (that is, we assume the total number of individuals in the dataset also increases with sampling so that the size of the sampled dataset equals the total number of individuals).

\begin{figure}[t]
\begin{center}
\hspace{-.25cm}
\includegraphics[scale=0.55]{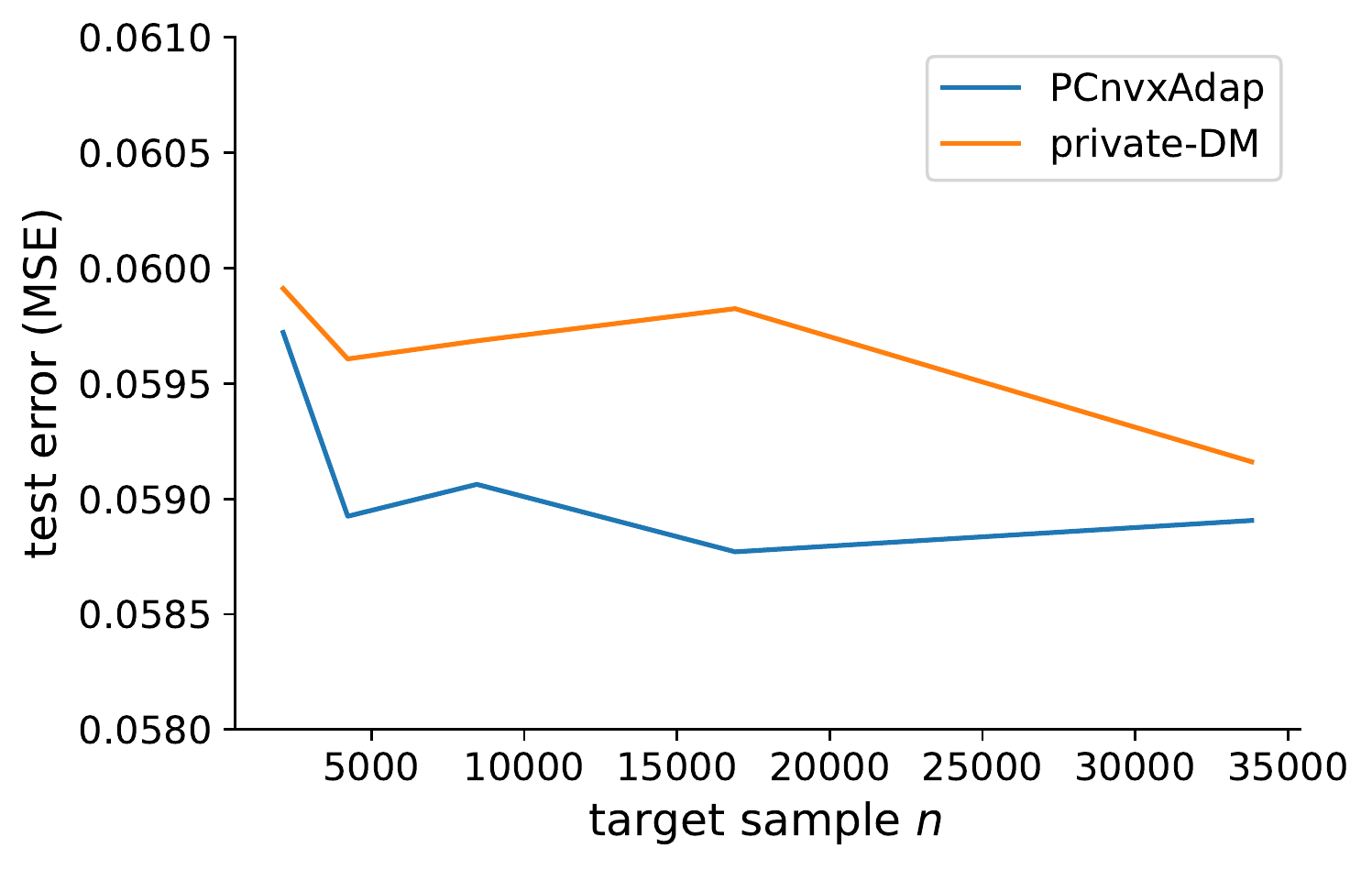}
\vskip -0.15in
\caption{Mean values of MSE on the \texttt{{Gas}} dataset over ten runs against the number of target samples with $\epsilon = 15$. Comparison of our algorithm ($\alpha = 1$) with the private-DM from \citep{BassilyMohriSuresh2020}.}
\label{fig:private_dm}
\end{center}
\vskip -0.4in
\end{figure}

\subsection{Non-convex setting}
\label{app:add-nconvex}

For the \texttt{{Adult}} dataset, also known as the Census Income
dataset, the source and target data are divided by the gender
attribute to predict whether the income exceeds $\$50$K.
The source and target data of the South German Credit dataset,
\texttt{{German}}, are divided based on whether the debtor has lived
in the present residence for at least three years. The goal is to
predict the status of the debtor's checking account with the bank.
The Speaker Accent Recognition dataset, \texttt{{Accent}}, uses
features from the soundtrack of words read by speakers from different
countries to predict the accent. The source contains examples whose
language attribute is US or UK while the target contains the remaining
examples.

We use the CLIP \citep{radford2021learning} model to extract features from the \texttt{{ImageNet}} \citep{deng2009imagenet} dataset and extract features from the \texttt{{CIFAR-100}}, \texttt{{CIFAR-10}} \citep{Krizhevsky09learningmultiple} and \texttt{{SVHN}} \citep{Netzer2011} datasets by using the outputs of the second-to-last layer of ResNet \citep{he2016deep}. 
We transform those datasets into binary classification by assigning half of the labels as $+1$ and the other half as $-1$ and then convert them into domain adaptation tasks where the source and target data consist of distinct mixtures of uniform sampling and Gaussian sampling using the mean and covariance of the data. For the \texttt{{CIFAR-100}} and \texttt{{ImageNet}} datasets, $95\%$ of the source data and $5\%$ of the target data come from Gaussian sampling; for the \texttt{{CIFAR-10}} dataset, $90\%$ of the source data and $10\%$ of the target data come from Gaussian sampling; for the \texttt{{SVHN}} dataset, $80\%$ of the source data and $20\%$ of the target data come from Gaussian sampling.

\section{Extensions}
\label{app:extensions}

\textbf{Reverse scenario.} Notably, given the form of the learning
bound and the objective function adopted in our private
optimization-based adaptation algorithms, they can be
straightforwardly modified to derive private algorithms with similar
guarantees in the reverse scenario where the source domain is private
while the target domain is public.

\noindent \textbf{Different feature spaces.} Our current formulation assumes the
same input space for the source and target distributions. For
adaptation problems with distinct input feature spaces, two approaches
can be taken to extend our results: (1) If the mapping $\Psi$ between
the spaces is known, our theory can be readily extended to accommodate
this situation. In this case, the adaptation algorithm can be adjusted
by incorporating the mapping $\Psi$ in the learning process; we
consider $(h \circ \Psi)(x)$ for source domain instances; (2) If the
mapping $\Psi$ is unknown, it needs to be learned simultaneously with
$h$. In this scenario, we consider $(h \circ \Psi)(x)$ for source
domain instances, with $\Psi$ being an integral part of the learning
process. We leave a detailed analysis, including a needed extension of
the generalization bound of Theorem~\ref{th:learningbound}, to future
work. The case of distinct output spaces is similar.

\end{document}